\numberwithin{equation}{section}
\numberwithin{figure}{section}
\newtheorem{thm}{Theorem}[section]
\newtheorem{cor}[thm]{Corollary}
\newtheorem{lem}[thm]{Lemma}
\theoremstyle{remark}
\theoremstyle{definition}
\newtheorem{rem2}{Definition}[section]
\newtheorem{eg}{Example}
\newcommand{\snakes}{\mid\!\sim}
\newcommand{\pair}[1]{\left({#1}\right)}
\newcommand{\ang}[1]{\left\langle{#1}\right\rangle}
\newcommand{\set}[1]{\left\{{#1}\right\}}
\newcommand{\es}{\varnothing}
\newcommand{\pow}{\mathcal{P}}
\newcommand{\nat}{\mathbb{N}}
\newcommand{\x}{\times}
\newcommand{\sqbra}[1]{\left[{#1}\right]}
\newcommand{\ncsnakes}{\not\kern-0.01cm\mid\!\sim^{\bf{C}}}
\newcommand{\npsnakes}{\not\kern-0.01cm\mid\!\sim^{\bf{P}}}
\newcommand{\nrsnakes}{\not\kern-0.01cm\mid\!\sim^{\bf{R}}}
\newcommand{\lang}{\mathcal{L}}
\newcommand{\LSent}{\mathcal{SL}}
\newcommand{\alg}{\mathcal{A}}
\newcommand{\LForm}{\mathcal{FL}}
\newcommand{\relsymb}{\mathcal{R}}
\newcommand{\orde}{\trianglelefteq_{Eli}}
\newcommand{\ordd}{\trianglelefteq_{Dem}}
\newcommand{\ext}{\mathcal{E}}
\newcommand{\ordeneq}{\triangleleft_{Eli}}
\newcommand{\orddneq}{\triangleleft_{Dem}}
\newcommand{\powfin}{\pow_\text{fin}}
\newcommand{\subarg}{\subseteq_\text{arg}}
\newcommand{\ord}{\trianglelefteq}
\newcommand{\ordneq}{\triangleleft}
\newcommand{\attk}{\rightharpoonup}
\newcommand{\defeat}{\hookrightarrow}
\newcommand{\propsubarg}{\subset_\text{arg}}
\newcommand{\conflict}{\mathcal{C}}
\newcommand{\ordde}{\trianglelefteq_{DEli}}
\newcommand{\orddeneq}{\triangleleft_{DEli}}
\begin{document}

\title{Argumentation Semantics for Prioritised Default Logic}
\author{Anthony P. Young\footnote{\textbf{Corresponding author:} Peter Young, Department of Informatics, King's College London, \href{mailto:anthony.p.young@kcl.ac.uk}{anthony.p.young@kcl.ac.uk}}, Sanjay Modgil, Odinaldo Rodrigues}
\date{1st July 2015}
\maketitle

\begin{abstract}
\noindent We endow prioritised default logic (PDL) with argumentation semantics using the ASPIC$^+$ framework for structured argumentation, and prove that the conclusions of the justified arguments are exactly the prioritised default extensions. Argumentation semantics for PDL will allow for the application of argument game proof theories to the process of inference in PDL, making the reasons for accepting a conclusion transparent and the inference process more intuitive. This also opens up the possibility for argumentation-based distributed reasoning and communication amongst agents with PDL representations of mental attitudes.
\end{abstract}

\tableofcontents

\section{Introduction}\label{sec:intro}

\noindent Dung's argumentation theory \cite{Dung:95} has become established as a general framework for non-monotonic reasoning (NMR). Given a set of well-formed formulae (wffs) $\Delta$ in some non-monotonic logic (NML), the arguments and attacks defined by $\Delta$ instantiate a Dung argumentation framework. Additionally, a preference relation over the defined arguments can be used to determine which attacks succeed as defeats. The justified arguments are then evaluated under various Dung semantics, and the claims of the sceptically justified arguments\footnote{i.e. the arguments contained in \textit{all} extensions under some semantics.} identify the inferences from the underlying $\Delta$.

More formally, given an argumentation framework $AF$ and a wff $\theta$, the \emph{argumentation-defined inference relation} $\snakes_{AF}$ over $\Delta$ is $\Delta\snakes_{AF}\theta$ iff $\theta$ is the conclusion of a sceptically justified argument in $AF$. Indeed, a correspondence has been shown between $\snakes_{AF}$ over $\Delta$, and the instantiating logic's non-monotonic inference relation defined directly over $\Delta$. For example, Reiter's default logic (DL) \cite{Dung:95}, logic programming \cite{Dung:95}, defeasible logic \cite{Governatori:04} and Brewka's preferred subtheories \cite{sanjay:13} have all been been endowed with \emph{argumentation semantics}. This in turn allows the application of argument game proof theories \cite{Sanjay:09} to the process of inference, and the generalisation of these dialectical proof theories to distributed NMR amongst computational agents, whereby agents can engage in argumentation-based dialogues, submitting arguments and counter-arguments from their own non-monotonic knowledge bases \cite{Added_Value,Sanjay:08,Atkinson:05}. Furthermore, argumentative characterisations of NMR make use of principles familiar in everyday reasoning and debate, thus rendering transparent the reasons for accepting a conclusion and allowing for human participation and inspection of the inference process.

One well-studied NML that has not yet been endowed with argumentation semantics is Brewka's \emph{prioritised default logic} (PDL) \cite{Brewka:94}. PDL is important because it upgrades DL \cite{Reiter:80} with priorities over defaults\footnote{So that, for example, one can account for recent information taking priority over information in the distant past, or that more specific information should take priority over more general information.}. PDL has also been used to represent the (possibly conflicting) beliefs, obligations, intentions and desires (BOID) of agents, and model how these different categories of mental attitudes override each other in order to generate goals and actions that attain those goals \cite{BOID:02}.

In this note, we endow PDL with argumentation semantics and prove a correspondence between PDL inference and the inference relation defined by the argumentation semantics. We achieve this by appropriately instantiating the ASPIC$^+$ framework for structured argumentation \cite{sanjay:13,Sanjay:14}. ASPIC$^+$ identifies conditions under which logics and preference relations instantiating Dung's frameworks satisfy the rationality postulates of \cite{Caminada:07}. In Section \ref{sec:rev}, we review ASPIC$^+$ and PDL. In Section \ref{sec:ASPIC+_eli_ord}, we identify an error in the proof of \cite[page 376, Proposition 21]{sanjay:13}, which forces us to slightly modify our choice of argument preference relations in Section \ref{sec:rev_ASPIC+}. In Section \ref{sec:ASPIC+_to_PDL}, we define a PDL instantiation of ASPIC$^+$. This will involve studying preference relations thus far not considered by ASPIC$^+$. We then present a representation theorem proving that inferences defined by the argumentation semantics correspond exactly to inferences in PDL. In Section \ref{sec:normative_rationality_current}, we discuss to what extent is this ASPIC$^+$ instantiation normatively rational. Finally, in Section \ref{sec:discussion_conclusions}, we conclude with future work and some remarks about our approach.

In Appendix \ref{app:disj_eli_props}, we prove some properties of a non-ASPIC$^+$ argument preference relation, that is relevant to the PDL instantiation. In Appendix \ref{app:NBD_char}, we prove an intermediate result necessary for the proof of the representation theorem.

\section{Background}\label{sec:rev}

\noindent In the remainder of this paper we make use of the following notation: $\nat$ is the set of natural numbers, with $\nat^+:=\nat-\set{0}$. For a set $X$ its power set is $\pow(X)$, the set of its finite subsets is $\powfin(X)$, so $X\subseteq_\text{fin}Y$ iff $X$ is a finite subset of $Y$, therefore $X\in\powfin(Y)\Leftrightarrow X\subseteq_\text{fin}Y$. Undefined quantities are denoted by $*$, for example $1/0=*$ in the real numbers. If $\ang{P,\:\leq}$ is a preset (preordered set) then the strict version of the preorder is $a<b\Leftrightarrow\sqbra{a\leq b,\:b\not\leq a}$, which is easily shown to be a strict partial order. For two sets $A,\:B$ their symmetric difference is $A\ominus B:=\pair{A-B}\cup\pair{B-A}$.

\subsection{Dung's Abstract Argumentation Theory}\label{sec:rev_Dung}

\noindent We now recap the important definitions of \cite{Dung:95}. A \emph{(Dung) argumentation framework} is a directed graph $\ang{\alg,\:\conflict}$, where $\alg$ is the \emph{set of arguments} and $\conflict\:\subseteq\:\alg^2$ is the \emph{conflict relation} over $\alg$. For arguments $A,\:B\in\alg$ we write $\conflict(A,\:B)\Leftrightarrow(A,\:B)\in\conflict\Leftrightarrow A$ conflicts with $B$, i.e. $A$ is used as a counterargument against $B$. Note that $\conflict$ can denote either an attack relation defined by a set of instantiating formulae, or the defeat relation defined by determining which attacks succeed as defeats.

In what follows let $S\subseteq\alg$ be a set of arguments and $A,\:B\in\alg$. $S$ \emph{conflicts with} $B$ iff $\pair{\exists A\in S}\conflict(A,\:B)$. $S$ is \emph{conflict-free} (cf) iff $\conflict\cap S^2=\es$. $S$ \emph{defends} $A$ iff $\pair{\forall B\in\alg}[\conflict(B,\:A)\Rightarrow S$  conflicts with $B]$. Let $Def(S):=\set{A\in\alg\:\vline\:S\text{ defends }A}$. Then, $S$ is an \emph{admissible set} iff $S$ is cf and $S\subseteq Def(S)$. An admissible set $S$ is:

\begin{itemize}
\item a \emph{complete extension} iff $Def(S)\subseteq S$;
\item a \emph{preferred extension} iff $S$ is a $\subseteq$-maximal complete extension;
\item the \emph{grounded extension} iff $S$ is the $\subseteq$-least complete extension;
\item a \emph{stable extension} iff $S$ is complete and conflicts with all arguments in $\alg - S$.
\end{itemize} 

\noindent $\mathcal{S}:=\{complete,preferred,grounded,stable\}$ is the set of \emph{Dung semantics}. An argument $A\in\alg$ is \emph{sceptically justified} under the semantics $s\in\mathcal{S}$ iff $A$ belongs to all $s$ extensions.

\subsection{The ASPIC\texorpdfstring{$^+$}{+} Framework}\label{sec:rev_ASPIC+}

\noindent Dung's framework provides an intuitive calculus of opposition for determining the justified arguments based on conflict alone. However, it abstracts from the internal logical structure of arguments, the nature of defeats and how they are determined by preferences, and consideration of the conclusions of the arguments. However, these features are referenced when studying whether any given logical instantiation of a framework yields complete extensions that satisfy the rationality postulates of \cite{Caminada:07}. ASPIC$^+$ \cite{sanjay:13,Sanjay:14} provides a structured account of abstract argumentation, allowing one to reference the above features, while at the same time accommodating a wide range of instantiating logics and preference relations. ASPIC$^+$ then identifies conditions under which the instantiation (given arguments, attacks and preferences) results in complete extensions that satisfy the rationality postulates of \cite{Caminada:07}; such instantiations are \emph{normatively rational}.

\subsubsection{Construction of Arguments and Knowledge Bases}\label{sec:arg_constr_ASPIC+}

An \emph{(ASPIC$^+$) argumentation system} is $\ang{\lang,\:-,\:\relsymb_s,\:\relsymb_d,\:n}$ where $\lang$ is a logical language, $-:\lang\to\pow(\lang)$ is the \emph{contrary function} $\theta\mapsto\overline{\theta}$ that identifies when one wff in $\lang$ conflicts with another. Let $\theta_1,\dots,\theta_m, \phi\in\lang$ be wffs for $m\in\nat$, $\relsymb_s$ is the \emph{set of strict inference rules} of the form $(\theta_1,\:\ldots,\:\theta_m\to\phi)$, denoting that if $\theta_1,\:\ldots,\:\theta_m$ are true then $\phi$ is true no matter what, and $\relsymb_d$ is the \emph{set of defeasible inference rules} of the form $(\theta_1,\:\ldots,\:\theta_m\Rightarrow\phi)$, denoting that if $\theta_1,\:\ldots,\:\theta_m$ are true then $\phi$ is true, unless there are good reasons not to accept $\phi$. Finally $n:\relsymb_d\to\lang$ is a \emph{partial} function that assigns a \emph{name} to \emph{some} of the defeasible rules.

For each rule $r\in\relsymb_s\cup\relsymb_d$ we define two functions:
\begin{enumerate}
\item The \textit{antecedent map} is
\begin{align}\label{eq:antecedent_map_for_rules}
Ante:\relsymb&\to\powfin(\lang)\nonumber\\
r:=(\theta_1,\:\cdots,\:\theta_m\to/\Rightarrow\phi)&\mapsto Ante(r):=\set{\theta_1,\:\cdots,\:\theta_m}\:.
\end{align}
Note that $Ante$ returns a \textit{finite set} of formulae.
\item The \textit{consequent map} is
\begin{align}\label{eq:consequent_map_for_rules}
Cons:\relsymb&\to\lang\nonumber\\
r:=(\theta_1,\:\cdots,\:\theta_m\to/\Rightarrow\phi)&\mapsto Cons(r):=\phi\:.
\end{align}
\end{enumerate}

\noindent The names of the rules are unique, i.e. if $Ante(r)=Ante(r')$ and $Cons(r)=Cons(r')$, then $r=r'$. By equality we mean \textit{syntactic} equality with respect to the instantiating logic, e.g. if $\lang$ is propositional logic, $\neg(\theta\to\phi)\neq(\theta\wedge\neg\phi)$. Two rules $r,\:r'$ are \textit{equal} iff $Ante(r)=Ante(r')$ (syntactically) and $Cons(r)=Cons(r')$ (syntactically).

An \emph{(ASPIC$^+$) knowledge base} is a set $\mathcal{K}:=\mathcal{K}_n\cup\mathcal{K}_p\subseteq\lang$ where $\mathcal{K}_n$ is the set of \emph{axioms} and $\mathcal{K}_p$ is the set of \emph{ordinary premises}. Intuitively, the knowledge base consists of the premises used in constructing arguments. Note that $\mathcal{K}$ does not have to be a finite set. Given an argumentation system and knowledge base, an \emph{(ASPIC$^+$) argument} is defined inductively:
\begin{enumerate}
\item(Base) $[\theta]$ is a \emph{singleton (ASPIC$^+$) argument} with $\theta\in\mathcal{K}$, conclusion $Conc([\theta]):=\theta$, premise set $\set{\theta}\subseteq\mathcal{K}$ and top rule $TopRule([\theta]):=*$.

\item(Inductive, strict) Let $1\leq i\leq n$ be an index. For each such $i$ let $A_i$ be an ASPIC$^+$ argument with conclusion $Conc(A_i)$ and premise set $Prem(A_i)$. If $r:=(Conc(A_1),\:\ldots,\:Conc(A_n)\to\phi)\in\relsymb_s$, then $B:=[A_1,\:\ldots,\:A_n\to\phi]$ is also an ASPIC$^+$ argument with conclusion $Conc(B)=\phi$, premise set $Prem(B):=\bigcup_{i=1}^n Prem(A_i)\subseteq\mathcal{K}$ and $TopRule(B)=r\in\relsymb_s$.

\item(Inductive, defeasible) Let $1\leq i\leq n$ be an index. For each such $i$ let $A_i$ be an ASPIC$^+$ argument with conclusion $Conc(A_i)$ and premise set $Prem(A_i)$. If $r':=(Conc(A_1),\:\ldots,\:Conc(A_n)\Rightarrow\phi)\in\relsymb_d$, then $C:=[A_1,\:\ldots,\:A_n\Rightarrow\phi]$ is also an ASPIC$^+$ argument with conclusion $Conc(C)=\phi$, premise set $Prem(C):=\bigcup_{i=1}^n Prem(A_i)\subseteq\mathcal{K}$ and $TopRule(C)=r'\in\relsymb_d$.
\end{enumerate}

\noindent Let $\alg$ be the (unique) set of all arguments constructed in this way. It is clear that arguments are finite objects.

\subsubsection{Properties of Arguments}

A \emph{subargument} $B$ of $A$ is (informally) an argument where $Prem(B)\subseteq Prem(A)$ and $Conc(B)$ appears as an intermediate conclusion of $A$ attained by the application of the exact same rules\footnote{One can formally define subarguments via how arguments are constructed as described in the previous section.}. Given an argument $A$, its \emph{set of subarguments} is $Sub(A)\subseteq\alg$ and its \emph{set of proper subarguments} is $Sub(A)-\set{A}\subseteq\alg$. We will also write $A\subarg B\Leftrightarrow A\in Sub(B)$, and $A\propsubarg B\Leftrightarrow A\in Sub(B)-\set{B}$. It is easy to show that $\subarg$ is a preorder and $\propsubarg$ is a strict partial order on $\alg$. Informally, two arguments are \emph{equal} iff they are constructed identically in the above manner\footnote{More formally, argument equality can be defined inductively given how arguments are constructed. This will make $\subarg$ into a partial order.}. Further, a set $S\subseteq\alg$ is \textit{subargment closed} iff $\pair{\forall A\in S}Sub(A)\subseteq S$.

An argument $A\in\alg$ is \emph{firm} iff $Prem(A)\subseteq\mathcal{K}_n$, i.e. all of its premises are axioms. Further, $DR(A)\subseteq\relsymb_d$ is the set of defeasible rules applied in constructing $A$. An argument $A$ is \emph{strict} iff $DR(A)=\es$, else $A$ is \emph{defeasible}.

We define the \textit{conclusion map}
\begin{align}
Conc:\alg&\to\lang\nonumber\\
A&\mapsto Conc(A)\:,
\end{align}
which takes an argument and returns its conclusion\footnote{Do not confuse $Conc:\alg\to\lang$ with $Cons:\relsymb\to\lang$ (Equation \ref{eq:consequent_map_for_rules}, page \pageref{eq:consequent_map_for_rules}).}. We can generalise this to \textit{sets} of arguments as follows:
\begin{align}\label{eq:set_arg_attrb}
Conc':\pow\pair{\alg}&\to\pow\pair{\lang}\nonumber\\
S&\mapsto Conc'(S):=\bigcup_{A\in S}Conc(A)\:.
\end{align}
We will abuse notation and not distinguish between $Conc'$ and $Conc$ if there is no danger of ambiguity. Similarly, one can also define $Prem:\alg\to\powfin\pair{\lang}$, $DR:\alg\to\powfin\pair{\relsymb_d}$, $TopRule:\alg\to\relsymb$ from how arguments are constructed, and generalise their domains from single arguments $\alg$ to sets of arguments $\pow\pair{\alg}$. Further, we define, for all $A\in\alg$, $Prem_n(A):=Prem(A)\cap\mathcal{K}_n$, $Prem_p(A):=Prem(A)\cap\mathcal{K}_p$, $SR(A)\subseteq\relsymb_s$ is the set of strict rules applied in constructing $A$.

Notice in the cases of $Prem$ and $DR$, the codomains are appropriate powersets and not finite powersets, this is because for \textit{arbitrary} sets of arguments, even though each argument is mapped to its finite set of premises or defeasible rules, the set of arguments can be infinite and the union of infinitely many finite sets does not have to be finite.

\subsubsection{Attacks}

An argument $A$ \emph{attacks} another argument $B$, denoted as $A\attk B$, iff at least one of the following hold, where:\footnote{See \cite[Section 2]{Sanjay:14} for a further discussion of why attacks are distinguished in this way.}
\begin{enumerate}
\item $A$ is said to \emph{undermine} attack $B$ on the subargument $B'$ = $[\phi]$ iff\[\sqbra{\exists\phi\in Prem_p(B)}\:Conc(A)\in\overline{\phi}\:,\]i.e. $A$ conflicts with some ordinary premise of $B$.
\item There is some $B'\subarg B$ such that for all $i=1,\:\ldots,\:n$,\[\sqbra{B''_i\subarg B,\:B'=[B_1'',\:\ldots,\:B_n''\Rightarrow\phi]}\]and $Conc(A)\in\overline{\phi}$.  $A$ is then said to \emph{rebut} attack $B$ on the subargument $B'$.
\item There is some $B'\subarg B$ such that $r:=TopRule(B')\in\relsymb_d$ and $Conc(A)\in\overline{n(r)}$. $A$ is then said to \emph{undercut} attack $B$ on the subargument $B'$ (by arguing against the application of the defeasible rule $r$ in $B$).
\end{enumerate}

\noindent We then abuse notation to define the \emph{attack relation} as $\attk\:\subseteq\alg^2$ such that $(A,\:B)\:\in\:\attk\:\Leftrightarrow\: A\attk B$. A set of arguments $S\subseteq\alg$ is \emph{attack-conflict-free} (attack-cf) iff $S^2\:\cap\:\attk\:=\:\es$. By transitivity of $\subarg$, for all $A,\:B,\:C\in\alg$, if $A\attk B$ and $B\subarg C$, then $A\attk C$.

\subsubsection{Preferences and Defeats}

A preference relation over arguments is then used to determine which attacks  succeed as defeats. We denote the preference $\precsim\:\subseteq\:\alg^2$ (not necessarily a preorder for now) such that $A\precsim B\Leftrightarrow\:A$ is not more preferred than $B$. The strict version is $A\prec B\Leftrightarrow\sqbra{A\precsim B,\:B\not\precsim A}$, and equivalence is $A\approx B\Leftrightarrow\sqbra{A\precsim B,\:B\precsim A}$. We define a \emph{defeat} as
\begin{align}\label{eq:ASPIC+_general_defeat}
A\defeat B\Leftrightarrow\pair{\exists B'\subarg B}\sqbra{A\attk B',\:A\not\prec B'}\:.
\end{align}
That is to say, $A$ defeats $B$ (on $B'$) iff  $A$ attacks $B$ on the subargument $B'$, and $B'$ is not 
strictly preferred to $A$. Notice the comparison is made at the subargument $B'$ instead of the whole argument $B$. We then abuse notation to define the \emph{defeat relation} as $\defeat\:\subseteq\alg^2$ such that $(A,\:B)\:\in\:\defeat\:\Leftrightarrow\: A\defeat B$. By transitivity of $\subarg$, for all $A,\:B,\:C\in\alg$, if $A\defeat B$ and $B\subarg C$, then $A\defeat C$. A set of arguments $S\subseteq\alg$ is \textit{defeat-conflict-free} (defeat-cf) iff $S^2\cap\defeat\:=\:\es$.

What is the difference between attack-cf and defeat-cf? Clearly, attack-cf implies defeat-cf but the converse is not true. \cite{sanjay:13} argues that attack-cf should be the correct notion of conflict-freeness to use when invoking Dung semantics, because the presence of attacks denote disagreement between two arguments, and ideally an agent should not accept two arguments that disagree with each other. However, one can also argue that defeat-cf is the correct notion of conflict-freeness to use (e.g. \cite{Prakken:10}), because an agent can accept two arguments that attack each other, knowing also that the attack does not succeed due to the preference relation. In the upcoming sections, we will prove the stronger result of attack-cf whenever it is possible to, and defeat-cf follows.

In ASPIC$^+$, preferences over arguments are calculated from the argument structure through comparing the fallible information (ordinary premises and defeasible rules) they contain. More formally, $\mathcal{K}_p$ and $\relsymb_d$ are endowed with preorders $\leq'$ and $\leq''$ respectively\footnote{Where the bigger item is \emph{more preferred}.}. This preorder is then lifted to a set-comparison order $\trianglelefteq$ between the (finite) sets of premises or defeasible rules of the arguments, and then finally to $\precsim$, following the method in \cite[Section 5]{sanjay:13}. 

We now recap this lifting of the preorder $<''$ from $\relsymb_d$ to $\powfin\pair{\relsymb_d}$. We omit comparing premises because in our instantiation we only compare defeasible rules as there are no ordinary premises (Section \ref{sec:ASPIC+_to_PDL}, page \pageref{sec:ASPIC+_to_PDL}).

More formally, ASPIC$^+$ considers two ordering principles called \emph{democratic} and \emph{elitist} \cite[Section 5]{sanjay:13}\footnote{See \cite[Section 3.5]{Sanjay:14} for a further discussion of both these ordering principles.}, such that for $A,\:B\in\alg$ and $DR(A)\subseteq\relsymb_d$, we define $A\precsim B$ to be
\begin{align}
&DR(A)\orde DR(B)\label{eq:eli_comp}\\
\text{or }&DR(A)\ordd DR(B)\:,\label{eq:dem_comp}
\end{align}
where, for\footnote{It suffices to consider finite sets as arguments are finite.} $\Gamma,\:\Gamma'\subseteq_\text{fin}\relsymb_d$,
\begin{align}
\Gamma\orde\Gamma'&\Leftrightarrow\sqbra{\Gamma=\Gamma'\text{ or }\Gamma\ordeneq\Gamma'}\:,\label{eq:original_non-strict_elitist}\\
\Gamma\ordeneq\Gamma'&\Leftrightarrow\pair{\exists x\in\Gamma}\pair{\forall y\in\Gamma'}\:x<''y\:,\label{eq:original_elitist}\\
\Gamma\ordd\Gamma'&\Leftrightarrow\sqbra{\Gamma=\Gamma'\text{ or }\Gamma\orddneq\Gamma'}\:,\text{ and }\\
\Gamma\orddneq\Gamma'&\Leftrightarrow\pair{\forall x\in\Gamma}\pair{\exists y\in\Gamma'}\:x<''y\:,\label{eq:original_dem}
\end{align}
It is easy to show that $\precsim$ in both cases is a preorder on $\alg$. We define equivalence of arguments as follows:
\begin{align}\label{eq:arg_pref_equivalence}
A\approx B\Leftrightarrow DR(A)=DR(B)\:.
\end{align}
Note that Equations \ref{eq:original_non-strict_elitist} and \ref{eq:original_elitist} are not exactly the same as \cite[page 375, Definition 19]{sanjay:13}. We will explain this in Section \ref{sec:ASPIC+_eli_ord} (page \pageref{sec:ASPIC+_eli_ord}).

In summary, when comparing two arguments $A,\:B\in\alg$, $A\precsim B$ iff [$A\approx B$ (Equation \ref{eq:arg_pref_equivalence}) or $A\prec B$]. In the latter case, $A\prec B\Leftrightarrow DR(A)\ordeneq DR(B)$, or $A\prec B\Leftrightarrow DR(A)\orddneq DR(B)$, depending on which ordering principle is being used.

Given the preference relation $\precsim$ between arguments, we call the structure $\ang{\alg,\:\attk,\:\precsim}$ an \emph{ASPIC$^+$ SAF} (structured argumentation framework), or \emph{attack graph}. Its corresponding \emph{defeat graph} is $\ang{\alg,\:\defeat}$, where $\defeat$ is defined in terms of $\attk$ and $\precsim$ as in Equation \ref{eq:ASPIC+_general_defeat}.

\subsubsection{Applying Dung's Semantics}

Given $\ang{\alg,\:\defeat}$ one can then evaluate the extensions under Dung's semantics (Section \ref{sec:rev_Dung}, page \pageref{sec:rev_Dung}), and thus identify the argumentation defined inferences as the conclusions of the sceptically justified arguments as follows. Let $AS$ be an argumentation system. The \emph{argumentation-defined inference relation} $\snakes_{AS}$ is $\mathcal{K}\snakes_{AS}\theta$ iff $\theta=Conc(A)$ where $A\in\alg$ is a sceptically justified argument.

\subsubsection{Conditions for Normative Rationality}\label{sec:rev_ratl}

Instantiations of ASPIC$^+$ should satisfy some properties to ensure it is normatively rational \cite{Caminada:07}. Let $\ext\subseteq\alg$ be a complete extension. Informally, \emph{subargument closure} states that if an argument is in $\ext$, then all its subarguments are in $\ext$. \emph{Closure under strict rules} states that if the conclusions of arguments in $\ext$ strictly entail some $\phi$, then $\ext$ contains an argument concluding $\phi$. Finally, \emph{consistency} states that $Conc(\ext):=\bigcup_{A\in\ext} Conc(A)$ is a consistent with respect to the instantiating logic. Collectively these are the \emph{(Caminada-Amgoud) rationality postulates}.

ASPIC$^+$ then identifies sufficient conditions for an instantiation to satisfy these rationality postulates. These are that the instantiation is \emph{well-defined} and that the argument preference ordering $\precsim$ is \emph{reasonable}. We will say more about these conditions in Section \ref{sec:normative_rationality_current} (page \pageref{sec:normative_rationality_current}), where we discuss whether the ASPIC$^+$ characterisation of PDL satisfies the rationality postulates.

\subsection{Prioritised Default Logic}\label{sec:rev_PDL}

\subsubsection{First Order Logic}

In this section we recap PDL \cite{Brewka:94}. We work in full first order logic (FOL) where the set of first-order formulae is $\LForm$ and the set of closed first order formulae\footnote{i.e. first order formulae without free variables} is $\LSent\subseteq\LForm$, with the usual quantifiers and connectives. Given $S\subseteq\LForm$, the \emph{deductive closure (of $S$)} is $Th(S)$, and given $\theta\in\LForm$, the \emph{addition operator} $+:\pow(\LForm)\x\LForm$ is defined as $S+\theta:=Th(S\cup\set{\theta})$.

\subsubsection{Normal Defaults}

A \emph{normal default} is an expression $\frac{\theta:\phi}{\phi}$ where $\theta,\:\phi\in\LForm$ and read ``if $\theta$ is the case and $\phi$ is consistent with what we know, then $\phi$ is the case''\footnote{There are other possible interpretations of normal defaults, see Example \ref{eg:teaching}.}. In this case we call $\theta$ the \emph{antecedent} and $\phi$ the \emph{consequent}. A normal default $\frac{\theta:\phi}{\phi}$ is \emph{closed} iff $\theta,\:\phi\in\LSent$. We will assume all defaults are closed and normal unless stated otherwise. Given $S\subseteq\LSent$, a default is \emph{active (in $S$)} iff $\sqbra{\theta\in S,\:\phi\notin S,\:\neg\phi\notin S}$. Intuitively, the first requirement says we need to know the antecedent before applying the default, the second requirement is that the consequent must add new information, and the third requirement ensures that what we infer is consistent with what we know.

\subsubsection{Prioritised Default Theories and Extensions}

A \emph{finite prioritised default theory} (PDT) is a structure $\ang{D,\:W,\:\prec}$, where $W\subseteq\LSent$ is not necessarily a finite set and $\ang{D,\:\prec}$ is a \emph{finite} strict poset (partially ordered set) of defaults, where $d'\prec d\Leftrightarrow d$ is \emph{more\footnote{\label{fn:dual_priority_PDL} We have defined the order dually to \cite{Brewka:94} so as to comply with orderings over the ASPIC$^+$ defeasible inference rules.} prioritised} than $d'$. Intuitively, $W$ are the known facts and $D$ the defaults that nonmonotonically extend $W$. We will consider finite PDTs unless otherwise specified.

A PDT's inferences are defined by its extensions. Formally, let $\prec^+\supseteq\prec$ be a linearisation\footnote{i.e. $\prec^+$ is a strict total order and hence $\ang{D,\:\prec^+}$ is a strict toset (totally ordered set).} of $\prec$. An \emph{extension (with respect to $\prec^+$)} is a set $E:=\bigcup_{i\in\nat}E_i\subseteq\LSent$ built inductively as:
\begin{align}
E_0&:=Th(W)\text{ and }\label{eq:ext_base}\\
E_{i+1}&:=
\begin{cases}
E_i+\phi\:, &\text{condition 1}\\
E_i\:,&\text{else}
\end{cases}\label{eq:ext_ind}
\end{align}
where ``condition 1'' iff ``$\phi$ is the consequent of the $\prec^+$-greatest default $d$ active in $E_i$''. Intuitively, one first generates all classical consequences from the facts $W$, and then iteratively adds the nonmonotonic consequences from the most prioritised default to the least. Notice if $W$ is inconsistent then $E_0=E=\LForm$. We will assume $W$ is consistent unless stated otherwise.

It can be shown that the ascending chain $E_i\subseteq E_{i+1}$ stabilises at some finite $i\in\nat$ and that $E$ is consistent provided that $W$ is consistent. $E$ does not have to be unique because there may be more than one distinct linearisation $\prec^+$ of $\prec$. We say the PDT $\ang{D,\:W,\:\prec}$ \emph{sceptically infers} $\theta\in\LSent$ iff $\theta\in E$ for all extensions $E$.

Henceforth, we will refer to a PDT $\ang{D,\:W,\:\prec}$ where $\prec$ is a strict total order as a \emph{linearised PDT} (LPDT). If $\prec$ is already total then there is only one way to apply the defaults in $D$ (Equation \ref{eq:ext_ind}), hence the extension is unique and all inferences are sceptical. In what follows, we will use $\prec^+$ to emphasise that the order is total.

One application of PDL is in modelling how an agent reasons with her beliefs, obligations, intentions and desires (BOID).

\begin{eg}\label{eg:teaching}
Suppose a research assistant Alice ($a$) is considering whether she should teach undergraduates. We can model her mental attitudes as a BOID agent's PDT \cite{BOID:02} as follows. Define the predicates $R(x)\Leftrightarrow\:``x$ is a research assistant'', $A(x)\Leftrightarrow\:``x$ is an academic'', and $T(x)\Leftrightarrow\:``x$ is teaching (undergraduates)''. Alice is a research assistant, so $W=\set{R(a)}$. She believes that research assistants are academics, so her set of beliefs $Bel$ has the default $\frac{R(a):A(a)}{A(a)}$. She does not want to teach and would rather focus on her research, so her set of desires $Des$ include $\frac{R(a):\neg T(a)}{\neg T(a)}$. However, she is obliged to teach, so her set of obligations $Obl$ include $\frac{A(a):T(a)}{T(a)}$. The set of defaults is $D=Bel\cup Des\cup Obl$, and we assume no other defaults are relevant for this example.

In \cite{BOID:02}, the relative prioritisations of categories of mental attitudes define different agent types. For example, if Alice is a \emph{realistic selfish agent}, the priority (abuse of notation) is $Obl\prec^+ Des\prec^+ Bel$, and therefore the extension is $Th\pair{\set{R(a),\:A(a),\:\neg T(a)}}$. She thus generates the goal $\neg T(a)$, i.e. she does not teach. However, if she is a \emph{realistic social agent}, the priority (abuse of notation) is $Des\prec^+ Obl\prec^+ Bel$, and therefore she teaches, as $T(a)$ is in the extension.
\end{eg}

\section{The Corrected ASPIC\texorpdfstring{$^+$}{+} Preferences}\label{sec:ASPIC+_eli_ord}

\subsection{Overview of the Problem}

In this section, we show that the elitist set comparison relation \cite[page 375, Definition 19]{sanjay:13} is not reasonable inducing and hence cannot guarantee normative rationality. This is because there is an error in the proof of \cite[page 376, Proposition 21]{sanjay:13}. One should use Prakken's original strict elitist set comparison relation instead \cite{Prakken:10}, which does result in normatively rational ASPIC$^+$ instantiations. This is why Equation \ref{eq:original_elitist} (page \pageref{eq:original_elitist}) is not the same as the original ASPIC$^+$ elitist set comparison relation.

\subsection{The Elitist Set Comparison Relation is not Reasonable Inducing}

Recall that the property of \textit{reasonable inducing} for a given set comparison relation $\ord$ is necessary for ASPIC$^+$ instantiations to have normatively rational argument preference relations $\precsim$ based on $\ord$, because they preserve Dung's fundamental lemma \cite[page 327, Lemma 10]{Dung:95}, as discussed in \cite[Section 4.2]{sanjay:13}.

\begin{rem2}\label{def:reas_ind}
(From \cite[page 376, Definition 22]{sanjay:13}) Given a preset $\ang{P,\leq}$, a set comparison $\ord\:\subseteq\sqbra{\powfin(P)}^2$ is \textit{reasonable inducing} iff
\begin{enumerate}
\item $\ord$ is transitive.
\item For any $\Gamma_0,\:\Gamma_1,\:\cdots,\:\Gamma_n\subseteq_\text{fin}P$ (for $n\geq 1$), if
\begin{align}\label{eq:ass}
\bigcup_{i=1}^n\Gamma_i\ordneq\Gamma_0\text{ then}
\end{align}
\begin{enumerate}
\item $\pair{\exists 1\leq i\leq n}\Gamma_i\ord\Gamma_0$ and
\item $\pair{\exists 1\leq i\leq n}\Gamma_0\not\ord\Gamma_i$.
\end{enumerate}
\end{enumerate}
\end{rem2}

\noindent Recall the following identity for bounded existential quantifiers: for any unary predicate $\pow$ and a family of sets $A_i$ indexed by another set $I$,
\begin{align}\label{eq:exists_union}
\pair{\exists x\in\bigcup_{i\in I}A_i}\pow(x)&\Leftrightarrow\pair{\exists i\in I}\pair{\exists x\in A_i}\pow(x)\:.
\end{align}

\noindent We now explain why the proof that $\orde$ is reasonable inducing \cite[page 376, proposition 21]{sanjay:13} is incorrect by locating the error.

\begin{lem}
(The following statement, \cite[page 376, Proposition 21]{sanjay:13}, may not be true) $\orde$ is reasonable inducing.
\end{lem}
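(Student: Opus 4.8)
The plan is to treat this statement as one to be \emph{refuted}: it merely reproduces \cite[Proposition 21]{sanjay:13}, and the point of the section is to show it fails for the relation of \cite[Definition 19]{sanjay:13}. First I would write that original definition out in full and isolate exactly how it differs from Equations \ref{eq:original_non-strict_elitist} and \ref{eq:original_elitist}: \cite{sanjay:13} defines the non-strict relation $\orde$ directly from the preorder $\leq''$ (with a special clause for $\es$) and takes the strict relation $\ordeneq$ to be its asymmetric part, whereas here $\ordeneq$ is defined directly from the strict order $<''$ and $\orde$ is its reflexive closure. I would then test \cite{sanjay:13}'s relation against the two clauses of Definition \ref{def:reas_ind} in turn.

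For the second clause I would reproduce \cite{sanjay:13}'s argument and locate the erroneous step: from $\bigcup_{i=1}^n\Gamma_i\ordeneq\Gamma_0$ the proof tries to single out an index $i$, and although distributing an existential quantifier over the union $\bigcup_{i=1}^n\Gamma_i$ as in Equation \ref{eq:exists_union} is sound, here I expect it is used to pull an existential quantifier out from \emph{inside} a universal one --- that is, to pass from $(\forall x\in\Gamma_0)(\exists i)(\cdots)$ to $(\exists i)(\forall x\in\Gamma_0)(\cdots)$ --- which is not valid, and that swap is precisely what is needed to obtain clause 2(b) of Definition \ref{def:reas_ind}. I would also check the transitivity clause, which I expect to fail too: ``$(\forall y\in\Gamma')\,x\leq''y$'' is vacuously true when $\Gamma'=\es$, so that $\Gamma\orde\es$ and $\es\orde\Gamma''$ hold for all $\Gamma,\Gamma''$ while $\Gamma\orde\Gamma''$ can fail (take $\Gamma,\Gamma''$ singletons with $\leq''$-incomparable elements).

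Whichever of these is the genuine failure, I would then make it concrete by exhibiting an explicit counterexample: fix a small preset --- two to four elements of $\relsymb_d$ with explicitly chosen $\leq''$-(in)comparabilities --- together with finite sets $\Gamma_0,\Gamma_1,\Gamma_2$ such that $\Gamma_1\cup\Gamma_2\ordeneq\Gamma_0$ but some clause of Definition \ref{def:reas_ind} is violated, with all relevant instances of $\orde$ and $\ordeneq$ checked directly from the definitions. I would additionally verify that the same data does \emph{not} falsify the conditions for the relation of Equations \ref{eq:original_non-strict_elitist}--\ref{eq:original_elitist}, which is Prakken's original strict elitist comparison \cite{Prakken:10}: there both clauses go through, since from $\bigcup_{i=1}^n\Gamma_i\ordeneq\Gamma_0$ one gets, via Equation \ref{eq:exists_union}, an index $i$ and an $x\in\Gamma_i$ with $x<''y$ for every $y\in\Gamma_0$, whence $\Gamma_i\ordeneq\Gamma_0$ and so $\Gamma_i\orde\Gamma_0$ (clause 2(a)), which in turn rules out $\Gamma_0\orde\Gamma_i$ and hence gives clause 2(b), while transitivity is immediate because ``$\exists x\in\es$'' is simply false.

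What I would record at the end is that \cite[Proposition 21]{sanjay:13} does not hold for \cite[Definition 19]{sanjay:13}, so that relation cannot be relied upon to preserve Dung's fundamental lemma \cite[Lemma 10]{Dung:95} and therefore cannot guarantee normative rationality --- which is exactly what forces the switch to Equation \ref{eq:original_elitist}. The main obstacle is that the two definitions are close enough that the counterexample is delicate: I have to reconstruct \cite[Definition 19]{sanjay:13} faithfully enough to know which clause of Definition \ref{def:reas_ind} actually breaks (transitivity or clause 2(b)), and then choose the preset and the $\Gamma_i$ so that the example genuinely exploits the asymmetric-part-versus-direct-strict discrepancy rather than being an artefact that would also sink the corrected relation.
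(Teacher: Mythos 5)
Your main line matches the paper's. The paper treats this ``lemma'' exactly as you propose: it reproduces the proof of \cite[Proposition 21]{sanjay:13} verbatim and pinpoints the invalid step in the clause-2(b) argument --- namely, after choosing $x_1\in\Gamma_0$ as the witness for $\Gamma_0\orde\Gamma_1$ and then instantiating $\pair{\forall x\in\Gamma_0}\pair{\exists i}\pair{\exists y\in\Gamma_i}x\not\leq y$ at $x=x_1$, the proof illegitimately assumes the witnessing index $i(x_1)$ equals $1$. That is the same witness-coincidence fallacy you describe as an unwarranted passage from $(\forall x)(\exists i)$ to a single $i$ working for the chosen $x$; the legitimate use of Equation \ref{eq:exists_union} is not itself the problem. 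The explicit counterexample and the verification that the strict version survives are then carried out in the subsequent Lemma \ref{lem:eli_reas_ind_second_prop_false} and in the section on $\ordeneq'$, just as you plan.

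One concrete error in your proposal: your expectation that transitivity also fails via the empty set does not hold. Under the elitist definition, $\Gamma\ordeneq\es$ is indeed vacuously true for nonempty $\Gamma$, but $\es\ordeneq\Gamma''$ requires $\pair{\exists x\in\es}(\cdots)$, which is false, so $\es\orde\Gamma''$ holds only when $\Gamma''=\es$ and your proposed chain $\Gamma\orde\es\orde\Gamma''$ with incomparable singletons $\Gamma,\Gamma''$ cannot be completed. The paper accepts that $\orde$ is transitive and satisfies clause 2(a), and locates the failure of reasonable inducing solely in clause 2(b), where the counterexample (Lemma \ref{lem:eli_reas_ind_second_prop_false}) crucially exploits elements that are $\leq$-equivalent without being equal --- the feature that the strict relation $\ordeneq'$ eliminates.
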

\begin{proof}
(The following proof, from \cite[page 390, Proposition 21]{sanjay:13}, is incorrect) We know that $\orde$ is transitive and satisfies Property 2(a) of reasonable inducing (Definition \ref{def:reas_ind}, page \pageref{def:reas_ind}) from Equation \ref{eq:exists_union}. Assume for contradiction that property 2(b) is false, i.e.
\begin{align}
&\pair{\forall 1\leq i\leq n}\Gamma_0\orde\Gamma_i\nonumber\\
\Leftrightarrow&\pair{\forall 1\leq i\leq n}\pair{\exists x\in\Gamma_0}\pair{\forall y\in\Gamma_i}x\leq y\nonumber\\
\Rightarrow&\pair{\exists x\in\Gamma_0}\pair{\forall y\in\Gamma_1}x\leq y\text{ (by choosing $i=1$),}\nonumber\\
\Leftrightarrow&\pair{\forall y\in\Gamma_1}x_1\leq y\:,\label{eq:temp}
\end{align}
where $x_1\in\Gamma_0$ is the witness to $\exists$. Now from the assumption of strictly less than in the set comparison relation (Equation \ref{eq:ass}, page \pageref{eq:ass}), we have
\begin{align}
\Gamma_0\not\orde\bigcup_{i=1}^n\Gamma_i&\Leftrightarrow\pair{\forall x\in\Gamma_0}\pair{\exists y\in\bigcup_{i=1}^n\Gamma_i}x\not\leq y\nonumber\\
&\Leftrightarrow\pair{\forall x\in\Gamma_0}\pair{\exists 1\leq i\leq n}\pair{\exists y\in\Gamma_i}x\not\leq y\text{ by Equation \ref{eq:exists_union}}\nonumber\\
&\Rightarrow\pair{\exists 1\leq i(x_1)\leq n}\pair{\exists y\in\Gamma_{i(x_1)}}x_1\not\leq y\:,\label{eq:temp2}
\end{align}
where in the last step we have instantiated $x$ under $\forall$ to $x_1\in\Gamma_0$, which is the witness to Equation \ref{eq:temp}.

(INCORRECT STEP) Assume the witness to $\exists$ in Equation \ref{eq:temp2} is 1, i.e. $i(x_1)=1$. Of course, there is no guarantee that the witness to the first $\exists$ in Equation \ref{eq:temp2} is the same as the instantiation of the first $\forall$ in ``$\pair{\forall 1\leq i\leq n}\Gamma_0\orde\Gamma_1$''.

Running with this, we have from Equations \ref{eq:temp} and \ref{eq:temp2},
\begin{align}
\pair{\forall y\in\Gamma_1}x_1\leq y\text{ and }\pair{\exists y\in\Gamma_1}x_1\not\leq y\:.
\end{align}
Therefore, by instantiating the first quantifier $\forall$ to the witness of the second quantifier $\exists$, calling it $y_0\in\Gamma_1$, we have
\begin{align}
x_1\leq y_0\text{ and }x_1\not\leq y_0\:,
\end{align}
which is the purported contradiction. Therefore, $\pair{\exists 1\leq i\leq n}\Gamma_0\not\orde\Gamma_i$.
\end{proof}

An incorrect proof does not mean that the elitist set comparison relation \cite[page 375, Definition 19]{sanjay:13} is not reasonable inducing. We now show that it is not reasonable inducing with the following counterexample.

\begin{lem}\label{lem:eli_reas_ind_second_prop_false}
The proposition ``if $\bigcup_{i=1}^n\Gamma_i\ordeneq\Gamma_0$ then $\pair{\exists 1\leq i\leq n}\Gamma_0\not\orde\Gamma_i$'' is false, i.e. Property 2(b) of Definition \ref{def:reas_ind} fails for $\orde$.
\end{lem}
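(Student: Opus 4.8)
The plan is to exhibit an explicit counterexample: a finite preset $\ang{P,\leq}$ together with finitely many finite subsets $\Gamma_0,\Gamma_1,\dots,\Gamma_n$ of $P$ (in fact $n=2$ will suffice) such that $\bigcup_{i=1}^n\Gamma_i\ordeneq\Gamma_0$ holds while $\Gamma_0\orde\Gamma_i$ holds for \emph{every} $i\in\set{1,\dots,n}$, thereby refuting Property 2(b). Since the whole point is that strict elitist comparison against a union behaves differently than against each piece separately, I would look for a situation where $\Gamma_0$ has a small minimal element that beats everything in the big union $\Gamma_1\cup\Gamma_2$, yet each individual $\Gamma_i$ \emph{also} contains an element that is $\leq$ nothing problematic — i.e. each $\Gamma_i$ individually has some element bounded above by a fixed small witness from $\Gamma_0$, or better, $\Gamma_0$ is a singleton whose unique element sits strictly below everything, making $\Gamma_0\ordeneq\Gamma_i$ trivially true for all $i$.

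Concretely, I would try $P=\set{a,b,c}$ with the strict order generated by $a<''b$ and $a<''c$ (so $b,c$ are incomparable maximal elements, $a$ the unique minimum), and set $\Gamma_0:=\set{a}$, $\Gamma_1:=\set{b}$, $\Gamma_2:=\set{c}$, so $n=2$. Then $\bigcup_{i=1}^2\Gamma_i=\set{b,c}$, and $\Gamma_0\ordeneq\set{b,c}$ because the witness $a\in\Gamma_0$ satisfies $a<''y$ for all $y\in\set{b,c}$; hence the hypothesis $\bigcup_{i=1}^n\Gamma_i\ordeneq\Gamma_0$ of Property 2(b) — wait, one must be careful about the direction. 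Re-examining Definition~\ref{def:reas_ind}: the hypothesis is $\bigcup_{i=1}^n\Gamma_i\ordneq\Gamma_0$, i.e. with $\ord=\orde$ this reads $\bigcup_{i=1}^n\Gamma_i\ordeneq\Gamma_0$, meaning $\pair{\exists x\in\bigcup\Gamma_i}\pair{\forall y\in\Gamma_0}x<''y$. So I actually want the \emph{union} to contain a small element below everything in $\Gamma_0$, while each $\Gamma_i$ separately fails to sit strictly below $\Gamma_0$. So instead take $\Gamma_0:=\set{b,c}$, $\Gamma_1:=\set{a,b}$, $\Gamma_2:=\set{a,c}$; then $a\in\Gamma_1\cup\Gamma_2$ and $a<''b$, $a<''c$, so $\bigcup_i\Gamma_i\ordeneq\Gamma_0$. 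But $\Gamma_0\orde\Gamma_1$? We need either $\Gamma_0=\Gamma_1$ (false) or $\pair{\exists x\in\Gamma_0}\pair{\forall y\in\Gamma_1}x<''y$ — take $x=b$: is $b<''a$? No; is $b<''b$? No. So $\Gamma_0\orde\Gamma_1$ fails, which is the \emph{wrong} conclusion for a counterexample to 2(b) — 2(b) says $\pair{\exists i}\Gamma_0\not\orde\Gamma_i$, and here that existential holds. I need to adjust so that $\Gamma_0\orde\Gamma_i$ for \emph{all} $i$.

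The fix: make $\Gamma_0$ also contain a dominating-from-below element relative to each $\Gamma_i$, while the union still beats $\Gamma_0$. Try $P=\set{a,b,c,d}$ with $a<''c$, $a<''d$, $b<''c$, $b<''d$ (and $a,b$ incomparable, $c,d$ incomparable). Let $\Gamma_0:=\set{a,b}$, $\Gamma_1:=\set{a,c}$, $\Gamma_2:=\set{b,d}$, $n=2$. Then $\bigcup_i\Gamma_i=\set{a,b,c,d}$; does it $\ordeneq\Gamma_0=\set{a,b}$? Need $\exists x$ in the union with $x<''a$ and $x<''b$ — there is no such element. So that fails too. The genuine obstacle, which I expect to be the crux, is engineering the asymmetry: the union must have a uniform lower witness for $\Gamma_0$, but no $\Gamma_i$ may, while simultaneously each $\Gamma_i$ must itself be $\orde$-above $\Gamma_0$ (so $\Gamma_0$ needs a uniform lower witness for each $\Gamma_i$). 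The clean way: let the union contain a \emph{fresh minimum} $m$ below all of $\Gamma_0$, placed so that $m$ belongs to some $\Gamma_j$ but that same $\Gamma_j$ has \emph{another} element forcing $\Gamma_0$'s witness to still work. I would therefore finalise with, say, $P=\set{x,y,z}$, order $z<''x$, $z<''y$, $x<''y$ (a near-chain), $\Gamma_0:=\set{x}$, $\Gamma_1:=\set{y}$, $\Gamma_2:=\set{z,y}$: then $\bigcup=\set{z,y}$, $z<''x$ so $\bigcup\ordeneq\Gamma_0$; $\Gamma_0=\set{x}\orde\set{y}=\Gamma_1$ since $x<''y$; and $\Gamma_0=\set{x}\orde\set{z,y}=\Gamma_2$? need $\exists u\in\set{x}$ with $u<''z$ and $u<''y$ — $x<''z$ is false. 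Still stuck on $\Gamma_2$. This persistent failure is itself informative and I would resolve it by choosing $n$ large or by letting $\Gamma_0$ be non-singleton with one element beating $\Gamma_1$ and a different one beating $\Gamma_2$ while some $\Gamma_i$ smuggles in the global minimum — verifying the three conditions $(\bigcup\ordeneq\Gamma_0)$ and $(\Gamma_0\orde\Gamma_i$ for each $i)$ by direct inspection of the finite order. Writing out that verification line by line, once the right small poset is found, is routine; pinning down the poset is the main difficulty.
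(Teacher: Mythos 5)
There is a genuine gap here: you never actually exhibit the counterexample, and, more importantly, the space in which you are searching for one is empty. In every attempt you take the underlying order to be a (strict) partial order and you evaluate the elitist comparison with the strict relation, i.e.\ $\Gamma\orde\Gamma'$ iff $\Gamma=\Gamma'$ or $\pair{\exists x\in\Gamma}\pair{\forall y\in\Gamma'}x<''y$. That is Prakken's \emph{strict} elitist comparison (Definition \ref{def:strict_eli_set_comparison}), which the paper proves \emph{is} reasonable inducing, so Property 2(b) cannot fail for it and your repeated failures are inevitable rather than bad luck. Even if you switch to the original non-strict comparison $\pair{\exists x\in\Gamma}\pair{\forall y\in\Gamma'}x\leq y$ but keep $\leq$ antisymmetric, no counterexample exists: if $x_0\in\Gamma_{i_0}$ witnesses $\bigcup_i\Gamma_i\orde\Gamma_0$ and $z\in\Gamma_0$ witnesses $\Gamma_0\orde\Gamma_{i_0}$, then $z\leq x_0$ and $x_0\leq z$ force $x_0=z\in\Gamma_0$; and then for any $v\in\Gamma_j$ the witness $z_j$ of $\Gamma_0\orde\Gamma_j$ gives $x_0\leq z_j\leq v$, so $x_0$ witnesses $\Gamma_0\orde\bigcup_i\Gamma_i$ after all, contradicting strictness of the hypothesis.

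The idea you are missing is that the failure of 2(b) for the original elitist order of \cite[Definition 19]{sanjay:13} hinges on $\leq$ being a genuine \emph{preorder} containing two distinct but equivalent elements, with the comparison taken non-strictly. The paper's counterexample takes $a,b,c,d$ with $a\approx c$ (so $a\leq c$ and $c\leq a$ while $a\neq c$), $a<d$, $d\leq b$ and $c\not\leq b$, and sets $\Gamma_0=\set{c,d}$, $\Gamma_1=\set{a}$, $\Gamma_2=\set{b}$: then $a$ witnesses $\set{a,b}\orde\set{c,d}$ while $\set{c,d}\not\orde\set{a,b}$, yet $c\leq a$ gives $\Gamma_0\orde\Gamma_1$ and $d\leq b$ gives $\Gamma_0\orde\Gamma_2$. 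The equivalence-without-equality is exactly what defeats the antisymmetry argument above, and it is the diagnosis the paper itself draws from the example. Your plan of ``pinning down the right small poset'' therefore cannot terminate successfully; you need a preset that is not a poset.
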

\begin{proof}
The counterexample is as follows: let $\ang{P,\leq}$ be an arbitrary preset such that $a,b,c,d\in P$. Let $\Gamma_0=\set{c,d}$, $\Gamma_1=\set{a}$, $\Gamma_2=\set{b}$ so $\Gamma_1\cup\Gamma_2=\set{a,b}$. Let $\leq$ be such that $a\approx c,\:a<d,\:d\leq b$ and $c\not\leq b$. Notice
\begin{align*}
\Gamma_1\cup\Gamma_2\ordeneq\Gamma_0&\Leftrightarrow\sqbra{\Gamma_1\cup\Gamma_2\orde\Gamma_0\text{ and }\Gamma_0\not\orde\Gamma_1\cup\Gamma_2}\\
&\Leftrightarrow\sqbra{\set{a,b}\orde\set{c,d}\text{ and }\set{c,d}\not\orde\set{a,b}}\\
\set{a,b}\orde\set{c,d}&\Leftrightarrow\sqbra{\pair{a\leq c,\:a\leq d}\text{ or }\pair{b\leq c,\:b\leq d}}\:.\\
&\Leftrightarrow\text{true as }\sqbra{a\approx c\Rightarrow a\leq c\text{ and }a<d\Rightarrow a\leq d}\:.\\
\set{c,d}\not\orde\set{a,b}&\Leftrightarrow\sqbra{\pair{c\not\leq a\text{ or }c\not\leq b}\text{ and }\pair{d\not\leq a\text{ or } d\not\leq b}}\\
&\Leftrightarrow\text{true as }\sqbra{c\not\leq b\text{ and }a<d\Rightarrow d\not\leq a}\:.\\
\Gamma_0\orde\Gamma_1&\Leftrightarrow\set{c,d}\orde\set{a}\\
&\Leftrightarrow\sqbra{c\leq a\text{ or }d\leq a}\\
&\Leftrightarrow\text{true as }\sqbra{a\approx c\Rightarrow c\leq a}\:.\\
\Gamma_0\orde\Gamma_2&\Leftrightarrow\set{c,d}\orde\set{b}\\
&\Leftrightarrow\sqbra{c\leq b\text{ or }d\leq b}\\
&\Leftrightarrow\text{true because }d\leq b\:.
\end{align*}
Therefore, we have found a situation where $\Gamma_1\cup\Gamma_2\ordeneq\Gamma_0$, $\Gamma_0\orde\Gamma_1$ and $\Gamma_0\orde\Gamma_2$ are all true.
\end{proof}

\begin{cor}
$\orde$ is not reasonable inducing.
\end{cor}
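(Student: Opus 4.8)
The plan is to obtain the corollary as an immediate consequence of Lemma~\ref{lem:eli_reas_ind_second_prop_false}. By Definition~\ref{def:reas_ind}, the statement ``$\orde$ is reasonable inducing'' is the conjunction of (i) transitivity of $\orde$, (ii) Property~2(a), and (iii) Property~2(b), where Property~2(b) must hold for \emph{all} finite $\Gamma_0,\Gamma_1,\dots,\Gamma_n\subseteq_\text{fin}P$ over \emph{every} preset $\ang{P,\leq}$. Lemma~\ref{lem:eli_reas_ind_second_prop_false} exhibits a single preset together with sets $\Gamma_0=\set{c,d}$, $\Gamma_1=\set{a}$, $\Gamma_2=\set{b}$ (so $n=2$) for which the antecedent $\bigcup_{i=1}^2\Gamma_i\ordeneq\Gamma_0$ of Property~2(b) holds yet its consequent $\pair{\exists 1\leq i\leq 2}\Gamma_0\not\orde\Gamma_i$ fails, since both $\Gamma_0\orde\Gamma_1$ and $\Gamma_0\orde\Gamma_2$. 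Hence Property~2(b) is refuted for $\orde$, so the conjunction defining ``reasonable inducing'' is false, and $\orde$ is not reasonable inducing.

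First I would simply invoke Lemma~\ref{lem:eli_reas_ind_second_prop_false} and record that a conjunction, one of whose conjuncts has been disproved, is itself disproved. For completeness one could repeat the witness of Lemma~\ref{lem:eli_reas_ind_second_prop_false} verbatim so that the corollary's proof is self-contained, but no new argument is needed: transitivity of $\orde$ and Property~2(a) are irrelevant here, as the failure is located entirely in Property~2(b).

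There is essentially no obstacle; all of the substantive content lies in Lemma~\ref{lem:eli_reas_ind_second_prop_false} (and, upstream, in identifying that the original proof of reasonable inducing mishandled the witnesses to the nested quantifiers). The point worth emphasising is the downstream consequence: since reasonable-inducingness of the set-comparison relation is exactly what is used to preserve Dung's fundamental lemma, and hence to guarantee that the induced argument preference $\precsim$ yields normatively rational complete extensions (cf.\ \cite[Section~4.2]{sanjay:13}), this corollary is precisely what forces us to discard the original ASPIC$^+$ elitist comparison and adopt Prakken's strict version in Equation~\ref{eq:original_elitist}.
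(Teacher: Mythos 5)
Your proposal is correct and matches the paper's proof, which likewise derives the corollary immediately from Definition \ref{def:reas_ind} and the counterexample of Lemma \ref{lem:eli_reas_ind_second_prop_false}; you merely spell out the (obvious) logical step that refuting one conjunct of the definition refutes the whole. No gap.
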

\begin{proof}
Immediate from Definition \ref{def:reas_ind} (page \pageref{def:reas_ind}) and Lemma \ref{lem:eli_reas_ind_second_prop_false} (page \pageref{lem:eli_reas_ind_second_prop_false}).
\end{proof}

\noindent This failure of the property of being reasonable inducing allows for counterexamples like \cite[Example 5.1]{Dung:14}. In that example, two defeasible rules can be equivalent under a suitable preorder without being equal. The original elitist order from \cite{sanjay:13} does allow for defeasible to be equivalent (i.e. just as preferred as each other) without being equal.

\subsection{The Strict Elitist Set Comparison is Reasonable Inducing}

Consider the strict version of the elitist order, as originally proposed by Prakken in \cite[page 109]{Prakken:10}. We will show that it is reasonable inducing, at the cost of not allowing distinct defeasible rules to be equivalent, i.e. that the only notion of equivalence is equality. Recall that given a preset $\ang{P,\leq}$, its strict counterpart preorder is $a<b\Leftrightarrow\sqbra{a\leq b,\:b\not\leq a}$, which is a strict partial order.

\begin{rem2}\label{def:strict_eli_set_comparison}
Let $\ang{P,\leq}$ be a preset and form its strict poset $\ang{P,<}$. Define the \textit{strict elitist set comparison} $\ordeneq'$ on $\powfin(P)$ as
\begin{align}
\Gamma\ordeneq'\Gamma'\Leftrightarrow\pair{\exists x\in\Gamma}\pair{\forall y\in\Gamma'}x<y\:.
\end{align}
Its \textit{non-strict counterpart} is
\begin{align}
\Gamma\orde'\Gamma'\Leftrightarrow\sqbra{\Gamma=\Gamma'\text{ or }\Gamma\ordeneq'\Gamma'}\:.
\end{align}
\end{rem2}

\begin{cor}
$\ordeneq'$ is irreflexive.
\end{cor}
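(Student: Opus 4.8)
The plan is to show directly that $\Gamma \not\ordeneq' \Gamma$ for every $\Gamma \in \powfin(P)$, using that $\ang{P, <}$ is a strict partial order (in particular, $<$ is irreflexive). Unwinding Definition \ref{def:strict_eli_set_comparison}, $\Gamma \ordeneq' \Gamma$ would mean $\pair{\exists x \in \Gamma}\pair{\forall y \in \Gamma} x < y$.

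First I would fix an arbitrary $\Gamma \subseteq_\text{fin} P$ and suppose for contradiction that $\Gamma \ordeneq' \Gamma$ holds; let $x \in \Gamma$ be a witness to the existential, so that $\pair{\forall y \in \Gamma} x < y$. Then I would instantiate the universally quantified $y$ with $x$ itself — this is legitimate precisely because $x \in \Gamma$ — to obtain $x < x$. This contradicts the irreflexivity of $<$ (recall $a < b \Leftrightarrow \sqbra{a \leq b,\: b \not\leq a}$, so $a < a$ would require $a \not\leq a$, impossible since $\leq$ is reflexive as a preorder). Hence no such witness $x$ exists, i.e. $\Gamma \not\ordeneq' \Gamma$, which is irreflexivity of $\ordeneq'$.

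There is essentially no obstacle here: the only subtlety worth a sentence is the edge case $\Gamma = \es$, where the existential quantifier $\pair{\exists x \in \es}(\cdots)$ is vacuously false, so $\es \not\ordeneq' \es$ holds trivially and separately from the main argument. Everything else is a one-line quantifier manipulation relying on reflexivity of the underlying preorder $\leq$. I would present this as a short paragraph rather than a displayed derivation, since introducing a blank line into a display is exactly the kind of thing that breaks compilation, and the argument does not benefit from alignment.
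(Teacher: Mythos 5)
Your proof is correct and is essentially identical to the paper's: both assume $\Gamma\ordeneq'\Gamma$ for contradiction, take the witness of the existential, instantiate the universal at that same witness to obtain $x<x$, and contradict irreflexivity of $<$. Your extra remarks (deriving irreflexivity of $<$ from reflexivity of the preorder $\leq$, and the vacuous case $\Gamma=\es$) are harmless additions that do not change the argument.
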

\begin{proof}
Assume for contradiction that $\Gamma\ordeneq'\Gamma$, which is equivalent to\[\pair{\exists x\in\Gamma}\pair{\forall y\in\Gamma}x<y\:.\]Let $x_0\in\Gamma$ be the witness to $\exists$, which means $\pair{\forall y\in\Gamma}x_0<y$, and one can instantiate $y$ to $y=x_0$, which means $x_0<x_0$ and hence a contradiction. Therefore, $\ordeneq'$ is irreflexive.
\end{proof}

\begin{lem}
The strict elitist set comparison (Definition \ref{def:strict_eli_set_comparison}) is reasonable inducing.
\end{lem}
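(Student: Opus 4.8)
The plan is to verify the two clauses of Definition~\ref{def:reas_ind} directly for $\ordeneq'$ (together with its non-strict companion $\orde'$), reusing as much of the structure of the original (flawed) argument as survives the switch to the strict preorder $<$. Transitivity of $\ordeneq'$ is routine: if $\Gamma\ordeneq'\Gamma'$ via witness $x\in\Gamma$ and $\Gamma'\ordeneq'\Gamma''$ via witness $x'\in\Gamma'$, then $x<x'$ (since $x<y$ for all $y\in\Gamma'$, in particular $y=x'$) and $x'<y'$ for all $y'\in\Gamma''$, so by transitivity of $<$ we get $x<y'$ for all $y'\in\Gamma''$, witnessing $\Gamma\ordeneq'\Gamma''$. (One also needs transitivity of $\orde'$ if Definition~\ref{def:reas_ind} is read with $\orde'$ in clause 2(a)/2(b), but the equality cases are trivial.)

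\textbf{Property 2(a).} Suppose $\bigcup_{i=1}^n\Gamma_i\ordeneq'\Gamma_0$, i.e. $\pair{\exists x\in\bigcup_{i=1}^n\Gamma_i}\pair{\forall y\in\Gamma_0}x<y$. By Equation~\ref{eq:exists_union} the witness $x$ lies in some $\Gamma_j$, and then $\pair{\exists x\in\Gamma_j}\pair{\forall y\in\Gamma_0}x<y$ gives $\Gamma_j\ordeneq'\Gamma_0$, hence $\Gamma_j\orde'\Gamma_0$; take $i=j$. This is the same move as in the original proof and goes through verbatim.

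\textbf{Property 2(b).} This is the clause that failed for $\orde$, so it is \emph{the main obstacle} and where the strictness of $<$ must be used essentially. I would argue by contradiction: assume $\bigcup_{i=1}^n\Gamma_i\ordeneq'\Gamma_0$ but $\pair{\forall 1\leq i\leq n}\Gamma_0\orde'\Gamma_i$. First dispose of the equality cases: if $\Gamma_0=\Gamma_i$ for some $i$, then $\bigcup_i\Gamma_i\ordeneq'\Gamma_0$ yields, picking the witness $x\in\bigcup_i\Gamma_i$ and noting $x<y$ for all $y\in\Gamma_0=\Gamma_i$, together with $x\in\Gamma_k$ for some $k$; if we can also arrange (by 2(a)-type reasoning) a comparison forcing some element of $\Gamma_i$ strictly below itself we are done — more carefully, from $\bigcup\Gamma_i\ordeneq'\Gamma_0$ the witness $x$ satisfies $x<y$ for \emph{all} $y\in\Gamma_0$; since $\Gamma_0=\Gamma_i$ and $x\in\bigcup_j\Gamma_j$, if in fact $x\in\Gamma_i=\Gamma_0$ then $x<x$, contradiction, and if $x\notin\Gamma_i$ one still has $x<y$ for every $y\in\Gamma_i$, which I will feed into the main case below. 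So assume all the relations are strict: $\bigcup_{i=1}^n\Gamma_i\ordeneq'\Gamma_0$ with witness $x^\ast$, and $\Gamma_0\ordeneq'\Gamma_i$ for every $i$, say with witness $z_i\in\Gamma_0$ so that $\pair{\forall w\in\Gamma_i}z_i<w$. Now $x^\ast\in\Gamma_k$ for some $k$; apply $\Gamma_0\ordeneq'\Gamma_k$ with its witness $z_k\in\Gamma_0$ to get $z_k<x^\ast$ (since $x^\ast\in\Gamma_k$). But $x^\ast<y$ for all $y\in\Gamma_0$, in particular $x^\ast<z_k$. Transitivity of $<$ then gives $z_k<z_k$, contradicting irreflexivity of the strict order. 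Hence $\pair{\exists 1\leq i\leq n}\Gamma_0\not\orde'\Gamma_i$.

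\textbf{Why this works now.} The repair is exactly that in the flawed proof the existential witness produced by the "$\Gamma_0\not\orde\bigcup\Gamma_i$" expansion need not match the universally-quantified index, whereas here I do not expand a negation at all: I take the single element $x^\ast$ witnessing $\bigcup\Gamma_i\ordeneq'\Gamma_0$, locate \emph{its} block $\Gamma_k$, and compare against \emph{that} block's witness $z_k\in\Gamma_0$ — so the indices are forced to agree, and strictness of $<$ (irreflexivity plus transitivity) closes the loop. I would also record, as noted in the surrounding text, that the price is that $\ordeneq'$ admits no non-trivial equivalences: $\Gamma\approx\Gamma'$ under $\orde'$ forces $\Gamma=\Gamma'$, since $\Gamma\ordeneq'\Gamma'$ and $\Gamma'\ordeneq'\Gamma$ would, by the same witness-chasing, yield $x<x$ for a witness $x$.
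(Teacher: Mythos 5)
Your proof is correct and follows essentially the same route as the paper's: transitivity and Property 2(a) by direct witness-chasing, and Property 2(b) by locating the block $\Gamma_k$ containing the witness of $\bigcup_{i}\Gamma_i\ordeneq'\Gamma_0$, instantiating the contradiction hypothesis at that same index $k$, and closing the loop via irreflexivity and transitivity of $<$. Your explicit treatment of the equality disjunct of $\orde'$ is in fact slightly more careful than the paper's.
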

\begin{proof}
Following Definition \ref{def:reas_ind}, we have:
\begin{enumerate}
\item Transitivity:
\begin{align*}
&\Gamma\ordeneq'\Gamma'\ordeneq'\Gamma''\\
\Leftrightarrow&\pair{\exists x\in\Gamma}\pair{\forall y\in\Gamma'}x<y\text{ and }\pair{\exists y\in\Gamma'}\pair{\forall z\in\Gamma''}y<z\\
\Leftrightarrow&\pair{\forall y\in\Gamma'}x_0<y\text{ and }\pair{\forall z\in\Gamma''}y_0<z\\
\Rightarrow&x_0<y_0\text{ and }\pair{\forall z\in\Gamma''}y_0<z\\
\Leftrightarrow&\pair{\forall z\in\Gamma''}x_0<y_0<z\\
\Rightarrow&\pair{\forall z\in\Gamma''}x_0<z\\
\Leftrightarrow&\pair{\exists x\in\Gamma}\pair{\forall z\in\Gamma''}x<z\\
\Leftrightarrow&\Gamma\ordeneq'\Gamma''\:,
\end{align*}
where in the third line $x_0\in\Gamma$ is the witness to the first $\exists$, and $y_0\in\Gamma'$ is the witness to the second $\exists$. Therefore, $\orde'$ is transitive.
\item Definition \ref{def:reas_ind}, Property 2(a): we have
\begin{align}
&\bigcup_{i=1}^n\Gamma_i\ordeneq'\Gamma_0\nonumber\\
\Leftrightarrow&\pair{\exists x\in\bigcup_{i=1}^n\Gamma_i}\pair{\forall y\in\Gamma_0}x<y\nonumber\\
\Leftrightarrow&\pair{\exists 1\leq i\leq n}\pair{\exists x\in\Gamma_i}\pair{\forall y\in\Gamma_0}x<y\text{ by Equation \ref{eq:exists_union}}\label{eq:strict_eli_middle}\\
\Leftrightarrow&\pair{\exists 1\leq i\leq n}\Gamma_i\ordeneq'\Gamma_0\text{ by Definition \ref{def:strict_eli_set_comparison}}\nonumber\\
\Rightarrow&\pair{\exists 1\leq i\leq n}\Gamma_i\orde'\Gamma_0\:.\nonumber
\end{align}
Therefore, $\orde'$ satisfies the first property.
\item Definition \ref{def:reas_ind}, Property 2(b): let $1\leq i_0\leq n$ be the witness to the first $\exists$ in Equation \ref{eq:strict_eli_middle}, and $x_{i_0}\in\Gamma_{i_0}$ be the witness to the second $\exists$ in Equation \ref{eq:strict_eli_middle}. Equation \ref{eq:strict_eli_middle}:
\begin{align}\label{eq:strict_eli_prop_2b1}
&\pair{\forall y\in\Gamma_0}x_{i_0}<y\:.
\end{align}
Now assume for contradiction that
\begin{align}
\pair{\forall 1\leq i\leq n}\Gamma_0\orde'\Gamma_i\Rightarrow&\Gamma_0\orde'\Gamma_{i_0}\nonumber\\
\Leftrightarrow&\pair{\exists x\in\Gamma_0}\pair{\forall y\in\Gamma_{i_0}}x<y\nonumber\\
\Leftrightarrow&\pair{\forall y\in\Gamma_{i_0}}x_0<y\:,\label{eq:strict_eli_prop_2b2}
\end{align}
where $x_0\in\Gamma_0$ in Equation \ref{eq:strict_eli_prop_2b2} is the witness to $\exists$ in the previous line. Now instantiate $y\in\Gamma_0$ in Equation \ref{eq:strict_eli_prop_2b1} to $x_0$, and instantiate $y\in\Gamma_{i_0}$ in Equation \ref{eq:strict_eli_prop_2b2} to $x_{i_0}$. Therefore, we have
\begin{align}
x_{i_0}<x_0\text{ and }x_0<x_{i_0}\:,
\end{align}
which is a contradiction. Therefore, $\pair{\exists 1\leq i\leq n}\Gamma_0\not\orde'\Gamma_i$ and $\orde'$ satisfies the second property.
\end{enumerate}
This means the strict elitist set comparison is reasonable inducing.
\end{proof}

\begin{eg}
\cite[Example 5.1]{Dung:14} Let $\lang=\set{a_i}_{i=1}^4$ be closed under (syntactic) negation, the contrary function $-$ denote symmetric negation $\neg$, $\mathcal{K}=\es$, $\relsymb_d=\set{\pair{\top\Rightarrow a_i}}_{i=1}^4$ and
\begin{align*}
\relsymb_s=\{&\pair{a_1,\:a_2,\:a_3\to\neg a_4},\:\pair{a_2,\:a_3,\:a_4\to\neg a_1},\:\\
&\pair{a_3,\:a_4,\:a_1\to\neg a_2},\:\pair{a_4,\:a_1,\:a_2\to\neg a_3}\}
\end{align*}
such that the preorder $\precsim$ is such that $d_1\approx d_2$ and $d_3\approx d_4$ only (reflexivity and transitivity is implicit). This instantiation is well-defined (Section \ref{sec:rev_ratl}, page \pageref{sec:rev_ratl}). The arguments are $A_i:=[\top\Rightarrow a_i]$ for $1\leq i\leq 4$, and
\begin{align*}
&B_4:=[A_1,\:A_2,\:A_3\to\neg a_4],\:B_1:=[A_2,\:A_3,\:A_4\to\neg a_1],\:\\
&B_2:=[A_3,\:A_4,\:A_1\to\neg a_2],\:B_3:=[A_4,\:A_1,\:A_2\to\neg a_3]\:.
\end{align*}
\noindent The strict elitist set comparison gives
\begin{align*}
&\set{d_1,\:d_2,\:d_3}\not\!\ordeneq'\set{d_4},\:\set{d_2,\:d_3,\:d_4}\not\!\ordeneq'\set{d_1},\:\\
&\set{d_3,\:d_4,\:d_1}\not\!\ordeneq'\set{d_2},\:\set{d_4,\:d_1,\:d_2}\not\!\ordeneq'\set{d_3}\:,
\end{align*}
because (e.g.) there is no defeasible rule in $\set{d_1,\:d_2,\:d_3}$ that is strictly less than $d_4$. If the witness were $d_3$, say, then $d_3\precsim d_4$, but $d_4\precsim d_3$ as well (rather than $d_4\not\precsim d_3$), so $d_3\not\prec d_4$. Therefore, under the strict elitist set comparison, we have $B_i\not\prec A_i$ (here, $\prec$ denotes the argument preference and not the preorder on defeasible rules), hence $B_i\defeat A_i$ for $1\leq i\leq 4$. The possible sets of justified arguments are $\set{A_1,\:A_2,\:A_3,\:B_4}$, $\set{A_1,\:A_2,\:B_3,\:A_4}$, $\set{A_1,\:B_2,\:A_3,\:A_4}$ and $\set{B_1,\:A_2,\:A_3,\:A_4}$, whose conclusion sets are consistent.
\end{eg}

\subsection{Summary}

We conclude that Prakken's elitist set comparison (Definition \ref{def:strict_eli_set_comparison}) should be used instead of the original elitist set comparison from \cite[page 375, Definition 19]{sanjay:13} in all all future instantiations of ASPIC$^+$, if one would like their ASPIC$^+$ instantiation to be normatively rational in the sense of \cite{Caminada:07}, and avoid counterexamples similar to that of \cite{Dung:14}.

\section{From ASPIC\texorpdfstring{$^+$}{+} to Prioritised Default Logic}\label{sec:ASPIC+_to_PDL}

\noindent We now instantiate ASPIC$^+$ to PDL, define a preference relation over arguments, and prove a representation theorem (Theorem \ref{thm:rep_thm}, page \pageref{thm:rep_thm}), which guarantees that the inferences under the argumentation semantics correspond exactly to the inferences in PDL; this is a soundness and completeness result.

\subsection{The Instantiation}\label{sec:inst}

\noindent Let $\ang{D,\:W,\:\prec^+}$ be a LPDT\footnote{We will discuss why we only consider LPDTs in Section \ref{sec:discussion_conclusions} (page \pageref{sec:discussion_conclusions}).}. The \emph{corresponding} (ASPIC$^+$) instantiation is defined as follows:
\begin{enumerate}
\item Our arguments are expressed in FOL, so our set of wffs is $\LForm$, although in practice we only consider $\LSent$.
\item The contrary function $-$ \textit{syntactically} defines conflict in terms of classical negation\footnote{For example, $\neg(\theta \wedge \neg \phi)$ is the contrary of $(\theta \wedge \neg \phi)$, but $(\theta \to \phi)$, where $\to$ in this case denotes material implication, is not the contrary of $(\theta \wedge \neg \phi)$.} so $\pair{\forall\theta\in\LForm}\:[\:\neg\theta\in\overline{\theta}$ and $\theta\in\overline{\neg \theta}\:]$.
\item The set of strict rules $\relsymb_s$ characterises inference in first order classical logic. We leave the specific proof theory implicit.
\item The set of defeasible rules $\relsymb_d$ is defined in terms of $D$ as:
\begin{align}
\relsymb_d:=\set{(\theta\Rightarrow\phi)\:\vline\:\frac{\theta:\phi}{\phi}\in D}\:,
\end{align}
with $n\equiv*$. Clearly, there is a bijection\footnote{Recall from Section \ref{sec:arg_constr_ASPIC+} (page \pageref{sec:arg_constr_ASPIC+}) that two defeasible rules are \emph{equal} iff they have the same antecedents and consequent \emph{syntactically}.} $f$ where
\begin{align}\label{eq:bij_defaults_def_rules}
f:D\to\relsymb_d:\frac{\theta:\phi}{\phi}\mapsto f\pair{\frac{\theta:\phi}{\phi}}:=\pair{\theta\Rightarrow\phi}
\end{align}
and we will define the \emph{strict version of the} preorder $\leq''$ over $\relsymb_d$ as\footnote{From Footnote \ref{fn:dual_priority_PDL} (page \pageref{fn:dual_priority_PDL}), we do not need to define $<''$ as the order-theoretic dual to $\prec^+$, avoiding potential confusion as to which item is more preferred.}
\begin{align}\label{eq:def_rules_pref_order}
(\theta\Rightarrow\phi)<''(\theta'\Rightarrow\phi')\Leftrightarrow\frac{\theta:\phi}{\phi}\prec^+\frac{\theta':\phi'}{\phi'}\:.
\end{align}
We can see that the strict toset $\ang{\relsymb_d,\:<''}$ is order isomorphic to $\ang{D,\:\prec^+}$, where the non-strict version of the order $\leq''$ abbreviates ``either $<''$ or $=$''. As we are only considering finite $D$, $\relsymb_d$ is also finite.
\vspace{-0.1cm}
\item The set of axiom premises is $\mathcal{K}_n=W$, because we take $W$ to be the set of facts. Furthermore, $\mathcal{K}_p=\es$.
\end{enumerate}

The set $\alg$ of ASPIC$^+$ arguments are defined as in Section \ref{sec:rev_ASPIC+} (page \pageref{sec:rev_ASPIC+}). It is easy to see that all arguments are firm because $\mathcal{K}_p=\es$, and so there are no undermining attacks. As $n$ is undefined, no attack can be an undercut. Therefore, we only have rebut attacks, where $A\attk B$ iff
\begin{align}\label{eq:attack}
\pair{\exists B',\:B''\subarg B}\:B'=\sqbra{B''\Rightarrow\neg Conc(A)}\:.
\end{align}

\subsection{Preferences and Defeats}

Defeats are defined as in Equation \ref{eq:ASPIC+_general_defeat} (page \pageref{eq:ASPIC+_general_defeat}). So given a suitable argument preference $\precsim$ on the arguments $\alg$ and attacks defined in the previous section, we can associate an ASPIC$^+$ defeat graph $\ang{\alg,\:\defeat}$ to any LPDT $\ang{D,\:W,\:\prec^+}$.

How should the argument preference $\precsim$ be defined based on the strict total order $<''$ over $\relsymb_d$? We would want to define $\precsim$ such that the extension of the LPDT $\ang{D,\:W,\:\prec^+}$ is given by the conclusions of the justified arguments of the defeat graph $\ang{\alg,\:\defeat}$ instantiated by the corresponding ASPIC$^+$ instantiation, and is reasonable \cite[page 372, Definition 18]{sanjay:13}.

\subsubsection{Failure of all ASPIC\texorpdfstring{$^+$}{+} Preferences}

Unfortunately, \textit{none} of the four ASPIC$^+$ argument preferences -- democratic weakest link, democratic last link, elitist weakest link, and elitist last link \cite[page page 375, Definitions 19 to 21]{sanjay:13} -- are suitable because one can devise simple LPDTs where the prioritised default extension does not correspond to the conclusions of the justified arguments. For example, for the elitist weakest link order, $\orde$:

\begin{eg}\label{eg:not_eli_WLP}
Consider the LPDT $\ang{D,\:W,\:\prec^+}$ where $W=\set{a}$,
\begin{align}
D=\set{d_1:=\frac{a:b}{b},\:d_2:=\frac{b:c}{c},\:d_3:=\frac{b:\neg c}{\neg c}}
\end{align}
and $d_1\prec^+ d_2\prec^+ d_3$. By Equation \ref{eq:ext_ind} (page \pageref{eq:ext_ind}), the prioritised default extension is $Th(\set{a,b,\neg c})$, with $d_1$ applied first, then $d_3$, which blocks $d_2$.

In the ASPIC$^+$ instantiation:  $r_1<''r_2<''r_3$ (where for $i=1,2,3$, $r_i:=f(d_i)$ and $f$ is Equation \ref{eq:bij_defaults_def_rules}). The arguments are $A:=[[[a]\Rightarrow b]\Rightarrow c]$ and $B:=[[[a]\Rightarrow b]\Rightarrow\neg c]$, which rebut each other at their conclusions.

Under the elitist ordering (Equation \ref{eq:original_non-strict_elitist}, page \pageref{eq:original_non-strict_elitist}), it is neither the case that $\set{r_1,\:r_2}\ordeneq\set{r_1,\:r_3}$ nor $\set{r_1,\:r_3}\ordeneq\set{r_1,\:r_2}$. As the sets are not equal, we have $A\not\prec B$, $B\not\prec A$ and $A\not\approx B$. This means $A\hookrightarrow B$ and $B\hookrightarrow A$ by Equation \ref{eq:ASPIC+_general_defeat}, which means there are two possible stable extensions $\set{A}$ and $\set{B}$ so that neither argument is sceptically justified, and so $\neg c$ is not an argumentation-defined inference. However $\neg c$ is a PDL inference. Therefore the elitist weakest link ordering cannot be used to calculate $\precsim$.
\end{eg}

\subsubsection{The Disjoint Elitist Order}\label{sec:disj_eli}

One can introduce the \emph{disjoint elitist order}, $\ordde$, which ignores shared rules when comparing arguments. This is intuitive because when comparing two arguments we should only focus on the fallible information on which the arguments differ. It is defined as
\begin{align}\label{eq:disj_eli}
\Gamma\orddeneq\Gamma'\Leftrightarrow\pair{\exists x\in\Gamma-\Gamma'}\pair{\forall y\in\Gamma'-\Gamma}\:x<''y\:,
\end{align}
with argument equivalence $A \approx B\Leftrightarrow DR(A)=DR(B)$ and $\ordde$ defined as usual. We call $<''$ the \textit{underlying (strict) total order} of $\orddeneq$.

If we replace Equation \ref{eq:original_non-strict_elitist} with $\orddeneq$, then from $r_2<''r_3$, it is easy to see that in Example \ref{eg:not_eli_WLP}, $A\prec B$, $B\not\prec A$, and so $A\not\hookrightarrow B$ and $B\hookrightarrow A$, and hence there is only a single stable extension containing the now sceptically justified argument $B$ with conclusion $\neg c$. This at least repairs the correspondence in Example \ref{eg:not_eli_WLP}. 

The disjoint elitist order also satisfies a very intuitive property:

\begin{lem}
$\pair{\forall A,\:B\in\alg}\sqbra{DR(A)\subseteq DR(B)\Rightarrow B\precsim A}$. 
\end{lem}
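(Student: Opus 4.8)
The plan is to unpack the definitions and reduce the claim to an elementary observation about the disjoint elitist order. Recall that $A \precsim B$ means $A \approx B$ or $A \prec B$, where $A \approx B \Leftrightarrow DR(A) = DR(B)$ and $A \prec B \Leftrightarrow DR(A) \orddeneq DR(B)$, i.e. $(\exists x \in DR(A) - DR(B))(\forall y \in DR(B) - DR(A))\ x <'' y$. So, fixing $A, B \in \alg$ with $DR(A) \subseteq DR(B)$, I want to show $B \precsim A$, that is, either $DR(B) = DR(A)$ or $(\exists x \in DR(B) - DR(A))(\forall y \in DR(A) - DR(B))\ x <'' y$.

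First I would split into two cases according to whether the inclusion $DR(A) \subseteq DR(B)$ is proper. If $DR(A) = DR(B)$, then $B \approx A$ by Equation \ref{eq:arg_pref_equivalence}, hence $B \precsim A$ and we are done. If the inclusion is proper, then $DR(B) - DR(A) \neq \es$, while $DR(A) - DR(B) = \es$ since $DR(A) \subseteq DR(B)$. Now the key point: $B \prec A$ amounts to $DR(B) \orddeneq DR(A)$, i.e. $(\exists x \in DR(B) - DR(A))(\forall y \in DR(A) - DR(B))\ x <'' y$; the inner universally quantified statement ranges over the \emph{empty} set $DR(A) - DR(B) = \es$, so it is vacuously true, and since $DR(B) - DR(A) \neq \es$ there is indeed a witness $x$. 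Hence $DR(B) \orddeneq DR(A)$, so $B \prec A$ and therefore $B \precsim A$.

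I would also remark briefly that this uses only that $DR(A), DR(B)$ are finite subsets of $\relsymb_d$ (so that $\orddeneq$ applies) and the shape of Equation \ref{eq:disj_eli}; no property of the underlying order $<''$ beyond its being a relation is needed, which is why the statement is so robust. The only thing to be careful about is the vacuous-truth step in the proper-inclusion case — that is really the entire content of the lemma, and it is exactly the feature that distinguishes $\orddeneq$ (which ignores shared rules) from the plain elitist order $\ordeneq$ of Equation \ref{eq:original_elitist}, under which the analogous statement would fail. So the "main obstacle" is essentially non-existent; the work is just in being careful that the quantifier over $DR(A) - DR(B)$ is over the empty set and that the definition of $\precsim$ as "$\approx$ or $\prec$" is invoked correctly.
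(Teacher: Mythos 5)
Your proof is correct and follows essentially the same route as the paper's: split on whether the inclusion is proper, use $\approx$ in the equality case, and in the proper case observe that $DR(A)-DR(B)=\es$ makes the universally quantified part of Equation \ref{eq:disj_eli} vacuously true. You are in fact slightly more careful than the paper, which says only ``vacuously true'' and does not explicitly note that the existential quantifier over $DR(B)-DR(A)$ still needs a witness, supplied by properness of the inclusion.
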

\begin{proof}
If $DR(B)=DR(A)$ then $B\approx A$, so $B\precsim A$. If $DR(A)\subset DR(B)$, then $DR(A)-DR(B)=\es$, which means $B\prec A$ is vacuously true from Equation \ref{eq:disj_eli} so $B\precsim A$ follows.
\end{proof}

\noindent Formally, this result states that $\ordde$ extends the superset relation on $\powfin\pair{\relsymb_d}$. Intuitively, this means that the \textit{more} defeasible rules your argument contains, the \textit{less} preferred it will become. It is rational for agents to prefer more certainty than less, with all else being equal\footnote{More precisely, if the agent has two arguments, $A$ and $B$, such that $A$ is less certain than $B$ because $A$ uses more defeasible rules, then with all else being equal, the agent \textit{should} prefer $B$ over $A$. However, we do not demand that the agent must first seek complete certainty in the sense of Descartes (especially given limited knowledge and cognitive resources) prior to proposing an argument in a dialogue.}. The extreme case is that arguments with \textit{no} defeasible rules, i.e. strict arguments, are most preferred.

The above two intuitions of (1) ignoring shared elements of the sets being compared and (2) the order extends the superset relation have been considered in a different context and for a different order by \cite{Brewka:10}.

The disjoint elitist order also satisfies the following property: if $\ang{\relsymb_d,\:<''}$ is a strict toset, then $\ang{\powfin(\relsymb_d),\:\orddeneq}$ is also a strict toset. Further, $\es$ is the $\ordde$-greatest element in $\powfin(\relsymb_d)$, and $\relsymb_d$ (if finite) is the $\ordde$-least element. See Appendix \ref{app:disj_eli_props} (page \pageref{app:disj_eli_props}) for proofs of these properties.

Unfortunately, despite these desirable properties, there is a counterexample which shows the disjoint elitist order cannot be used to provide the correspondence with PDL.

\begin{eg}\label{eg:not_disj_eli_WLP}
Consider the LPDT $\ang{D,W,\prec^+}$ where $D=\set{d_1,d_2,d_3,d_4,d_5}$, $W=\es$ and
\begin{align*}
&d_1:=\frac{\top:a_1}{a_1},\:d_4:=\frac{a_3:a_4}{a_4},\:d_3:=\frac{\top:a_3}{a_3},\\
&d_2:=\frac{a_1:a_2}{a_2},\:d_5:=\frac{a_1:\neg(a_2\wedge a_4)}{\neg(a_2\wedge a_4)}\:,
\end{align*}
\noindent such that $d_1\prec^+ d_4\prec^+ d_3\prec^+ d_2\prec^+ d_5$. Our PDE is constructed in the usual manner starting from $E_0=Th(\es)$. By Equation \ref{eq:ext_ind} (page \pageref{eq:ext_ind}), the order of the application of the defaults is $d_3,d_4,d_1,d_5$, with $d_2$ blocked.
\begin{align}\label{eq:order_of_adding_rules}
E_1=E_0+a_3,\:E_2=E_1+a_4,\:E_3=E_2+a_1,\:E_4=E_3+\neg(a_2\wedge a_4)\:,
\end{align}
and $E_k=E_4$ for all $k\geq 5$. The default $d_2$ is blocked because $\neg(a_2\wedge a_4)\equiv(\neg a_2\vee\neg a_4)$, and with $a_4$ (from $d_4$), we have $\neg a_2$, which blocks $d_2$. The unique PDE from this LPDT is
\begin{align}\label{eq:PDE}
E=Th\pair{\set{a_1,a_3,a_4,\neg(a_2\wedge a_4)}}=Th(\set{a_1,\neg a_2,a_3,a_4})\:.
\end{align}

Now consider the corresponding arguments from our instantiation. We have the defeasible rules\footnote{Where, similar to Example \ref{eg:not_eli_WLP}, $r_i$ corresponds to $d_i$ via Equation \ref{eq:bij_defaults_def_rules} (page \pageref{eq:bij_defaults_def_rules}).}
\begin{align}\label{eq:not_disj_eli_rules}
r_1<''r_4<''r_3<''r_2<''r_5\:.
\end{align}
The relevant arguments and sets of defeasible rules are
\begin{align}\label{eq:def_rules_ord}
A&:=[[\top\Rightarrow a_1]\Rightarrow a_2]\:,DR(A)=\set{r_1,r_2}\\
B&:=[[\top\Rightarrow a_3]\Rightarrow a_4]\:,DR(B)=\set{r_3,r_4}\\
C&:=[[\top\Rightarrow a_1]\Rightarrow\neg(a_2\wedge a_4)]\:,DR(C)=\set{r_1,r_5}\:,\\
D&:=[B,C\to\neg a_2]\:,DR(D)=\set{r_1,r_3,r_4,r_5}\:,
\end{align}

\noindent We illustrate these arguments in Figure \ref{figure:not_disj_eli_WLP}.

\begin{figure}[ht]
\begin{center}
\includegraphics[height=3.9cm,width=4cm]{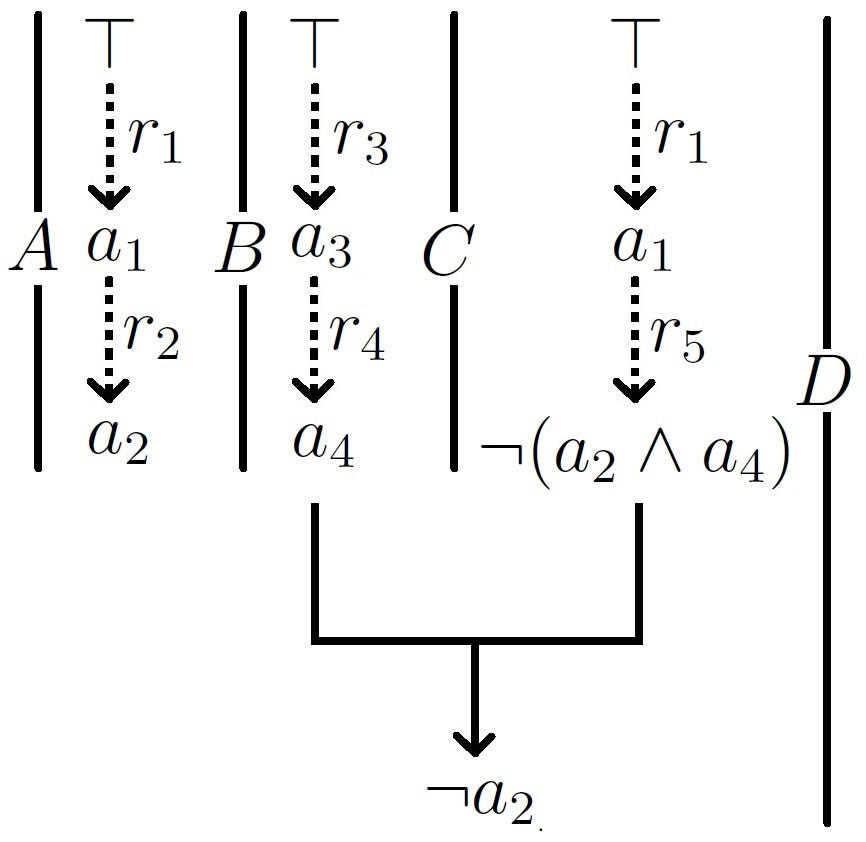}
\caption{The arguments in Example \ref{eg:not_disj_eli_WLP}. The dashed arrows denote defeasible rules and the solid arrows denote strict rules. Arguments $A$, $B$ and $C$ are clearly indicated by the label on the line to the left of the constituted argument. Argument $D$ is constructed from arguments $B$ and $C$, and the strict rule $\pair{a_4,\:\neg\pair{a_2\wedge a_4}\to\neg a_2}\in\relsymb_s$.}
\label{figure:not_disj_eli_WLP}
\end{center}
\end{figure}

The stable extension is $\set{D,\:B,\:C,\:[\top\Rightarrow a_3],\:[\top\Rightarrow a_1]}$ and all strict extensions thereof\footnote{Informally, in ASPIC$^+$, for $S\subseteq\alg$ the strict extension of $S$ is the smallest set containing $S$ extended with all strict and firm arguments, and all possible applications of strict rules to those arguments. This becomes the deductive closure when ASPIC$^+$ is instantiated into classical logic. See \cite[page 370, Definition 17]{sanjay:13} for more details.}. The conclusion set does correspond to Equation \ref{eq:PDE}. However, this would require $D\defeat A$, which means, by Equation \ref{eq:ASPIC+_general_defeat} (page \pageref{eq:ASPIC+_general_defeat}), $D\attk A$ and $D\not\prec A$. Clearly, $D\attk A$ on $A$. However, $D\not\prec A$ means, under the disjoint elitist order,
\begin{align}
D\not\prec A&\Leftrightarrow\text{not }DR(D)\orddeneq DR(A)\nonumber\\
&\Leftrightarrow\text{not }\pair{\exists x\in DR(D)-DR(A)}\pair{\forall y\in DR(A)-DR(D)}x<'' y\nonumber\\
&\Leftrightarrow\text{not }\pair{\exists x\in\set{r_1,r_3,r_4,r_5}}\pair{\forall y\in\set{r_2}}x<''y\nonumber\\
&\Leftrightarrow\text{not }\pair{\exists x\in\set{r_1,r_3,r_4,r_5}x<''r_2}\nonumber\\
&\Leftrightarrow\text{not }\pair{r_1<''r_2\text{ or }r_3<''r_2\text{ or }r_4<''r_2\text{ or }r_5<''r_2}\nonumber\\
&\Leftrightarrow\:r_2<''r_1,\:r_2<''r_3,\:r_2<''r_4,\:r_2<''r_5\:.\label{eq:violation_of_reas}
\end{align}

\noindent From Equation \ref{eq:not_disj_eli_rules}, it is not the case that $r_2<''r_1,r_3,r_4$, so we conclude $D\prec A$. Therefore, argumentation does not generate the corresponding stable extension to Equation \ref{eq:PDE}.

\end{eg}

\subsubsection{Mimicking Prioritised Default Logic}\label{sec:SP_order}

Despite not being suitable for our desired correspondence between ASPIC$^+$ and PDL, the disjoint elitist order does capture one important intuition. When comparing two arguments $A,\:B\in\alg$, we compare them at their defeasible rules, and whichever argument has the $<''$-least rule in the set $DR(A)\ominus DR(B)$ is the less preferred argument. However, comparing sets of defeasible rules does not take the \textit{structure} of arguments into account, i.e. in terms of which rules in the construction of arguments could be applied earlier, and which could be applied later. We now transform $<''$ into a new order, $<_{SP}$, called the \textit{structure preference} order, such that it gives the correct argument preference for the correspondence.

Let $\ang{\relsymb_d,<''}$ be given. Given $R\subseteq\relsymb_d$, we define the set $Args(R)\subseteq\alg$ to be the set of arguments such that $A\in Args(R)\Leftrightarrow DR(A)\subseteq R$. We call this set \textit{the set of arguments freely constructed with defeasible rules in $R$}. It can be easily shown that $Args$ is $\subseteq$-monotonic in $R$ and that the assignment $R\mapsto Args(R)$ is functional. Clearly, $Args(\es)$ is the set of all strict arguments, and $Args\pair{\relsymb_d}=\alg$. Further, $Args(R)$ is \textit{subargument-closed}, i.e.

\begin{cor}\label{cor:args_op_is_subarg_closed}
If $A\in Args(R)$ and $B\subarg A$, then $B\in Args(R)$, for any $R\subseteq\relsymb_d$.
\end{cor}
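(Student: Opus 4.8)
The plan is to reduce the statement to the monotonicity of the map $DR$ along the subargument relation, namely that $B\subarg A$ implies $DR(B)\subseteq DR(A)$. Once that is in hand, the corollary is immediate: if $A\in Args(R)$ then $DR(A)\subseteq R$ by definition of $Args(R)$, and if $B\subarg A$ then $DR(B)\subseteq DR(A)\subseteq R$, so $B\in Args(R)$.

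To establish $B\subarg A\Rightarrow DR(B)\subseteq DR(A)$, I would argue by induction on the construction of $A$ (Section \ref{sec:arg_constr_ASPIC+}), using the fact that $Sub$ and $DR$ are both defined inductively from that construction. In the base case $A=[\theta]$ with $\theta\in\mathcal{K}$, we have $Sub(A)=\set{A}$ and $DR(A)=\es$, so the only subargument is $A$ itself and the inclusion is trivial. In the inductive case $A=[A_1,\ldots,A_n\to\phi]$ or $A=[A_1,\ldots,A_n\Rightarrow\phi]$, any proper subargument $B\propsubarg A$ satisfies $B\subarg A_i$ for some $i$, so by the induction hypothesis $DR(B)\subseteq DR(A_i)$; since $DR(A)=\bigcup_{i=1}^n DR(A_i)$ (respectively $DR(A)=\set{TopRule(A)}\cup\bigcup_{i=1}^n DR(A_i)$ when the top rule is defeasible), we get $DR(B)\subseteq DR(A)$, and for $B=A$ the inclusion is again trivial.

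The only mild subtlety — and the step I expect to require the most care — is handling the precise bookkeeping of the inductive definitions of $Sub(\cdot)$ and $DR(\cdot)$, since the paper leaves these definitions informal (they are only sketched in the footnotes of Section \ref{sec:rev_ASPIC+}); one must be careful that $DR$ on the composite argument genuinely decomposes as the union of the $DR$ of its immediate subarguments together with its top rule when that rule is defeasible. Granting the standard inductive definitions, there is no real obstacle: the corollary follows purely formally from the monotonicity of $DR$ under $\subarg$ together with the defining property of $Args(R)$.
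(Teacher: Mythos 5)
Your proposal is correct and follows essentially the same route as the paper: the paper's proof also reduces the claim to the monotonicity of $DR$ under $\subarg$ (which it states as ``easy to show'') and then applies the definition of $Args(R)$ exactly as you do. Your version simply fills in the inductive argument for that monotonicity, and in doing so is actually cleaner than the paper's own wording, which swaps the roles of $A$ and $B$ mid-proof.
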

\begin{proof}
It is easy to show that if $A\subarg B$, then $DR(A)\subseteq DR(B)$. Therefore, if $A\in Args(R)$, then $DR(A)\subseteq R$, and hence $DR(B)\subseteq R$. Therefore, $B\in Args(R)$.
\end{proof}

For $R\subseteq\relsymb_d$ let $\max_{<''}R\subseteq R$ denote the \textit{set} of all $<''$-maximal elements of $R$. As $<''$ is a (strict) total order and $\relsymb_d$ is finite, this is a singleton set. For $r\in\relsymb_d$ recall the $Ante$ map (Equation \ref{eq:antecedent_map_for_rules}, page \pageref{eq:antecedent_map_for_rules}). Note that we are considering defeasible rules with one antecedent, so $Ante(r)$ is a singleton set. For $S\subseteq\alg$ recall the $Conc$ map (Equation \ref{eq:set_arg_attrb}, page \pageref{eq:set_arg_attrb}).

Consider ordering the rules in $\relsymb_d$ as follows: for $1\leq i\leq |\relsymb_d|$, we define the singleton set $\set{a_i}\subseteq\relsymb_d$ to be
\begin{align}\label{eq:SP_ord_def}
\max_{<''}\pair{\set{r\in\relsymb_d\:\vline\:Ante(r)\subseteq Conc\pair{Args\pair{\bigcup_{k=1}^{i-1}\set{a_k}}}}-\bigcup_{j=1}^{i-1}\set{a_j}}\:.
\end{align}

\noindent More concretely,
\begin{align*}
\set{a_1}=&\max_{<''}\set{r\in\relsymb_d\:\vline\:Ante(r)\subseteq Conc\pair{Args\pair{\es}}}\:,\\
\set{a_2}=&\max_{<''}\pair{\set{r\in\relsymb_d\:\vline\:Ante(r)\subseteq Conc\pair{Args\pair{\set{a_1}}}}-\set{a_1}}\:,\\
\set{a_3}=&\max_{<''}\pair{\set{r\in\relsymb_d\:\vline\:Ante(r)\subseteq Conc\pair{Args\pair{\set{a_1,\:a_2}}}}-\set{a_1,\:a_2}}\:,
\end{align*}
\noindent and so on, until this process stops at $\set{a_{|\relsymb_d|}}$\footnote{This is just Equation \ref{eq:SP_ord_def} with $i=|\relsymb_d|$, which is where the assumption that $\relsymb_d$ is a finite set is crucial.}. The intuition is: $a_1$ is the most preferred rule whose antecedent is amongst the conclusions of all strict (and firm) arguments, $a_2$ is the next most preferred rule, whose antecedent is amongst the conclusions of all arguments having at most $a_1$ as a defeasible rule. Similarly, $a_3$ is the next most preferred rule, whose antecedent is amongst the conclusions of all arguments having at most $a_1$ and $a_2$ as defeasible rules, and so on until all of the rules of $\relsymb_d$ are exhausted. This process orders the rules by how preferred they are under $<''$ \textit{and} by how earlier they are applicable when constructing the arguments.

We then define (notice the dual order)
\begin{align}\label{eq:SP_order}
a_i <_{SP} a_j\Leftrightarrow j < i\:,
\end{align}
where $1\leq i,\:j\leq|\relsymb_d|$. We define the non-strict order to be $a_i\leq_{SP}a_i\Leftrightarrow\sqbra{a_i=a_j\text{ or }a_i<_{SP}a_j}$. This makes sense because $i\mapsto a_i$ is bijective between $\relsymb_d$ and $\set{1,2,3,\ldots,|\relsymb_d|}$. Clearly $<_{SP}$ is a strict total order on $\relsymb_d$. We call this the \textit{structure preference order} on $\relsymb_d$, which exists and is unique given $<''$. The corresponding argument preference, $\prec_{SP}$, is $<_{SP}$ under the disjoint elitist order\footnote{We use the disjoint elitist order instead of the usual elitist order because Example \ref{eg:not_eli_WLP} (page \pageref{eg:not_eli_WLP}) shows that the usual elitist order does not give the correspondence.}, i.e.
\begin{align}\label{eq:SP_arg_pref}
A\prec_{SP} B\Leftrightarrow\pair{\exists x\in DR(A)-DR(B)}\pair{\forall y\in DR(B)-DR(A)}\:x<_{SP} y\:,
\end{align}
with $\precsim_{SP}$ defined as usual. We can also define the corresponding set comparison relation, $\ordneq_{SP}$, as, for $\Gamma,\:\Gamma'\subseteq_\text{fin}\relsymb_d$,
\begin{align}
\Gamma\ordneq_{SP}\Gamma'\Leftrightarrow\pair{\exists x\in\Gamma-\Gamma'}\pair{\forall y\in\Gamma'-\Gamma}x<_{SP}y\:,
\end{align}
such that $\Gamma\ord_{SP}\Gamma'\Leftrightarrow\sqbra{\Gamma\ordneq_{SP}\Gamma'\text{ or }\Gamma=\Gamma'}$, and
\begin{align}
A\precsim_{SP}B\Leftrightarrow DR(A)\ord_{SP} DR(B)\:.
\end{align}
The name ``structure preference order'' refers to the fact that this order takes into account both the preference $<''$ and the structure, i.e. when the rule is first applicable during the construction of arguments. This allows us to imitate how PDL applies defaults when calculating extensions.

\begin{eg}
(Example \ref{eg:not_disj_eli_WLP}, page \pageref{eg:not_disj_eli_WLP} continued) We have the following: 
\begin{align*}
\set{a_1}=&\max_{<''}\pair{\set{r\in\relsymb_d\:\vline\:Ante(r)\subseteq Conc\pair{Args\pair{\es}}}-\es}\\
=&\max_{<''}\set{r\in\relsymb_d\:\vline\:Ante(r)\subseteq\set{\top}}\\
=&\max_{<''}\set{r_1,r_3}=\set{r_3}\implies a_1=r_3\:.
\end{align*}
\begin{align*}
\set{a_2}=&\max_{<''}\pair{\set{r\in\relsymb_d\:\vline\:Ante(r)\subseteq Conc\pair{Args\pair{\set{r_3}}}}-\set{r_3}}\\
=&\max_{<''}\pair{\set{r_1,r_3,r_4}-\set{r_3}}\\
=&\max_{<''}\set{r_1,r_4}=\set{r_4}\implies a_2=r_4\:.
\end{align*}
\begin{align*}
\set{a_3}=&\max_{<''}\pair{\set{r\in\relsymb_d\:\vline\:Ante(r)\subseteq Conc\pair{Args\pair{\set{r_3,r_4}}}}-\set{r_3,r_4}}\\
=&\max_{<''}\pair{\set{r_1,r_3,r_4}-\set{r_3,r_4}}\\
=&\max_{<''}\set{r_1}\implies a_3=r_1\:.
\end{align*}
\begin{align*}
\set{a_4}=&\max_{<''}\pair{\set{r\in\relsymb_d\:\vline\:Ante(r)\subseteq Conc\pair{Args\pair{\set{r_3,r_4,r_1}}}}-\set{r_1,r_3,r_4}}\\
=&\max_{<''}\pair{\set{r_2,r_5,r_1,r_3,r_4}-\set{r_1,r_3,r_4}}\\
=&\max_{<''}\set{r_2,r_5}=\set{r_5}\implies a_4=r_5\:.\\
\set{a_5}=&\set{r_2}\implies a_5=r_2\:.
\end{align*}
Therefore, we have
\begin{align}
a_1=r_3,\:a_2=r_4,\:a_3=r_1,\:a_4=r_5,\:a_5=r_2\:.
\end{align}
The structure preference order is
\begin{align}
r_2<_{SP}r_5<_{SP}r_1<_{SP}r_4<_{SP}r_3\:.
\end{align}
Notice that this is precisely the order of how the corresponding normal defaults are added in PDL, as Equation \ref{eq:order_of_adding_rules} (page \pageref{eq:order_of_adding_rules}) shows. It is easy to show that the corresponding stable extension under $\prec_{SP}$ corresponds to the PDL inference, because $r_2$ is now $<_{SP}$-least, so $D\not\prec_{SP}A$ by Equation \ref{eq:violation_of_reas} (page \pageref{eq:violation_of_reas}).
\end{eg}

However, $<_{SP}$ does not necessarily follow the PDL order of applying defaults as the following example illustrates.

\begin{eg}\label{eg:always_blocked_default}
Consider the LPDT $\ang{D,\:W,\:\prec^+}$ where $W=\set{a}$, $d_1:=\frac{a:\neg a}{\neg a}$ and $d_2:=\frac{\top:b}{b}$, such that $d_2\prec^+ d_1$. The prioritised default extension is $E=Th\pair{\set{a,b}}$, where $d_1$ is blocked by $W$, so $d_2$ is the only default added.

Translating this to argumentation, we have $\mathcal{K}_n=\set{a}$, $r_1:=(a\Rightarrow\neg a)$ and $r_2:=(\top\Rightarrow b)$ where for $i=1,2$, $r_i=f\pair{d_i}$, such that $r_2<'' r_1$. The arguments are $A_0:=[a]$, $A_1:=[A_0\Rightarrow\neg a]$ and $B:=[\top\Rightarrow b]$. Applying Equation \ref{eq:SP_ord_def} (page \pageref{eq:SP_ord_def}), we have $r_2<_{SP}r_1$, which clearly is not the order of how the corresponding defaults are added in PDL.

Yet the correspondence still holds. Clearly $A_0\defeat A_1$ because $A_0$ is strict, so the stable extension is the strict extension of $\set{A_0,\:B}$, the conclusion set of which is the extension from PDL.
\end{eg}

Example \ref{eg:always_blocked_default} highlights how blocked defaults and defeated arguments are related. Where PDL blocks the application of a given default and hence preventing its conclusion from featuring in the extension, ASPIC$^+$ allows for the construction of the argument with the corresponding defeasible rule, but that argument is always defeated by another strictly stronger argument and therefore cannot be in any extension.

\subsection{Correspondence of Inferences}

In this section we prove that the argument preference $\prec_{SP}$ is the suitable order to give a correspondence between PDL and ASPIC$^+$ in all cases. Given an LPDT $\ang{D,\:W,\:\prec^+}$, we can construct its defeat graph $\ang{\alg,\:\defeat}$ where the ASPIC$^+$ arguments $\alg$ are constructed following Section \ref{sec:inst} (page \pageref{sec:inst}), the attacks $\attk$ are rebuts at the conclusions of defeasible rules, and the defeats are as in Equation \ref{eq:ASPIC+_general_defeat} (page \pageref{eq:ASPIC+_general_defeat}) with $\attk$ and $\precsim_{SP}$. It is always possible to construct $<_{SP}$ and hence $\precsim_{SP}$ after translating the LPDT to ASPIC$^+$.

\subsubsection{Uniqueness of Stable Extensions}

\noindent In this section we show that the ASPIC$^+$ defeat graphs $\ang{\alg,\:\defeat}$ that have been constructed from a LPDT $\ang{D,\:W,\:\prec^+}$ (following Section \ref{sec:inst}, page \pageref{sec:inst}) each has a unique stable extension. For any starting preference $<''$ on $\relsymb_d$ we first construct $<_{SP}$ following Equations \ref{eq:SP_ord_def} and \ref{eq:SP_order} (page \pageref{eq:SP_order}).

\begin{thm}\label{thm:total_still_has_unique_stable_extension}
Let $\ang{\alg,\:\rightharpoonup,\:\precsim_{SP}}$ be an ASPIC$^+$ attack graph constructed from $\lang=\LForm$, $-$ is $\neg$, $\relsymb_s$ the rules of proof of FOL, $\ang{\relsymb_d,\:<''}$ a \textit{finite} strict \emph{toset} of defeasible rules, $\precsim_{SP}$ is $<_{SP}$ under the disjoint elitist order, $n\equiv *$ on $\relsymb_d$, $\mathcal{K}_p=\es$ and $\mathcal{K}_n\subseteq\LForm$ is a consistent set of formulae. The defeat graph $\ang{\alg,\:\defeat}$ from this attack graph has a unique stable extension.
\end{thm}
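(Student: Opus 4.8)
The plan is to build the stable extension by hand, imitating the stage-by-stage construction of a prioritised default extension but driven by the structure preference order $<_{SP}$ rather than by $<''$, and then to show separately that it is stable and that it is the only stable extension.

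First I would fix the enumeration $\relsymb_d=\{a_1,\dots,a_N\}$ with $a_1$ the $<_{SP}$-greatest rule and $a_N$ the $<_{SP}$-least, as produced by Equations~\ref{eq:SP_ord_def}--\ref{eq:SP_order}, and define a growing set of ``fired'' rules: $\Pi_0:=\es$, and $\Pi_i:=\Pi_{i-1}\cup\{a_i\}$ if the (unique) antecedent of $a_i$ lies in $Conc(Args(\Pi_{i-1}))$ and $\neg Cons(a_i)\notin Conc(Args(\Pi_{i-1}))$, and $\Pi_i:=\Pi_{i-1}$ otherwise; put $\Pi:=\Pi_N$ and $\ext:=Args(\Pi)$. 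By Corollary~\ref{cor:args_op_is_subarg_closed} the set $\ext$ is subargument-closed, and it is trivially closed under application of strict rules, so a routine simultaneous induction on $i$ gives $Conc(Args(\Pi_i))=Th(\mathcal{K}_n\cup Cons(\Pi_i))$ and that each such theory is consistent (using that $\mathcal{K}_n$ is consistent and that the firing condition forbids adding a conclusion whose negation is already derived); hence $Conc(\ext)$ is consistent. I would also record the easy fact that every strict, firm argument is attack-free (no undermines since $\mathcal{K}_p=\es$, no undercuts since $n\equiv*$, no rebuts since such an argument has no defeasible subargument) and therefore lies in every extension, in particular in $\ext$.

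Next I would show $\ext$ is a stable extension of $\ang{\alg,\:\defeat}$. It is attack-conflict-free, hence defeat-cf: an attack inside $\ext$ would place some $\psi$ and $\neg\psi$ together in $Conc(\ext)$, contradicting consistency. To see that $\ext$ defeats every $C\in\alg\setminus\ext$ I would argue by $\propsubarg$-induction on $C$ (well-founded, arguments being finite): if some proper subargument of $C$ lies outside $\ext$, take a defeater for it from the induction hypothesis and lift it along $\subarg$; otherwise $DR(C)\not\subseteq\Pi$ while all proper subarguments of $C$ are in $\ext$, which forces $TopRule(C)$ to be a defeasible rule $a_k\notin\Pi$ and forces the subargument of $C$ concluding the antecedent of $a_k$ into $\ext$, so that antecedent lies in $Conc(\ext)$. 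The key step is then that $a_k$ must have failed to fire \emph{because} $\neg Cons(a_k)\in Conc(Args(\Pi_{k-1}))$, not for lack of a derivable antecedent: so there is $B\in Args(\Pi_{k-1})\subseteq\ext$ with $Conc(B)=\neg Cons(a_k)$, and since $Conc(C)=Cons(a_k)$ and $C$ has defeasible top rule $a_k$, $B$ rebuts $C$ at $C$ itself. Because $DR(B)\subseteq\{a_1,\dots,a_{k-1}\}$, every rule of $DR(B)$ is $<_{SP}$-strictly above $a_k\in DR(C)\setminus DR(B)$, so the disjoint-elitist comparison (Equation~\ref{eq:SP_arg_pref}) yields $B\not\prec_{SP}C$, whence $B\defeat C$. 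Finally, a defeat-conflict-free set that defeats everything outside itself is complete and conflicts with $\alg$ minus itself, i.e.\ stable, by the usual Dung argument.

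For uniqueness, let $\ext^\star$ be any stable extension. I would show $\ext^\star=\ext$ by showing $\ext$ is in fact the grounded extension — a grounded extension that happens to be stable is the unique stable extension, since any strictly larger stable set would defeat one of its own members — which reduces to proving $\ext\subseteq\ext'$ for every complete $\ext'$; this inclusion is obtained by a well-founded induction on arguments, using that every defeater of an $\ext$-argument lies outside $\ext$ and hence is itself defeated by an $\ext$-argument by the previous paragraph. The main obstacle of the whole proof is the emphasised step above, equivalently the closure property that $Ante(a_k)\subseteq Conc(Args(\Pi))$ together with $a_k\notin\Pi$ forces $Ante(a_k)\subseteq Conc(Args(\Pi_{k-1}))$; this is precisely where the way $<_{SP}$ was distilled from $<''$ via Equation~\ref{eq:SP_ord_def} must be exploited, and it is the content of the intermediate result proved in Appendix~\ref{app:NBD_char}. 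A secondary difficulty is choosing the right rank on arguments to make the uniqueness induction genuinely descend, since a defeater of a defeater need not be structurally simpler than the argument one started with.
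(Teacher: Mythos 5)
Your overall strategy is the paper's own: your greedy pass through $\relsymb_d$ in decreasing $<_{SP}$ order is Algorithm~\ref{alg:gen_stab_ext} (your semantic firing condition and the paper's test ``$S\oplus r$ is attack-cf'' coincide, given your inductive invariant $Conc(Args(\Pi_i))=Th(\mathcal{K}_n\cup Cons(\Pi_i))$ and its consistency), attack-conflict-freeness is obtained from consistency in both proofs, and your observation that every rule of the defeater $B$ lies $<_{SP}$-strictly above the offending rule $a_k\in DR(C)-DR(B)$, so that $B\not\prec_{SP}C$ by Equation~\ref{eq:SP_arg_pref}, is exactly the paper's comparison via the strict up-set of $r$. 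Your uniqueness argument via groundedness is actually more explicit than the paper's, which only argues that the \emph{output of the algorithm} is unique and never rules out a second stable extension; but you concede yourself that your induction for $\ext\subseteq\ext'$ has no obvious descending rank, so that part remains open in your write-up as well.

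The genuine gap is the step you single out but do not prove: that $a_k\notin\Pi$ together with $Ante(a_k)\subseteq Conc(Args(\Pi))$ forces $\neg Cons(a_k)\in Conc(Args(\Pi_{k-1}))$, i.e.\ that a non-fired rule whose antecedent is ultimately derivable in $\ext$ must have been blocked by a contrary conclusion rather than skipped for want of a derivable antecedent at its own stage. You cite Appendix~\ref{app:NBD_char} for this, but that appendix proves Lemma~\ref{lem:NBD_characterisation}, a purely PDL-side characterisation of $NBD(\prec^+)$ in terms of the extension $E$; it says nothing about the argumentation-side sets $\Pi_{k-1}$ and cannot be invoked here. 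What you actually need is a closure property of the $<_{SP}$ construction that Equation~\ref{eq:SP_ord_def} does not obviously deliver: it guarantees $Ante(a_k)\subseteq Conc\pair{Args\pair{\set{a_1,\dots,a_{k-1}}}}$ only for the \emph{full} prefix, which may contain mutually inconsistent rules (so that $Conc$ of it is trivially all of $\lang$ and antecedents are ``derivable'' via ex falso), and derivability from $\Pi$ plus derivability from the full prefix does not entail derivability from the fired subset $\Pi_{k-1}$. If $Ante(a_k)$ first becomes derivable from fired rules only at some stage $j>k$, your algorithm never revisits $a_k$, no argument of $\ext$ concludes $\neg Cons(a_k)$, and $\ext$ fails to defeat the corresponding $C$, so stability itself is at stake. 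The paper's proof hides the same assertion in the sentence ``the only reason why $r\notin R$ is because if $r$ were added to $R$, then the resulting $S$ would not be attack-cf'', so you have not fallen behind it --- but neither proof establishes this step, and your proposal cannot be considered complete without a lemma that does.
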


\begin{proof}
The construction of the unique stable extension mimics how extensions are constructed over an LPDT (Equation \ref{eq:ext_ind}, page \pageref{eq:ext_ind}). Given a set of arguments $S\subseteq\alg$ we define, for $r\in\relsymb_d$, $S\oplus r:=Args(DR(S)\cup\set{r})$, i.e. we close $S$ under all arguments with the addition of a new defeasible rule $r$. Now consider Algorithm \ref{alg:gen_stab_ext}. We input the ASPIC$^+$ attack graph $\ang{\alg,\:\attk,\:\precsim_{SP}}$ as described by the hypothesis of the theorem, and the algorithm outputs a set of arguments.

\begin{algorithm}[ht]
\begin{algorithmic}[1]
\Function{GenerateStableExtension}{$\ang{\alg,\:\attk,\:\precsim_{SP}}$}
  \State $S\gets Args(\es)$\label{alg_line:input_strict_args}
  \For{$r\in\relsymb_d$ from $<_{SP}$-greatest to $<_{SP}$-smallest}
    \If{$S\oplus r$ is attack-cf}\label{alg_line:cond_start}
    \State {$S\gets S\oplus r$\label{alg_line:cond_end}}
    \EndIf
  \EndFor
  \Return $S$
\EndFunction
\end{algorithmic}
\caption{Generating a Stable Extension}
\label{alg:gen_stab_ext}
\end{algorithm}

The intuition of this algorithm is to create the largest possible set of undefeated arguments, first by including all strict arguments because strict arguments are never defeated (Line \ref{alg_line:input_strict_args}) and never attack each other because $\mathcal{K}_n$ is consistent. Then, the algorithm includes the defeasible rules from most to least preferred and tests whether the resulting arguments that are constructed by the inclusion of such a defeasible rule leads to an attack (Lines \ref{alg_line:cond_start}--\ref{alg_line:cond_end}). As $<_{SP}$ is total, all defeasible rules are considered. This algorithm halts because $\relsymb_d$ is finite.

It is clear from the algorithm that $S$ exists and is unique, because $S$ is a set of freely-constructed arguments (i.e. of the form $Args(R)$ for some $R\subseteq\relsymb_d$) including as many mutually compatible defeasible rules as possible. We now show that $S$ is a stable extension \cite[page 26 Definition 2.2.7]{EoA}.

\textit{Attack-cf:} This is guaranteed by the consistency of $\mathcal{K}_n$, so two strict arguments cannot attack each other, and that defeasible rules $r\in\relsymb_d$ are only added if attack-cf is preserved (Lines \ref{alg_line:cond_start}--\ref{alg_line:cond_end}). Therefore, $S$ must be attack-cf.

\textit{Defeats all other arguments:} Let $R\subseteq\relsymb_d$ be the set of all defeasible rules added to $S$, i.e.
\begin{align}
R:=DR(S)=\bigcup_{A\in S}\:DR(A)\subseteq\relsymb_d\:.
\end{align}

Let $B\notin S$ be any argument. As $S=Args\pair{DR(S)}$, this means there is some rule $r\in DR(B)$ such that $r\notin R$. The only reason why $r\notin R$ is because if $r$ were added to $R$, then the resulting $S$ would not be attack-cf, according to Algorithm \ref{alg:gen_stab_ext}. Let $B'\subarg B$ be such that $TopRule(B')=r$, which must exist by the inductive construction of arguments. Let $A$ be the attacker of $r$, such that\footnote{Note that $A$ is appropriately chosen such that $Conc(A)=\neg Cons(r)$ is syntactic equality. This is always possible because $\relsymb_s$ has all rules of proof of FOL. Therefore, if an argument $C$ concludes $\theta$, and we would want it to conclude $\phi$, where $\phi$ and $\theta$ are logically equivalent, we can just append the strict rule $(\theta\to\phi)\in\relsymb_s$ to $C$ to create a new argument $D$ that concludes $\phi$.} $Conc(A)=\neg Cons(r)$ (Equation \ref{eq:consequent_map_for_rules}, page \pageref{eq:consequent_map_for_rules}), so $A\attk B'$ and hence $A\attk B$. There are two possibilities: either $r$ is $<_{SP}$-greatest or it is not.

Suppose $r$ is $<_{SP}$-greatest, then $Args\pair{\es}\oplus r$ is not attack-cf, so $A\in Args(\es)$. As $A$ is strict, $A\defeat B'$, and hence $A\defeat B$.

Now suppose $r$ is not $<_{SP}$-greatest. Consider the strict up-set of $r$ in $\relsymb_d$,
\begin{align}
T:=\set{r'\in\relsymb_d\:\vline\:r<_{SP}r'}\neq\es\:.
\end{align}
There are two sub-possibilities: either $T\cap R=\es$ or $T\cap R\neq\es$. If the former, then given that adding $r$ to $S$ will create an attack from $A\in Args(\es)$, we have $A\defeat B$. If the latter, i.e. $T\cap R\neq\es$, adding $r$ to $S$ means its attacker $A\attk B'$ is in $Args\pair{T\cap R}$. Either $A$ is strict or not strict (i.e. defeasible). If it is strict, then $A\defeat B$ as before. If it is not strict, i.e. $\es\neq DR(A)\subseteq T\cap R$, then by definition $\pair{\forall s\in T}r<_{SP}s$. As $DR(A)\subseteq T\cap R\subseteq T$, we must also have $\pair{\forall s\in DR(A)}r<_{SP} s$. Therefore, there is an $r\in DR(B')-DR(A)$ such that for all rules in $DR(A)$, and hence $DR(A)-DR(B')$, $r<_{SP} s$. By Equation \ref{eq:SP_arg_pref} (page \pageref{eq:SP_arg_pref}), we conclude that $B'\prec_{SP} A$, and hence $A\defeat B'$. Therefore, by definition of $\defeat$ and $\subarg$, $A\defeat B$.

We conclude that the defeat graphs of these ASPIC$^+$ attack graphs have a unique stable extension.
\end{proof}

\subsubsection{A Helpful Distinction in Prioritised Default Logic}

In this section we formalise a distinction between defaults in PDL that are blocked because there exists something that disagrees with them, and defaults that are blocked because they do not add any new information.

Let $\ang{D,\:W,\:\prec}$ be a PDT and $E=\bigcup_{i\in\nat}E_i$ one of its extensions generated from the linearisation $\prec^+\supseteq\prec$. The \emph{set of generating defaults (with respect to $\prec^+$), $GD(\prec^+)$}, is defined as
\begin{align}\label{eq:GD}
GD_i(\prec^+)&:=\set{d\in D\:\vline\:\text{$d$ is $\prec^+$-greatest active in $E_i$}}\:,\nonumber\\
GD(\prec^+)&:=\bigcup_{i\in\nat}GD_i(\prec^+)\subseteq D\:.
\end{align}
Intuitively, this is the set of defaults applied to calculate $E$ following the order $\prec^+$. However, the same $E$ can be generated by distinct total orders $\prec^+$.
\begin{eg}
Consider the PDT $\ang{\set{\frac{a:c}{c},\:\frac{b:c}{c}},\:\set{a,\:b},\:\es}$. We have two linearisations $\frac{a:c}{c}\prec_1^+\frac{b:c}{c}$ and $\frac{b:c}{c}\prec_2^+\frac{a:c}{c}$. We have $GD(\prec_1^+)=\set{\frac{a:c}{c}}$ and $GD(\prec_2^+)=\set{\frac{b:c}{c}}$, which are not equal, even though both linearisations give the same extension $E=Th\pair{\set{a,\:b,\:c}}$. But in the case of $\prec_1^+$, $\frac{b:c}{c}$ is not active because it adds no new information, rather than that we know $\neg c$ already.
\end{eg}
We wish to distinguish between inactive defaults that conflict with something we already know, and inactive defaults that add no new information. We call a default $\frac{\theta:\phi}{\phi}$ \emph{semi-active (in $S\subseteq\LSent$)} iff $\sqbra{\theta\in S,\:\neg\phi\notin S,\:\phi\in S}$. The \emph{set of semi-active defaults (with respect to the linearisation $\prec^+$)} is
\begin{align}\label{eq:SAD}
SAD(\prec^+):=\set{d\in D\:\vline\:\text{ $d$ is semi-active w.r.t. $\prec^+$}}\:.
\end{align}
Intuitively, the application of semi-active defaults add no new information. We then define the \emph{set of non-blocked defaults} to be
\begin{align}\label{eq:NBD}
NBD(\prec^+):=GD(\prec^+)\cup SAD(\prec^+)\subseteq D\:.
\end{align}
Intuitively, this is precisely the set of defaults which are not blocked by the information that has accumulated in the previous steps. This includes the defaults that we have used to nonmonotonically infer some knowledge, \textit{and} the defaults that do not add any new information.

The set of non-blocked defaults has a more elegant characterisation:
\begin{lem}\label{lem:NBD_characterisation}
If $\prec^+$ generates $E$, then we have that
\begin{align}\label{eq:lem_NBD}
NBD(\prec^+):=\set{\frac{\theta:\phi}{\phi}\in D\:\vline\:\theta\in E,\:\neg\phi\notin E}\:.
\end{align}
\end{lem}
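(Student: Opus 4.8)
The plan is to establish the equality in \eqref{eq:lem_NBD} by two inclusions, the main leverage being that the chain $E_0\subseteq E_1\subseteq\cdots$ terminates at some finite stage $N$ with $E=E_N$, together with the consistency of $E$ (both recalled in Section \ref{sec:rev_PDL}). The first move is a simplification of the right-hand side. Since the construction has halted at stage $N$, no default is active in $E_N=E$; unpacking the definition of ``active'', this says precisely that every $\frac{\theta:\phi}{\phi}\in D$ with $\theta\in E$ and $\neg\phi\notin E$ already satisfies $\phi\in E$. Hence
\[
\set{\frac{\theta:\phi}{\phi}\in D\:\vline\:\theta\in E,\:\neg\phi\notin E}=\set{\frac{\theta:\phi}{\phi}\in D\:\vline\:\theta\in E,\:\phi\in E,\:\neg\phi\notin E}=:X\:,
\]
and it suffices to show $GD(\prec^+)\cup SAD(\prec^+)=X$.

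For the inclusion $\subseteq$: if $d=\frac{\theta:\phi}{\phi}\in GD_i(\prec^+)$ then $d$ is active in $E_i$, so $\theta\in E_i\subseteq E$, and $E_{i+1}=E_i+\phi$ yields $\phi\in E_{i+1}\subseteq E$; consistency of $E$ then forces $\neg\phi\notin E$, so $d\in X$. If instead $d$ is semi-active (in $E$, equivalently at the terminal stage $E_N$), then $\theta\in E$, $\phi\in E$ and $\neg\phi\notin E$ hold directly from the definition, again giving $d\in X$. For the inclusion $\supseteq$, take $d=\frac{\theta:\phi}{\phi}\in X$. Either $d$ is selected as the $\prec^+$-greatest active default at some stage $i$, whence $d\in GD_i(\prec^+)\subseteq GD(\prec^+)$; or $d$ is never selected, in which case the three defining conditions of $X$ --- $\theta\in E$, $\phi\in E$, $\neg\phi\notin E$, all already witnessed in $E=E_N$ --- say exactly that $d$ is semi-active, so $d\in SAD(\prec^+)$. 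Assembling the two inclusions with the simplification above yields \eqref{eq:lem_NBD}.

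The only genuinely delicate point is the initial simplification of the right-hand side: it uses \emph{termination} of the construction, not merely $E=\bigcup_iE_i$, to guarantee that no default is left active in $E$, which is what permits inserting the clause $\phi\in E$ for free. Once $X$ is put in this form, the rest is routine bookkeeping with the definitions of active, semi-active, $GD$ and $SAD$, together with the standard fact that $E$ is consistent whenever $W$ is.
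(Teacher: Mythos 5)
Your proof is correct, but it pivots on a different key observation from the paper's own argument in Appendix \ref{app:NBD_char}. The paper proves the two inclusions between $GD(\prec^+)\cup SAD(\prec^+)$ and the right-hand side of Equation \ref{eq:lem_NBD} head-on: the forward direction disposes of $SAD$ immediately and, for a generating default $d\in GD_{j_0}$, rules out $\neg Cons(d)\in E$ by comparing the stage $j_0$ at which $d$ is applied with a hypothetical stage $i_0$ at which $\neg Cons(d)$ would enter the chain; the backward direction is a chain of logical rewritings that splits on whether $Cons(d)$ already lies in the witnessing layer. You instead begin by normalising the target set: because $D$ is finite the chain $E_0\subseteq E_1\subseteq\cdots$ stabilises at some $E_N=E$ in which no default is active, so the clause $\phi\in E$ may be inserted for free, turning the right-hand side into exactly the set of defaults semi-active in $E$; both inclusions then reduce to near-immediate bookkeeping, with consistency of $E$ supplying $\neg\phi\notin E$ for generating defaults. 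Your route is shorter, avoids the index-chasing entirely, and makes explicit a fact the paper leaves implicit --- that $GD(\prec^+)\subseteq SAD(\prec^+)$ once semi-activity is read in $E$, so $NBD$ is simply the set of semi-active defaults of the extension --- whereas the paper's route leans only on consistency of the individual layers $E_i$ rather than on the termination/saturation property of the construction. One small interpretive point: the paper's phrase ``semi-active w.r.t.\ $\prec^+$'' is ambiguous between semi-active in $E$ and semi-active in some layer $E_i$; you adopt the former reading, but since the two coincide (by monotonicity of the chain and consistency of $E$) this creates no gap.
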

\begin{proof}
See Appendix \ref{app:NBD_char} (page \pageref{app:NBD_char}).
\end{proof}

\noindent Equation \ref{eq:lem_NBD} shows that $NBD$ exists and is unique given an extension $E$. We may write $NBD(E)$ instead of $NBD\pair{\prec^+}$, or just $NBD$ when $E$ is clear from context. Equation \ref{eq:lem_NBD} adapts Reiter's idea of a \emph{generating default} \cite[page 92 Definition 2]{Reiter:80} to PDL. The set $NBD(E)$ can always be calculated in PDL once $E$ is obtained.

\subsubsection{The Representation Theorem}

\noindent In this section we state and prove the representation theorem, which guarantees that the inferences under the argumentation semantics correspond exactly to the inferences in PDL under $\precsim_{SP}$; this is a soundness and completeness result. More specifically, the theorem relates the (unique) stable extension of $\ang{\alg,\:\defeat}$ with the (unique) prioritised default extension of the corresponding LPDT $\ang{D,\:W,\:\prec^+}$.

\begin{thm}\label{thm:rep_thm}
Let $\ang{\alg,\:\attk,\:\precsim_{SP}}$ be the attack graph corresponding\footnote{Recall that we transform $\prec^+$ to $<''$ using Equation \ref{eq:def_rules_pref_order} (page \pageref{eq:def_rules_pref_order}), and then apply Equation \ref{eq:SP_order} (page \pageref{eq:SP_order}) to obtain $<_{SP}$, which gives $\precsim_{SP}$ as in Equation \ref{eq:SP_arg_pref}.} to an LPDT $\ang{D,\:W,\:\prec^+}$, with defeat graph $\ang{\alg,\:\defeat}$.
\begin{enumerate}
\item Let $E$ be the prioritised default extension of $\ang{D,\:W,\:\prec^+}$, $NBD(E)$ be the set of non-blocked defaults (Equation \ref{eq:lem_NBD}, page \pageref{eq:lem_NBD}) and $R:=f\pair{NBD(E)}$ be the (image) set of corresponding defeasible rules (where $f$ is Equation \ref{eq:bij_defaults_def_rules}, page \pageref{eq:bij_defaults_def_rules}), then $Args\pair{R}$ is the stable extension of $\ang{\alg,\:\defeat}$.
\item Let $\ext\subseteq\alg$ be the unique stable extension of $\ang{\alg,\:\defeat}$ by Theorem \ref{thm:total_still_has_unique_stable_extension} (page \pageref{thm:total_still_has_unique_stable_extension}), then $Conc\pair{\ext}$ (Equation \ref{eq:set_arg_attrb}, page \pageref{eq:set_arg_attrb}) is the prioritised default extension of $\ang{D,\:W,\:\prec^+}$.
\end{enumerate}
\end{thm}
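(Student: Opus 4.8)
The plan is to prove part (1) directly by induction on the stages of the PDL extension construction, showing that $Args(R)$ where $R = f(NBD(E))$ coincides with the set $S$ produced by Algorithm \ref{alg:gen_stab_ext}; part (2) then follows from part (1) together with the uniqueness of both extensions (Theorem \ref{thm:total_still_has_unique_stable_extension} on the argumentation side, and the fact that an LPDT has a unique extension). The key observation driving everything is Lemma \ref{lem:NBD_characterisation}: $NBD(E)$ is exactly the set of defaults $\frac{\theta:\phi}{\phi}$ with $\theta \in E$ and $\neg\phi \notin E$, i.e.\ the defaults that are ``applicable and not contradicted'' relative to the final extension. I would first establish that $Conc(Args(R))$ equals $E$ (the PDL extension, viewed via its deductive closure through $\relsymb_s$): the inclusion $Conc(Args(R)) \subseteq E$ holds because every rule in $R$ fires on antecedents already in $E$ and has consequent in $E$ (by the $NBD$ characterisation), and $E$ is closed under $\relsymb_s$; the reverse inclusion holds because every generating default of $E$ lies in $NBD(E) \subseteq R$, so its consequent is a conclusion of some argument in $Args(R)$, and $Args(\es)$ already yields all of $Th(W)$.

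Next I would show $Args(R)$ is attack-cf and stable. Attack-conflict-freeness: if two arguments in $Args(R)$ rebutted each other, then both $\psi$ and $\neg\psi$ would lie in $Conc(Args(R)) = E$ for some $\psi$, contradicting consistency of $E$ (which holds since $W$ is consistent). Stability: take any $B \notin Args(R)$; then $DR(B) \not\subseteq R$, so $B$ uses some defeasible rule $r = f\left(\frac{\theta:\phi}{\phi}\right)$ with $\frac{\theta:\phi}{\phi} \notin NBD(E)$, hence (by Lemma \ref{lem:NBD_characterisation}) either $\theta \notin E$ or $\neg\phi \in E$. I'd pick the subargument $B' \subarg B$ whose top rule is the $<_{SP}$-greatest such ``bad'' rule in $DR(B)$. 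If $\neg\phi \in E = Conc(Args(R))$, there is an attacker $A \in Args(R)$ with $Conc(A) = \neg\phi = \neg Cons(r)$; one then checks $A \not\prec_{SP} B'$ by the same argument as in Theorem \ref{thm:total_still_has_unique_stable_extension}, using that all the rules $A$ depends on are $<_{SP}$-greater than $r$ (since $B'$ was chosen with $r$ maximal among bad rules, all smaller rules it could depend on are ``good'', and the $<_{SP}$ order was precisely engineered so that rules applicable earlier are more preferred). If instead $\theta \notin E$, then no argument can even supply the antecedent $\theta$, but $B'$ does — contradiction with how arguments are built, so this case is vacuous and only the $\neg\phi \in E$ case remains. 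Either way $A \defeat B' $, hence $A \defeat B$, so $Args(R)$ defeats every outside argument and is stable. By uniqueness of stable extensions, $Args(R) = \ext$.

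Finally, part (2) is immediate: $\ext = Args(R)$ by part (1), and $Conc(\ext) = Conc(Args(R)) = E$ was established above, so $Conc(\ext)$ is the prioritised default extension. The main obstacle I anticipate is the stability argument — specifically, verifying $A \not\prec_{SP} B'$ in the inductive case, which requires carefully unwinding the definition of $<_{SP}$ (Equation \ref{eq:SP_order}) to confirm that every defeasible rule occurring in the attacker $A$ is $<_{SP}$-strictly above the bad rule $r$. This is where the whole point of the structure preference order is used, and it mirrors the sub-case analysis ($T \cap R = \es$ versus $T \cap R \neq \es$) already carried out in the proof of Theorem \ref{thm:total_still_has_unique_stable_extension}; the delicate part is matching the ``stage'' at which $r$ becomes applicable in the PDL construction with its rank in the $<_{SP}$ order, which is exactly what Equation \ref{eq:SP_ord_def} was designed to encode. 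A secondary routine-but-fiddly point is handling the strict rules $\relsymb_s$ (full FOL) so that ``$Conc(A) = \neg Cons(r)$ syntactically'' can always be arranged, as flagged in the footnote to Theorem \ref{thm:total_still_has_unique_stable_extension}.
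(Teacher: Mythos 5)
Your overall architecture is sound and close to the paper's: prove $Args(R)$ is attack-cf and defeats every outside argument, then obtain part (2) from uniqueness. (Deriving part (2) from part (1) via Theorem \ref{thm:total_still_has_unique_stable_extension} together with $Conc(Args(R))=E$ is in fact a slightly cleaner route than the paper's, which re-proves $E=Conc(\ext)$ by a direct double inclusion.) However, there is a genuine gap in your stability argument, and it sits exactly where the paper has to work hardest. You dismiss the case $\theta\notin E$ on the grounds that ``no argument can even supply the antecedent $\theta$''. That is false: ASPIC$^+$ arguments are constructed freely from $\mathcal{K}_n$ and $\relsymb_d$ without any reference to $E$, so a subargument $B''\propsubarg B'$ concluding $\theta$ exists whenever $\theta$ is derivable using \emph{blocked} defeasible rules (e.g.\ with $d_1=\frac{\top:a}{a}$ blocked by a higher-priority default concluding $\neg a$, and $d_2=\frac{a:b}{b}$, the argument $[[\top\Rightarrow a]\Rightarrow b]$ exists and supplies the antecedent $a\notin E$). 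So the case is not vacuous, and it is precisely the case the paper treats by a separate ``infinite descent'' argument: such a $B''$ cannot be strict (else $\theta\in E_0$) and cannot lie in $Args(R)$ (else $\theta\in E$), hence it contains a bad rule $s$; either $\neg Cons(s)\in E$, which yields a defeater of $B''$ and hence of $B$, or one descends to a strictly smaller subargument, and well-foundedness of arguments forces termination.

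Your attempted fix --- choosing $B'$ so that its top rule $r$ is the $<_{SP}$-greatest bad rule of $DR(B)$ --- does not close this gap, because $<_{SP}$-maximality among the bad rules of $B$ does not imply that $r$ is the first bad rule applied in the construction of $B$: the subarguments strictly below $B'$ may still use bad rules (namely $<_{SP}$-smaller ones), so $Conc(B'')=\theta$ need not lie in $E$. What would work is to choose $B'$ to be $\subarg$-minimal among subarguments of $B$ whose top rule is bad; then every proper subargument of $B'$ lies in $Args(R)$, so $\theta\in Conc(Args(R))\subseteq E$, and Lemma \ref{lem:NBD_characterisation} then forces $\neg\phi\in E$, reducing everything to your first case. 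Either that minimality choice or the paper's descent is needed; as written, your proof omits the required argument. The rest of your outline (attack-cf via consistency of $E$, and the preference check $A\not\prec_{SP}B'$ via Equation \ref{eq:SP_ord_def} as in Theorem \ref{thm:total_still_has_unique_stable_extension}) matches the paper.
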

\begin{proof}
We will prove each part separately. For the first statement we will show that the given $E$ is the extension generated from $\prec^+$, and $R=f\pair{NBD(E)}$ is the set of defeasible rules corresponding to the defaults used in $E$ together with the semi-active defaults, the set of arguments $Args(R)$ is a stable extension. For the second statement, we let $E$ be the prioritised default extension of $\ang{D,\:W,\:\prec^+}$ and show $Conc(\ext)\subseteq E$ and $E\subseteq Conc(\ext)$.\\

\noindent\textit{1. $Args(R)$ is a stable extension:}\\

\noindent To show that $Args(R)$ is a stable extension, it is sufficient to show $Args(R)$ is attack-cf and that for all arguments $B\notin Args(R)$, there is some argument $A\in Args(R)$ such that $A\defeat B$ \cite[page 26 Definition 2.2.7]{EoA}.\\

\noindent\textit{1.1 $Args(R)$ is attack-cf:}\\

\noindent To show that $Args(R)$ is attack-cf we have to show that no two arguments $A,\:B\in Args(R)$ attack each other. Assume for contradiction that $Args(R)$ is not attack-cf, then $\pair{\exists A,\:B\in Args(R)}A\attk B$. By definition, $DR(B)\subseteq R=f\pair{NBD(E)}$. Suppose $A\attk B$ on $B'\subarg B$, so $\pair{\exists B''\subarg B}\:B'=[B''\Rightarrow\neg Conc(A)]$ by Equation \ref{eq:attack} (page \pageref{eq:attack}). Let $r=TopRule(B')$, so $r=\pair{Conc(B'')\Rightarrow\neg Conc(A)}$. Clearly $r\in DR(B)$, and hence $f^{-1}(r)=\frac{Conc(B''):\neg Conc(A)}{\neg Conc(A)}\in NBD(E)$, and $\neg Conc(A)\in E$.

However, we also have $DR(A)\subseteq R$ as well. Let $\set{r_1,\:\cdots,\:r_n}\subseteq DR(A)$ be the set of defeasible rules such that $W\cup \set{Cons(r_1),\:\cdots,\:Cons(r_n)}\models\neg Conc(A)$ ($Cons$ is defined in Equation \ref{eq:consequent_map_for_rules}, page \pageref{eq:consequent_map_for_rules}). For $1\leq i\leq n$ let $d_i:=f^{-1}\pair{r_i}$. Clearly the corresponding defaults $d_1,\:\ldots,\:d_n\in NBE(E)$ and hence $Cons(r_i)\in E$ for $1\leq i\leq n$. As $E$ is deductively closed and $W\subseteq E$, then $\neg Conc(A)\in E$. Therefore, $E$ is inconsistent - contradiction, because $W$ is consistent. Therefore, $Args(R)$ is attack-cf.\\

\noindent\textit{1.2 $Args(R)$ defeats all other arguments:}\\

\noindent Now we show that $Args(R)$ defeats all other arguments, by showing that for every argument $B\notin Args(R)$ there exists an argument in $A\in Args(R)$ such that $A\hookrightarrow B$. Let $B\notin Args(R)$ be arbitrary, which means there is some $r\in DR(B)$ such that $r\notin R$. Let $B'\subarg B$ be such that $TopRule(B')=r$. Let $r=(\theta\Rightarrow\phi)$, so $r\notin R$ means $f^{-1}\pair{r}=\frac{\theta:\phi}{\phi}\notin NBD(E)$. By Equation \ref{eq:lem_NBD} (page \pageref{eq:lem_NBD}), this means $\theta\notin E$ or $\neg\phi\in E$. This gives us two possibilities: either $\theta\notin E$, or $\neg\phi\in E$.\\

\noindent\textit{1.2.1 The case of $\neg\phi\in E$:}\\

\noindent Assume $\neg\phi\in E$, then $\pair{\exists i\in\nat}\:\neg\phi\in E_i$ by Equations \ref{eq:ext_base} and \ref{eq:ext_ind} (page \pageref{eq:ext_base}). Either $i=0$ or $i>0$.\\

\noindent\textit{1.2.1.1 The case of $i=0$:}\\

\noindent Suppose $i=0$, then $W\models\neg\phi$ from Equation \ref{eq:ext_base}. By compactness, there is some finite $W'\subseteq W$ such that $W'\models\neg\phi$. We can construct an argument $A$ such that $Prem(A)=W'$ and $Conc(A)=\neg\phi$ as there will be appropriate combinations of strict rules in $\relsymb_s$, so $A\rightharpoonup B$. As $DR(A)=\es\subseteq R$, we must have $A\in Args(R)$. Further, as $A$ is strict, $A\hookrightarrow B$ is guaranteed\footnote{This is because $\precsim_{SP}$ is based on the disjoint elitist order, which ranks $\es$ as the greatest element in $\powfin\pair{\relsymb_d}$.} by $\precsim_{SP}$.\\

\noindent\textit{1.2.1.2 The case of $i>0$:}\\

\noindent Now suppose $i>0$, then $\neg\phi\in E_j$ where $j>0$ is the witness for $i$. Let $d_j\in D$ be the default that is $\prec^+$-greatest active in the layer $E_j$, so the set of defaults that conclude $\neg\theta$ (up to the application of deductive rules) is $S:=\set{d_0,\:\ldots,\:d_{j-1}}\subseteq GD_{j-1}\pair{\prec^+}\subseteq NBD(E)$. We can construct an argument $A$ such that $Prem(A)\subseteq W$, $Conc(A)=\neg\phi$ and $DR(A)=f(S)$. Clearly, $DR(A)=f(S)\subseteq f\pair{NBD(E)}=R$ and hence $A\in Args\pair{R}$. It is clear that $A\attk B$, so we need to show $B\not\prec_{SP} A$.

Given that $\neg\phi\in E_j$, it must be the case that $\phi\notin E_j$. Therefore, $r$ is not $\prec^+$-greatest active for all extension layers $E_0,\:\ldots,\:E_{j-1}$. Suppose for contradiction that there are some rules $s\in DR(A)$ that are $<_{SP}$-smaller than $r$. Then by Equation \ref{eq:SP_ord_def} (page \pageref{eq:SP_ord_def}), $r$ must be $\prec^+$-greatest active at some $E_k$ for $k<j-1$, which would then result in $\phi$ in $E_{k+1}$, therefore preventing $\neg\phi\in E_j$ - contradiction. Therefore, $r$ is $<_{SP}$-smaller than all rules in $DR(A)$. By Equation \ref{eq:SP_arg_pref}, we must have $B\prec_{SP} A$, and hence $A\not\prec_{SP} B$, so $A\defeat B$. Therefore, for the case of $\neg\phi\in E$, $Args(R)$ defeats all arguments outside it.\\

\noindent\textit{1.2.2 The case of $\theta\notin E$:}\\

\noindent Now assume $\theta\notin E$. We will show this case is impossible by using the method of infinite descent\footnote{That is, we argue ``backwards'' from a given argument $B\in\alg$ down to its smallest subarguments (the singletons), and derive a contradiction.}.

We start with that $f^{-1}\pair{r}=\frac{\theta:\phi}{\phi}$ and $\theta\notin E$. As $r\in DR(B)$, and $r=TopRule(B')$ (Equation \ref{eq:attack}), there is a $B''\propsubarg B$ that concludes $\theta$. Either $B''$ is strict or it is not strict. Suppose it is strict, then $Prem(B'')\models\theta$, so by monotonicity $W\models\theta$ because $Prem(B'')\subseteq W$. This means $\theta\in E_0\subseteq E$ -- contradiction. Therefore, $B''$ cannot be strict.

Furthermore, either $B''\in Args(R)$ or not. If $B''\in Args(R)$, then $DR(B'')\subseteq R$, which means $\theta\in E_i\subseteq E$, where $i$ is the level such that all defaults corresponding to $DR(B'')$ have been applied (Equation \ref{eq:ext_ind}) -- contradiction. Therefore, $B''\notin Args(R)$.

This means $DR(B'')\not\subseteq R$, which means there is some rule, $s\in DR(B'')$, such that $f^{-1}(s)\notin NBD(E)$. Suppose $s=\frac{\theta':\phi'}{\phi'}$. There are two possibilities: either $\theta'\notin E$ or $\neg\phi'\in E$. If the latter, then we can construct an argument $A'$ concluding $\neg\phi'$ which defeats $B''$ as in the case when $\neg\phi\in E$. If the former, we can argue as in the previous paragraph to get a strictly smaller argument $B'''\subset_\text{arg} B''$ which concludes $\theta'$.

We cannot continue this process forever because arguments are well-founded. Eventually, we must stop at a strict subargument of $B''$, which gives a contradiction. Therefore, we cannot have the case $\theta\notin E$. Therefore, this second case is impossible, and the first case means that for every argument $B\notin Args(R)$ there is some $A\in Args(R)$ such that $A\hookrightarrow B$. This proves the first statement of the representation theorem.\\

\noindent\textit{2. $Conc(\ext)=E$:}\\

\noindent We show that $Conc(\ext)\subseteq\LForm$ is the prioritised default extension of our LPDT $\ang{D,\:W,\:\prec^+}$. We let $E$ be the prioritised default extension of $\ang{D,\:W,\:\prec^+}$ and show $E=Conc(\ext)$.\\

\noindent\textit{2.1 $Conc(\ext)\subseteq E$:}\\

\noindent We first show $Conc(\ext)\subseteq E$. Let $\theta\in Conc(\ext)$, which means there is some argument $A\in\ext$ where $Conc(A)=\theta$. Either $A$ is strict or it is not.

If $A$ is strict, then as $Prem(A)\subseteq W$, we must have $W\models\theta$ by monotonicity. Therefore, $W\in E_0\subseteq E$ by Equation \ref{eq:ext_base}, and hence $\theta\in E$.

If $A$ is not strict, then for some $k\in\nat^+$, $DR(A):=\set{d_1,\:\ldots,\:d_k}$. None of these defaults give rise to a conflict because $\ext$ is a stable extension. Take the smallest $i\in\nat$ such that sufficiently many corresponding defeasible rules are applied from $DR(A)$ to conclude $\theta$ in $E_{i+1}$ from $W$. Therefore, $\theta\in E_{i+1}\subseteq E$ and hence $\theta\in E$. Therefore, in either case, $Conc(\ext)\subseteq E$.\\

\noindent\textit{2.2 $E\subseteq Conc(\ext)$:}\\

\noindent We now show $E\subseteq Conc(\ext)$. Let $\theta\in E$ so $\pair{\exists i\in\nat}\:\theta\in E_i$ by Equation \ref{eq:ext_ind} (page \pageref{eq:ext_ind}). We have to show there is some argument $A\in\ext$ such that $Conc(A)=\theta$. Either $i=0$ or $i>0$.\\

\noindent\textit{2.2.1 The case of $i=0$:}\\

\noindent Suppose $i=0$, which means $\theta\in E_0\Leftrightarrow W\models\theta$. By compactness, we have some finite $W'\subseteq W$ such that $W'\models\theta$. We can build a strict argument $A$ with $Prem(A)=W'$ and conclusion $\theta$ as $\relsymb_s$ has all the appropriate rules of inference in FOL. Assume for contradiction $A\notin\ext$, then there exists some $B\in\ext$ defeating $A$, which is impossible because $A$ is strict. Therefore, $A\in\ext$ and $Conc(A)=\theta$, so $\theta\in Conc(\ext)$ by Equation \ref{eq:set_arg_attrb} (page \pageref{eq:set_arg_attrb}).\\

\noindent\textit{2.2.2 The case of $i>0$:}\\

\noindent Now suppose $i>0$. As $\theta\in E_i$, let $d_j$ for $0\leq j\leq i-1$ be the $\prec^+$-greatest active default in $E_j$. We can use the corresponding defeasible rules $r_j=f\pair{d_j}$ to build an argument $A$ such that $Prem(A)\subseteq W$, $Conc(A)=\theta$ and $DR(A)\subseteq\set{r_j}_{j=0}^{i-1}$. Now we need to show $A\in\ext$.

Assume for contradiction that $A\notin\ext$, then there is some $B\in\ext$ such that $B\defeat A$. So there is some defeasible rule $r$ in $A$ that is necessary to conclude $\theta$, such that $Conc(B)=\neg Cons(r)$. Either $B$ is strict or not.\\

\noindent\textit{2.2.2.1 If $B\in\ext$ is strict:}\\

\noindent Assume that $B$ is strict, then $Conc(B)\in E_0\subseteq E_i$, which must conflict with at least one of the rules in $DR(A)$. If this is so, then the corresponding defaults to these rules cannot be $\prec^+$-active in the appropriate $E_j$'s, and hence $A$ cannot be constructed - contradiction. Therefore, $B$ cannot be strict.\\

\noindent\textit{2.2.2.2 If $B\in\ext$ is not strict:}\\

\noindent Assume that $B$ is not strict, then $DR(B)\neq\es$ and, as $B\defeat A$, there is some $r\in DR(A)-DR(B)$ such that for all $s\in DR(B)-DR(A)$, $r<_{SP} s$ by Equation \ref{eq:SP_arg_pref} (page \pageref{eq:SP_arg_pref}). By Equation \ref{eq:SP_ord_def}, even if $r\in DR(A)-DR(B)$ can be added to the arguments as a defeasible rule, every single $s\in DR(B)-DR(A)$ is $<''$-more preferred than $r$. Therefore, the corresponding defaults in $DR(B)-DR(A)$ are $\prec^+$-greatest active in $E_j$ for $j<i$, the application of which would block $r$ from being applied. This contradicts the claim that it is possible to construct $A$ in order to conclude $\theta$. Therefore, $B$ cannot exist.

Therefore, $A\in\ext$, and given that $Conc(A)=\theta$, we have $\theta\in Conc(\ext)$ by Equation \ref{eq:set_arg_attrb}. As $\theta$ is arbitrary, we conclude $E\subseteq Conc(\ext)$ and hence $E=Conc(\ext)$. This proves that $Conc(\ext)$ is the prioritised default extension of $\ang{D,\:W,\:\prec^+}$.
\end{proof}

The representation theorem allows us to formally interpret the inferences of PDL as the conclusions of justified arguments, and the conclusions of the justified arguments are exactly those of the corresponding PDT. Therefore, by the representation theorem, PDL is sound and complete with respect to its argumentation semantics.

\subsection{Summary}

In this section, we have provided an instantiation of ASPIC$^+$ to PDL. We can construct an ASPIC$^+$ attack graph from a LPDT. The subtlety then is to find a suitable argument preference relation such that it gives a correspondence between the conclusions of the justified arguments, and the extensions of the PDT. We showed that none of the ASPIC$^+$ orders gives a correspondence, and even the intuitive disjoint elitist order does not give a correspondence either. We then devised the structure-preference order which mimics how defaults are added in PDL when constructing extensions. We then showed that ASPIC$^+$ defeat graphs that have been constructed by LPDTs have unique stable extensions. The representation theorem states that under the structure-preference order the inferences correspond exactly - this is a soundness and completeness result.

\section{On the Normative Rationality of this Instantiation}\label{sec:normative_rationality_current}

We have so far instantiated ASPIC$^+$ to PDL through an appropriate choice of the underlying logic, defeasible rules and preferences. We have proven that the inferences of this instantiation correspond exactly in Theorem \ref{thm:rep_thm} (page \pageref{thm:rep_thm}). In this section, we will discuss current work on establishing whether this instantiation is normatively rational\footnote{We say ``normatively rational'' to indicate that the type of rationality we are considering is prescriptive, not descriptive.}.

\subsection{Rational Instantiations of ASPIC\texorpdfstring{$^+$}{+}}

ASPIC$^+$ can in principle be instantiated into any concrete argumentation theory, but it is desirable for such instantiations to be normatively rational. For example, a normatively rational instantiation of ASPIC$^+$ would guarantee that the conclusions of the ultimately justified arguments are consistent. This idea of normative rationality for structured argumentation frameworks have been formalised by \cite{Caminada:07}. Let $\ext$ denote the set of justified arguments. The rationality postulates informally state \cite[Section 4.2]{sanjay:13}:
\begin{enumerate}
\item $\ext$ is subargument-closed.
\item $Conc\pair{\ext}$ is closed under strict rules.
\item $Conc\pair{\ext}$ is consistent.
\item Under $\relsymb_s$, the closure under strict rules of $Conc\pair{\ext}$ is consistent.
\end{enumerate}

\noindent ASPIC$^+$ provides sufficient conditions for its instantiations to be rational. They are:
\begin{enumerate}
\item The argumentation theory (i.e. the argumentation system and the knowledge base) is \textit{well-defined} \cite[page 369, Definition 12]{sanjay:13},
\item and the argument preference relation $\precsim$ is \textit{reasonable} \cite[page 372, Definition 18]{sanjay:13}.
\end{enumerate}

\subsection{Well-Definedness of this Instantiation}

A well-defined classical logic ASPIC$^+$ instantiation need only satisfy:
\begin{enumerate}
\item \textit{Closure under transposition:} If the rule $\pair{\theta_1,\:\ldots,\:\theta_n\to\phi}\in\relsymb_d$ for $n\in\nat$, then for all $1\leq i\leq n$,\[\pair{\theta_1,\:\ldots,\:\theta_{i-1},\:\neg\phi,\:\theta_{i+1},\:\ldots,\:\theta_n\to\neg\theta_i}\in\relsymb_s\:.\]This is satisfied because $\relsymb_s$ has all the rules of proof of FOL.
\item \textit{Axiom consistency:} This means $\relsymb_n$ is consistent, so we assert that $W$ is consistent\footnote{The argumentation semantics for PDL will still be valid for an inconsistent $W$, but normative rationality excludes this case by requiring $W$ to be consistent}. Given that we are considering LPDTs with consistent $W$, axiom consistency is satisfied.
\item \textit{Well-formed:} This is a property concerning asymmetric contrary functions, and is vacuously satisfied for instantiations with only a symmetric contrary function, like classical negation $\neg$.
\end{enumerate}

\noindent Therefore, this ASPIC$^+$ instantiation into PDL is well-defined.

\subsection{Reasonableness of the Argument Preference}

One further requirement for an ASPIC$^+$ instantiation to be normatively rationality is that the argument preference relation, $\precsim$, is \textit{reasonable}. For a more detailed discussion of why this is important, see \cite[Section 4]{sanjay:13}. In this subsection, we are concerned with whether the structure preference order, $\precsim_{SP}$, is reasonable.

\subsubsection{Strict Extensions}\label{sec:strict_extensions_of_sets_of_arguments}

(From \cite[page 370 Definition 17]{sanjay:13}) Let $S\subseteq_\text{fin}\alg$. We define a \textit{strict extension of $S$} to be any \textit{argument} $A\in\alg$ that satisfies
\begin{align}\label{eq:strict_extensions}
&DR(A)=DR(S),\:Prem_p(A)=Prem_p(S)\:,\nonumber\\
&SR(A)\supseteq SR(S)\text{ and }Prem_n(A)\supseteq Prem_n(S)\:.
\end{align}
The intuition is that given a \textit{finite} set of arguments $S$, we combine all of these arguments into a bigger argument $A$ only by adding strict rules and axiom premises, leaving the fallible information unchanged. The set $S$ must be finite because the requirement is that \textit{all} arguments in $S$ must be combined into a \textit{single} argument $A$. We may use the notation $S^+$ ($\in\alg$) instead of $A$ to emphasise that $S^+$ is a strict extension obtained by extending all arguments of the set $S$.

Note that given $S$, $S^+$ may not exist, nor must it be unique if it does exist. In the former case, there may not be suitable strict rules whose antecedents are in $Conc(S)$ such that $S^+$ is well-defined, in the sense that $S^+$ is a single argument. In the latter case, $\es^+$ exists when $\mathcal{K}_n\neq\es$ but is, by definition, any strict and firm argument (which includes singleton arguments). Clearly, every argument is its own strict extension.

We can define \textit{the set of strict extensions of (a finite set of arguments) $S$} to be
\begin{align}
StExt(S):=\set{A\in\alg\:\vline\:\text{Equation \ref{eq:strict_extensions} is true for }A\:.}
\end{align}
Clearly, $\set{A}\subseteq StExt\pair{\set{A}}$. 

\begin{eg}\label{eg:strict_ext_no_strict_rules}
Consider an instantiation where $\relsymb_s=\es$, i.e. there are \textit{no strict rules}. Given an arbitrary \textit{finite} set $S$ of arguments, what are the strict extensions in this case?
\begin{enumerate}
\item If $S=\es$, then $S^+$ is only defined when $\mathcal{K}_n\neq\es$, and $S^+$ is any of the (strict) singleton arguments. If $\mathcal{K}_n=\es$, then $\es^+=*$ and hence $StExt(S)=\es$.
\item If $|S|=1$, say $S=\set{A}$, then $StExt(S)=\set{A}$.
\item If $|S|\geq 2$, then $StExt(S)=\es$ because there are no strict rules to join multiple arguments (or additional axiom premises that may be introduced) in $S$ together into one argument.
\end{enumerate}
Therefore, when $\relsymb_s=\es$, the only case where the strict extension of a set of arguments $S$ is defined is when $S$ is singleton.
\end{eg}

\subsubsection{Reasonableness Defined}

\begin{rem2}
(From \cite[page 372, Definition 18]{sanjay:13}) An argument preference relation $\precsim$ on $\alg$ is \textit{reasonable} iff for all $A,B\in\alg$ and $\es\neq S\subseteq_\text{fin}\alg$,
\begin{enumerate}
\item (R1) If $A$ is strict and firm, and $B$ is neither strict nor firm, then $B\prec A$.
\item (R2) If $A$ is strict and firm, then $A\not\prec B$.
\item (R3) If $A\not\prec B$ then $\set{A}^+\not\prec B$. If $B\not\prec A$ then $B\not\prec\set{A}^+$ (for appropriate strict extensions).
\item (R4) It is \textit{not} the case that
\begin{align}\label{eq:reas_acyclic}
\pair{\forall A\in S}\pair{\exists B\in StExt\pair{S-\set{A}}}B\prec A\:.
\end{align}
\end{enumerate}
\end{rem2}

\noindent The intuition is as follows. (R1) and (R2) state that strict and firm arguments must be maximally preferred. (R3) states that strict extensions do not change the relative preference of arguments. (R4) is an acyclicity condition as illustrated in the following example:

\begin{eg}\label{eg:acyclic_illustrated}
Suppose we have a classical logic instantiation of ASPIC$^+$ where $\relsymb_s$ have all strict rules and $\neg$ is the only (symmetric) contrary function. Let $\relsymb_d=\set{r_1,r_2,r_3}$ such that $r_1:=(\top\Rightarrow a)$, $r_2:=(\top\Rightarrow b)$ and $r_3:=(\top\Rightarrow\neg(a\wedge b))$. Let $A:=[\top\Rightarrow a]$, $B:=[\top\Rightarrow b]$ and $C:=[\top\Rightarrow\neg(a\wedge b)]$. We also define $F:=[A,B\to(a\wedge b)]\in StExt\pair{\set{A,B}}$, $E:=[C,A\to\neg b]\in StExt\pair{\set{C,A}}$ and $D:=[B,C\to\neg a]\in StExt\pair{\set{B,C}}$. We illustrate these arguments in Figure \ref{figure:disj_eli_works}.

\begin{figure}[ht]
\begin{center}
\includegraphics[height=3.82cm,width=6cm]{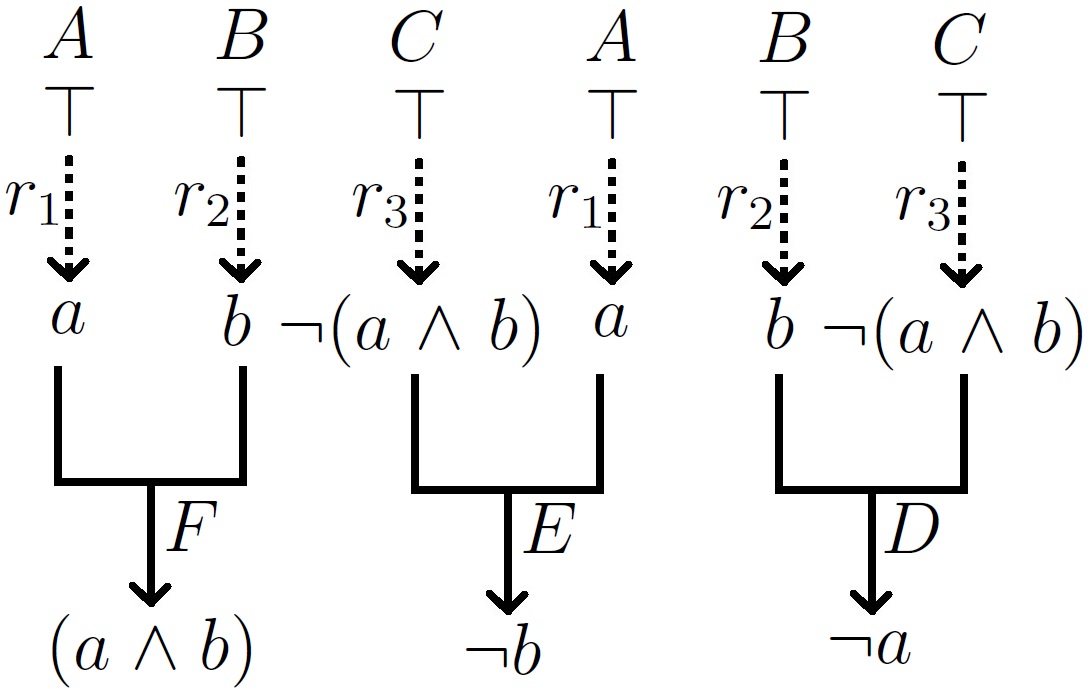}
\caption{The arguments in Example \ref{eg:acyclic_illustrated}.}
\label{figure:disj_eli_works}
\end{center}
\end{figure}

Clearly, $DR(A)=\set{r_1}$, $DR(B)=\set{r_2}$, $DR(C)=\set{r_3}$, $DR(D)=\set{r_2,r_3}$, $DR(E)=\set{r_3,r_1}$ and $DR(F)=\set{r_1,r_2}$.

Now assume (R4) is false, which means Equation \ref{eq:reas_acyclic} is \textit{true}. This means, for $S=\set{A,\:B,\:C}$,
\begin{align}\label{eq:acyclic_cond}
& D\prec A,\:E\prec B,\:F\prec C\nonumber\\
\Leftrightarrow&\set{r_2,r_3}\orddeneq\set{r_1},\:\set{r_3,r_1}\orddeneq\set{r_2},\:\set{r_1,r_2}\orddeneq\set{r_3}\nonumber\\
\Leftrightarrow&\sqbra{r_2<''r_1\text{ or }r_3<''r_1},\:\sqbra{r_3<''r_2\text{ or }r_1<''r_2},\:\sqbra{r_1<''r_3\text{ or }r_2<''r_3}\:,
\end{align}
but no \textit{total} order $<''$ on the set $\set{r_1,r_2,r_3}$ can satisfy any of the above eight conditions (Equation \ref{eq:acyclic_cond}), as a cycle will always be created. Therefore, $\precsim$ based on the disjoint elitist order satisfies (R4) in this example.

\end{eg}

\subsubsection{Is the Structure Preference Order Reasonable?}

\begin{lem}\label{lem:disj_eli_almost_reas}
The structure preference order over arguments, $\precsim_{SP}$ (Equation \ref{eq:SP_arg_pref}, page \pageref{eq:SP_arg_pref}), satisfies (R1) to (R3) in the definition of reasonableness\footnote{In fact, this result holds for the disjoint elitist order in general, i.e. with any underlying strict total order $<''$. Recall that $\precsim_{SP}$ is just the disjoint elitist order with $<_{SP}$ (Equation \ref{eq:SP_order}, page \pageref{eq:SP_order}) as the underlying strict total order.}.
\end{lem}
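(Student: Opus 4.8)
The plan is to exploit the single structural fact that $\precsim_{SP}$ is, by Equation~\ref{eq:SP_arg_pref}, a function of the defeasible-rule sets $DR(A),DR(B)$ alone (it is the disjoint elitist order of Equation~\ref{eq:disj_eli} with underlying strict total order $<_{SP}$), and that strict arguments are exactly those with $DR=\es$. Each of (R1), (R2), (R3) then collapses to an elementary manipulation of set differences and of the (often vacuous) quantifiers in Equation~\ref{eq:SP_arg_pref}, with no induction or case analysis needed. In particular I would emphasise at the outset that $\precsim_{SP}$ is blind to premises, so firmness will play no role in the argument — which is also why the footnote's claim that the lemma holds for the disjoint elitist order with \emph{any} underlying $<''$ is immediate.

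Concretely: for (R1), assume $A$ is strict (so $DR(A)=\es$) and $B$ is not strict (so $DR(B)\neq\es$); then in the instance ``$B\prec_{SP}A$'' of Equation~\ref{eq:SP_arg_pref} the inner universal quantifier ranges over $DR(A)-DR(B)=\es$ and is vacuously true, while $DR(B)-DR(A)=DR(B)\neq\es$ supplies the outer witness, giving $B\prec_{SP}A$. For (R2), with $DR(A)=\es$ the outer existential in ``$A\prec_{SP}B$'' quantifies over $DR(A)-DR(B)=\es$ and so fails, giving $A\not\prec_{SP}B$. For (R3), let $\set{A}^{+}\in StExt(\set{A})$; by Equation~\ref{eq:strict_extensions} we have $DR(\set{A}^{+})=DR(\set{A})=DR(A)$, i.e. $\set{A}^{+}\approx A$, so replacing $A$ by $\set{A}^{+}$ on either side of $\prec_{SP}$ leaves the relation unchanged ($\set{A}^{+}\prec_{SP}B\Leftrightarrow A\prec_{SP}B$ and $B\prec_{SP}\set{A}^{+}\Leftrightarrow B\prec_{SP}A$), which yields both implications of (R3) for every admissible choice of $\set{A}^{+}$.

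There is no genuine obstacle in this lemma; the only care needed is in correctly unwinding the definitions of ``strict'', ``firm'' and ``strict extension'', and in observing that $\precsim_{SP}$ does not see the premise sets. The hard part is precisely what this statement omits: condition (R4), the acyclicity requirement, which cannot be established for the disjoint elitist order in general (the obstruction being of the kind exhibited in Example~\ref{eg:acyclic_illustrated}), and so must be handled — or circumvented — separately.
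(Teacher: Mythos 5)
Your proof is correct and is essentially the paper's own argument, which likewise disposes of (R1) and (R2) by noting that $\es$ is the $\ordde$-greatest element (i.e.\ $DR(A)=\es$ makes the inner quantifier in $B\prec_{SP}A$ vacuous and the outer quantifier in $A\prec_{SP}B$ empty) and of (R3) by noting that strict extensions leave $DR$ unchanged; your version merely makes the quantifier bookkeeping explicit, and your observation that the argument never touches $<_{SP}$ itself is exactly why the paper's footnote holds. One small correction to your closing remark: Example \ref{eg:acyclic_illustrated} is not an obstruction to (R4) but an instance where (R4) \emph{is} verified for the disjoint elitist order (no total order realises the cyclic conditions), and the paper leaves the general case of (R4) open as a conjecture rather than asserting it fails.
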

\begin{proof}
We have:
\begin{enumerate}
\item (R1) This follows because $\es$ is the greatest element under the disjoint elitist order.
\item (R2) This follows for the same reason as (R1).
\item (R3) This follows because strict extensions do not change the set of defeasible rules.
\end{enumerate}
This shows the result.
\end{proof}

So to show that $\precsim_{SP}$ is reasonable, we need to show (R4) is true. This is work in progress and we now provide some special cases.

\begin{cor}\label{cor:R2_implies_R4_for_S_singleton}
If $|S|\leq 2$, then $\precsim_{SP}$ is reasonable\footnote{This result holds for the disjoint elitist order more generally.}.
\end{cor}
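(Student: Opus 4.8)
The plan is to invoke Lemma~\ref{lem:disj_eli_almost_reas}, which already gives (R1), (R2) and (R3) for $\precsim_{SP}$, and then verify (R4) directly, exploiting the restriction $|S|\leq 2$. Recall that (R4) asks that Equation~\ref{eq:reas_acyclic} \emph{fail} for every $\es\neq S\subseteq_\text{fin}\alg$; since here $|S|\leq 2$ and $S\neq\es$, I would split into the cases $|S|=1$ and $|S|=2$.

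For $|S|=1$, writing $S=\set{A}$, we have $S-\set{A}=\es$, so Equation~\ref{eq:reas_acyclic} collapses to the single clause $\pair{\exists B\in StExt(\es)}B\prec_{SP}A$. Now $StExt(\es)$ is either empty (when $\mathcal{K}_n=W=\es$) or consists entirely of strict and firm arguments (the various $\es^+$); in the second situation (R2), already established, gives $B\not\prec_{SP}A$ for every such $B$. In either case the clause is false, so Equation~\ref{eq:reas_acyclic} is false and (R4) holds. For $|S|=2$, writing $S=\set{A_1,A_2}$, Equation~\ref{eq:reas_acyclic} becomes the conjunction of $\pair{\exists B_1\in StExt(\set{A_2})}B_1\prec_{SP}A_1$ and $\pair{\exists B_2\in StExt(\set{A_1})}B_2\prec_{SP}A_2$. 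The key point is that $\precsim_{SP}$ is the disjoint elitist order (Equation~\ref{eq:SP_arg_pref}), so the preference between two arguments depends only on their sets of defeasible rules, and by Equation~\ref{eq:strict_extensions} any $B\in StExt(\set{A})$ satisfies $DR(B)=DR(A)$, hence $B\approx A$ (Equation~\ref{eq:arg_pref_equivalence}). Therefore the conjunction above is equivalent to $A_2\prec_{SP}A_1$ \emph{and} $A_1\prec_{SP}A_2$, i.e.\ $DR(A_2)\ordneq_{SP}DR(A_1)$ and $DR(A_1)\ordneq_{SP}DR(A_2)$ simultaneously. Taking the witnesses $x\in DR(A_1)-DR(A_2)$ and $x'\in DR(A_2)-DR(A_1)$ to these two strict comparisons (each such set is nonempty precisely because the comparison holds), the first comparison applied at $x'$ gives $x<_{SP}x'$ and the second applied at $x$ gives $x'<_{SP}x$, contradicting that $<_{SP}$ is a strict total order. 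So Equation~\ref{eq:reas_acyclic} fails again, (R4) holds, and together with Lemma~\ref{lem:disj_eli_almost_reas} this shows $\precsim_{SP}$ is reasonable whenever $|S|\leq 2$.

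The argument is short, so there is no real obstacle; the only point I would be careful about is the reduction in the $|S|=2$ case — that argument preference genuinely factors through $DR$, and that $StExt(\set{A_i})$ is never vacuously empty (it always contains $A_i$ itself), so that no hidden ``empty existential'' is being swept under the rug. The genuinely hard case is $|S|\geq 3$, where — as Example~\ref{eg:acyclic_illustrated} indicates — one must rule out cycles of length three or more of the form in Equation~\ref{eq:acyclic_cond}; that is where the global structure of $<_{SP}$ as a total order must be used, and it is left as ongoing work.
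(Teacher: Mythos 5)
Your proof is correct and follows essentially the same route as the paper: the $|S|=1$ case is reduced to (R2) exactly as in the paper, and the $|S|=2$ case is reduced to the impossibility of $A_1\prec_{SP}A_2\prec_{SP}A_1$. The only cosmetic difference is that where the paper obtains this two-cycle contradiction by citing the contrapositive of (R3) together with irreflexivity, you re-derive the same facts inline (that strict extensions preserve $DR$ and hence $\approx$, and that $\ordneq_{SP}$ is asymmetric because $<_{SP}$ is a strict total order), which is a sound and if anything more self-contained version of the same argument.
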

\begin{proof}
Assume for contradiction that $\precsim_{SP}$ is not reasonable, i.e. Equation \ref{eq:reas_acyclic} is \textit{true}.

If $|S|=1$, then $S=\set{A}$ (say) so there is a $B\in StExt\pair{S-\set{A}}=StExt(\es)$ such that $B\prec_{SP}A$, so there is a strict (and firm) argument $B$ that is strictly less preferred than $A$, which contradicts (R2).

If $|S|=2$, then $S=\set{A,B}$ (say), so there is a $C\in StExt\pair{\set{B}}$ such that $C\prec_{SP} A$, and there is a $D\in StExt\pair{\set{A}}$ such that $D\prec_{SP} B$. But by the contrapositive of (R3) this means $A\prec_{SP} B\prec_{SP} A$, which contradicts irreflexivity.

Therefore, for the case of $|S|\leq 2$, $\precsim_{SP}$ is reasonable.
\end{proof}

\begin{lem}
If $\relsymb_s=\es$, then $\precsim_{SP}$ is reasonable\footnote{This result holds for the disjoint elitist order more generally.}.
\end{lem}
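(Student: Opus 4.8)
The plan is to reuse Lemma \ref{lem:disj_eli_almost_reas}, which already establishes that $\precsim_{SP}$ satisfies (R1), (R2) and (R3). Hence the only remaining obligation is the acyclicity condition (R4): for every nonempty finite $S\subseteq\alg$ it must \emph{not} be the case that $\pair{\forall A\in S}\pair{\exists B\in StExt\pair{S-\set{A}}}B\prec_{SP}A$, i.e. Equation \ref{eq:reas_acyclic} must fail.

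First I would dispose of the small cases $|S|\leq 2$ simply by invoking Corollary \ref{cor:R2_implies_R4_for_S_singleton}, which already handles them for any disjoint elitist argument preference, in particular for $\precsim_{SP}$. So it remains to treat $|S|\geq 3$. The key observation here is Example \ref{eg:strict_ext_no_strict_rules}: when $\relsymb_s=\es$ there are no strict rules with which to combine two or more distinct arguments into a single argument, so $StExt(T)=\es$ whenever $|T|\geq 2$. Now fix any $A\in S$. Since $|S|\geq 3$, the set $S-\set{A}$ has at least two elements, hence $StExt\pair{S-\set{A}}=\es$. Consequently the inner existential $\pair{\exists B\in StExt\pair{S-\set{A}}}B\prec_{SP}A$ is vacuously false; as $A\in S$ was arbitrary, the universally quantified statement in Equation \ref{eq:reas_acyclic} fails, which is precisely (R4). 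Together with Lemma \ref{lem:disj_eli_almost_reas} and the $|S|\leq 2$ case, this shows $\precsim_{SP}$ is reasonable when $\relsymb_s=\es$.

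I do not expect any genuine obstacle: the result is essentially a degeneracy forced by the absence of strict rules, so the hypothesis of the acyclicity condition can never be met for $|S|\geq 3$. The only care needed is to read the structure of $StExt$ correctly off Example \ref{eg:strict_ext_no_strict_rules} and not to confuse the vacuously-true situation "$StExt(S-\set{A})=\es$" with an actual preference comparison. (The substantive case — verifying (R4) for arbitrary $\relsymb_s$ — is the open problem noted in the text and is not addressed by this argument.)
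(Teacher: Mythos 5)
Your proposal is correct and follows essentially the same route as the paper's own proof: reduce to (R4) via Lemma \ref{lem:disj_eli_almost_reas}, dispatch $|S|\leq 2$ with Corollary \ref{cor:R2_implies_R4_for_S_singleton}, and then observe via Example \ref{eg:strict_ext_no_strict_rules} that $StExt\pair{S-\set{A}}=\es$ when $|S|\geq 3$ and $\relsymb_s=\es$, so the existential in Equation \ref{eq:reas_acyclic} is vacuously false. Your write-up is in fact slightly more careful in spelling out why the emptiness of the strict extension set falsifies the quantified condition.
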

\begin{proof}
From Lemma \ref{lem:disj_eli_almost_reas}, it is sufficient to show $\precsim_{SP}$ satisfies (R4). Let $\es\neq S\subseteq_\text{fin}\alg$ be arbitrary. We need to show it is \textit{not} the case that, for any $A\in S$, there is some $B\in StExt\pair{S-\set{A}}$ such that $B\prec_{SP} A$. Assume for contradiction that it is true, and let $A\in S$ be arbitrary. We know from Corollary \ref{cor:R2_implies_R4_for_S_singleton} (page \pageref{cor:R2_implies_R4_for_S_singleton}) we need to show this for $|S|>2$. But from Example \ref{eg:strict_ext_no_strict_rules} (page \pageref{eg:strict_ext_no_strict_rules}), in the case of no strict rules, $StExt(S-\set{A})=\es$ for $|S|>2$, which means Equation \ref{eq:reas_acyclic} is false, therefore (R4) holds.
\end{proof}

\begin{lem}
If $\es\neq S\subseteq_\text{fin}\alg$ is such that for all $A,\:B\in S$,
\begin{align}
A\neq B\implies DR(A)\cap DR(B)=\es\:,
\end{align}
then $\precsim_{SP}$ is reasonable\footnote{This result holds for the disjoint elitist order more generally.}.
\end{lem}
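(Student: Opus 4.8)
The plan is to invoke Lemma~\ref{lem:disj_eli_almost_reas}, which already establishes (R1)--(R3) for $\precsim_{SP}$, so that it suffices to verify (R4) for the $S$ in the hypothesis, i.e.\ that Equation~\ref{eq:reas_acyclic} is false. I would argue by contradiction, assuming
\[
\pair{\forall A\in S}\pair{\exists B_A\in StExt\pair{S-\set{A}}}B_A\prec_{SP}A\:,
\]
which in particular forces $StExt\pair{S-\set{A}}\neq\es$, so that each witness $B_A$ genuinely exists.

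The key simplification comes from the disjointness hypothesis. Each $B_A\in StExt\pair{S-\set{A}}$ has $DR(B_A)=DR\pair{S-\set{A}}=\bigcup_{C\in S,\:C\neq A}DR(C)$, and since $DR(A)\cap DR(C)=\es$ for every $C\neq A$, this union is disjoint from $DR(A)$. Hence $DR(B_A)-DR(A)=DR(B_A)$ and $DR(A)-DR(B_A)=DR(A)$, so by Equation~\ref{eq:SP_arg_pref} the statement $B_A\prec_{SP}A$ reduces to: there is some $x\in DR(B_A)$ with $x<_{SP}y$ for all $y\in DR(A)$. Note also that $DR(B_A)\subseteq DR(S)$.

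Now set $R:=DR(S)=\bigcup_{C\in S}DR(C)$. If $R=\es$ then every $DR(A)=\es$ and every $DR(B_A)=\es$, so no $B_A\prec_{SP}A$ can hold, contradiction. Otherwise $R$ is a non-empty finite subset of $\relsymb_d$ and therefore has a $<_{SP}$-least element $m$ (using that $<_{SP}$ is a strict total order on $\relsymb_d$); pick any $C_0\in S$ with $m\in DR(C_0)$, so in particular $DR(C_0)\neq\es$. Instantiating the contradiction hypothesis at $A=C_0$ produces $x\in DR(B_{C_0})\subseteq R$ with $x<_{SP}y$ for all $y\in DR(C_0)$; taking $y:=m$ gives $x<_{SP}m$ while $x\in R$, contradicting the minimality of $m$. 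In either case Equation~\ref{eq:reas_acyclic} is false, so (R4) holds and $\precsim_{SP}$ is reasonable.

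The main point to be careful about --- and the only place the argument could go wrong --- is the bookkeeping around degenerate cases: strict arguments lying in $S$ (where $DR(A)=\es$), and the subcase $S-\set{A}=\es$, where $StExt(\es)$ is either empty or consists of strict firm arguments. Both are absorbed by the split on whether $R=\es$, together with the observation $DR(B_A)\subseteq R$, so no separate case analysis on $|S|$ is needed; in particular this subsumes Corollary~\ref{cor:R2_implies_R4_for_S_singleton}. The only feature of $<_{SP}$ used is that it is a strict total order, so the same proof works for the disjoint elitist order with any strict total underlying order.
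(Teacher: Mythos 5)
Your proof is correct and follows essentially the same route as the paper's: both arguments pick the least element of $DR(S)$ under the underlying strict total order, locate the (by disjointness, unique) argument of $S$ containing it, and contradict the assumed strict preference of its purported witness in $StExt$ of the remainder. The only difference is that you explicitly dispose of the degenerate case $DR(S)=\es$, which the paper's proof passes over silently when it asserts the existence of a least element.
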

\begin{proof}
Clearly $DR(S)\subseteq_\text{fin}\relsymb_d$, because $DR(A)\subseteq_\text{fin}\relsymb_d$ for each $A\in\alg$ and $DR(S)$ is a union of finite sets. Given the strict toset $\ang{\relsymb_d,\:<''}$, we also have a \textit{finite} strict toset $\ang{DR(S),\:<''}$. This has a $<''$-least element $r_0\in DR(S)$. By definition, $\pair{\exists A\in S}r_0\in DR(A)$ (Equation \ref{eq:set_arg_attrb}, page \pageref{eq:set_arg_attrb}). Call the witness to $\exists$ $A_0$, say.

Note that for all \textit{other} arguments in $S$ distinct from $A_0$, $r_0$ would not be amongst their defeasible rules. Now assume for contradiction (R4) is false, so Equation \ref{eq:reas_acyclic} is \textit{true}. We instantiate $\forall$ to $A_0$ and get
\begin{align}
\pair{\exists B\in StExt\pair{S-\set{A_0}}}B\prec_{SP} A_0\:.
\end{align}
Let $B_0$ be the witness to $\exists$. From the definition of the disjoint elitist order, $B_0\prec_{SP} A_0$ means that
\begin{align*}
&\pair{\exists x\in DR(B_0)-DR(A_0)}\pair{\forall y\in DR(A_0)-DR(B_0)}x<''y\\
\implies&\pair{\exists x\in DR(B_0)}x<'' r_0\:,
\end{align*}
because $r_0\in DR(A_0)$ and $r_0\notin DR(B_0)$. However, as $r_0$ is the $<''$-least element of $DR(S)$, there is no element in $DR(B_0)$ that is smaller than $r_0$. Therefore, Equation \ref{eq:reas_acyclic} is false and hence (R4) is true.
\end{proof}

Notice that this last result is consistent with Example \ref{eg:acyclic_illustrated} (page \pageref{eg:acyclic_illustrated}). It is still unknown whether $\precsim_{SP}$ is reasonable for general sets $S$ where $\es\neq S\subseteq_\text{fin}\alg$ although we conjecture that it should be given the consistency properties of PDL and that we have shown the representation theorem. This is work in progress.

\subsection{Summary}

In this section, we have reviewed the sufficient conditions that an ASPIC$^+$ instantiation needs to satisfy in order to be normatively rational, which formally means that the rationality postulates of \cite{Caminada:07} are true. The sufficient conditions are that the instantiation is well-defined, and the argument preference relation is reasonable. It is easy to show that the PDL instantiation is well-defined. We are currently working on showing how the structure preference order, $\precsim_{SP}$, is reasonable. We conjecture that it is reasonable, due to the representation theorem.

\section{Discussion and Conclusion}\label{sec:discussion_conclusions}

\noindent In this note we have endowed PDL \cite{Brewka:94} with argumentation semantics using ASPIC$^+$ \cite{sanjay:13}. We did this by instantiating ASPIC$^+$ to PDL (Section \ref{sec:ASPIC+_to_PDL}, page \pageref{sec:ASPIC+_to_PDL}), devising an ASPIC$^+$ preference order that imitates the procedural construction of extensions in PDL (Section \ref{sec:SP_order}, page \pageref{sec:SP_order}), and proving the conclusions of the justified arguments correspond exactly to the inferences in PDL (Theorem \ref{thm:rep_thm}, page \pageref{thm:rep_thm}) under this preference. As explained in Section \ref{sec:intro}, endowing PDL with argumentation semantics allows us to perform inferences in PDL dialectically, in the sense that inference in PDL can formally be viewed as a process of exchanging of arguments and counterarguments \cite{Sanjay:09}, until the ultimately justified arguments are found, the conclusions of which are exactly what PDL would conclude. This renders the process of inference in PDL more intuitive, and clarifies the reasons for accepting or rejecting a conclusion.

It is easy to see how Theorem \ref{thm:rep_thm} generalises the argumentation semantics of preferred subtheories \cite[page 381 Theorem 34]{sanjay:13}. Informally, a default theory is isomorphic to a PDT $\ang{D,\:\es,\:\prec}$, where $D$ consists of supernormal defaults and $\prec$ is consistent with how the sets of the default theory are ranked. Given a linearisation $\prec^+\:\supseteq\:\prec$, the corresponding preferred subtheory $\Sigma$ is related to the prioritised default extension by $E=Th\pair{\Sigma}$. The set of arguments with premises from $\Sigma$ is graph-isomorphic to $Args\pair{f\pair{NBD(E)}}$, both of which form a stable extension. Similarly, given the stable extension $\ext$, $Conc\pair{\ext}$ is the prioritised default extension by Theorem \ref{thm:rep_thm}, and the conclusions of the defeasible rules that feature in $\ext$ make up the corresponding preferred subtheory. In future work we will articulate this idea more formally.

There are several issues with the approach we have taken. Firstly, it seems that we have lost generality because we have restricted attention to LPDTs (Section \ref{sec:inst}, page \pageref{sec:inst}). We claim that this does not lose generality because extensions in PDL always presuppose a linearisation $\prec^+$ of $\prec$ \cite{Brewka:94}, and we have shown that for \emph{any} such linearisation the correspondence between PDL and its argumentation semantics is exact. 

Secondly, we have not yet shown that the argument preference relation used, $\precsim_{SP}$, is reasonable, so we have not guaranteed normative rationality from the point of view of ASPIC$^+$. This is work in progress.

The importance of proving that $\precsim_{SP}$ is reasonable is that one can use ASPIC$^+$ to generalise PDL by abstracting the concepts developed in this note to other situations, not necessarily motivated by PDL. For example, if $\precsim_{SP}$ is reasonable, then it can be used in a wider range of contexts. Further, ASPIC$^+$ can identify argumentation-based inferences assuming only a partial ordering, unlike in PDL. How can multiple partial orderings be related to multiple stable extensions of PDL without explicitly linearising? Also, do the other Dung semantics\footnote{When the extension is unique, the distinction between the different Dung semantic types is lost.} become relevant? All of this suggests that our argumentative characterisation can be used to generalise PDL, yet if we lift the requirement to linearise, we can no longer guarantee normative rationality, because one can show that the disjoint elitist order is not transitive when the underlying set $\ang{\relsymb_d,\:<''}$ is a poset instead of a toset\footnote{See Appendix \ref{app:disj_eli_props}, Lemma \ref{lem:disj_eli_not_transitive}, page \pageref{lem:disj_eli_not_transitive}.}. Future work will consider how to generalise the requirement that the defeasible rules are totally ordered, how in this case one can obtain all PDL extensions via argumentation, and the significance of other types of Dung semantics.

Lastly, the argumentative characterisation of PDL provides for distributed reasoning in the course of deliberation and persuasion dialogues. For example, BOID agents with PDL representations of mental attitudes can now exchange arguments and counterarguments when deliberating about which goals to select, and consequently which actions to pursue. Future work can investigate the precise advantages the argumentation semantics in PDL have over more traditional approaches in such situations.

\bibliographystyle{abbrv}
\bibliography{APY_PhD_Library}

\appendix

\section{Properties of the Disjoint Elitist Order}\label{app:disj_eli_props}

In this section we prove several statements made in Section \ref{sec:disj_eli} (page \pageref{sec:disj_eli}).

\begin{thm}
If $\ang{P,\:<}$ is a strict toset, then $\ang{\powfin(P),\:\orddeneq}$ is also a strict toset.
\end{thm}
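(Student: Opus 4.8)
The plan is to show that $\orddeneq$ on $\powfin(P)$ is irreflexive, transitive, and total; together with the already-noted fact that $\orddeneq'$ (from the strict elitist set comparison) is irreflexive, we get that $\ang{\powfin(P),\:\orddeneq}$ is a strict toset. Note first that $\orddeneq$ differs from $\ordeneq'$ only in that the quantifiers range over the \emph{disjoint} parts $\Gamma-\Gamma'$ and $\Gamma'-\Gamma$ rather than all of $\Gamma,\:\Gamma'$; this means that whenever we compare two \emph{distinct} finite sets $\Gamma\neq\Gamma'$, at least one of $\Gamma-\Gamma'$, $\Gamma'-\Gamma$ is nonempty, and in fact (since membership in the union $\Gamma\cup\Gamma'$ splits into three disjoint blocks $\Gamma-\Gamma'$, $\Gamma'-\Gamma$, $\Gamma\cap\Gamma'$) \emph{both} are nonempty unless one set contains the other.

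\textbf{Irreflexivity.} If $\Gamma\orddeneq\Gamma$ then $\pair{\exists x\in\Gamma-\Gamma}\pair{\forall y\in\Gamma-\Gamma}x<y$, but $\Gamma-\Gamma=\es$, so the outer existential is over the empty set and fails; hence $\orddeneq$ is irreflexive.

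\textbf{Totality.} Take distinct $\Gamma,\:\Gamma'\in\powfin(P)$. Three subcases. (i) If $\Gamma\subsetneq\Gamma'$, then $\Gamma-\Gamma'=\es$, so $\Gamma'-\Gamma\neq\es$ and $\pair{\forall y\in\Gamma-\Gamma'}(\cdots)$ is vacuously true; picking any $x\in\Gamma'-\Gamma$ gives $\Gamma'\orddeneq\Gamma$. (Symmetrically if $\Gamma'\subsetneq\Gamma$.) (ii) If neither contains the other, both $\Gamma-\Gamma'$ and $\Gamma'-\Gamma$ are nonempty finite subsets of $P$; since $<$ is a strict total order on $P$ (and these sets are finite), each has a $<$-least element, say $x_0\in\Gamma-\Gamma'$ and $y_0\in\Gamma'-\Gamma$. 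By totality of $<$ on $P$, either $x_0<y_0$ or $y_0<x_0$ (they are distinct since the blocks are disjoint); in the first case $x_0<y$ for all $y\in\Gamma'-\Gamma$ so $\Gamma\orddeneq\Gamma'$, in the second $\Gamma'\orddeneq\Gamma$. Hence any two distinct sets are comparable.

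\textbf{Transitivity.} Suppose $\Gamma\orddeneq\Gamma'$ and $\Gamma'\orddeneq\Gamma''$; we want $\Gamma\orddeneq\Gamma''$ (we may assume $\Gamma,\:\Gamma',\:\Gamma''$ pairwise distinct, else use irreflexivity/substitution). Let $x_0\in\Gamma-\Gamma'$ witness the first relation, so $\pair{\forall y\in\Gamma'-\Gamma}x_0<y$, and $y_0\in\Gamma'-\Gamma''$ witness the second, so $\pair{\forall z\in\Gamma''-\Gamma'}y_0<z$. The task is to produce a witness in $\Gamma-\Gamma''$ dominating all of $\Gamma''-\Gamma$. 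The main obstacle — and the one step needing care — is that $x_0$ need not lie in $\Gamma-\Gamma''$ (it might be in $\Gamma''$), and elements of $\Gamma''-\Gamma$ need not lie in $\Gamma''-\Gamma'$ (they might be in $\Gamma'$, or in $\Gamma\cap\Gamma'$... no, they're outside $\Gamma$). The clean way is: let $z\in\Gamma''-\Gamma$ be arbitrary; we must find \emph{some} fixed $x\in\Gamma-\Gamma''$ with $x<z$ for all such $z$. Case-split on where $z$ sits relative to $\Gamma'$. If $z\in\Gamma''-\Gamma'$ then $y_0<z$; if $z\in\Gamma'$ then $z\in\Gamma'-\Gamma$ (as $z\notin\Gamma$), so $x_0<z$. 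So every $z\in\Gamma''-\Gamma$ is $>$ than $x_0$ or $>$ than $y_0$; since $<$ is total on $P$, $x_0$ and $y_0$ are $<$-comparable, and one checks that in fact $x_0<y$ for $y\in\Gamma'-\Gamma$ forces (by taking $y=y_0$, valid since $y_0\in\Gamma'-\Gamma''\subseteq$ ... careful: need $y_0\notin\Gamma$, which holds iff $y_0\in\Gamma'-\Gamma$) $x_0<y_0$, hence $x_0$ is below every $z\in\Gamma''-\Gamma$. It remains only to check $x_0\in\Gamma-\Gamma''$: $x_0\in\Gamma$ by choice; and if $x_0\in\Gamma''$ then, applying the previous line with $z=x_0$, we'd get $x_0<x_0$ — contradiction. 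So $x_0\in\Gamma-\Gamma''$ and witnesses $\Gamma\orddeneq\Gamma''$. I expect this transitivity argument — in particular pinning down that the first witness $x_0$ survives into $\Gamma-\Gamma''$ and dominates $\Gamma''-\Gamma$, using the finiteness (to pass to least elements where needed) and totality of $<$ on $P$ crucially — to be the delicate part; the contrast with the poset case (Lemma \ref{lem:disj_eli_not_transitive}) is exactly that this step breaks without totality of $<$.

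Finally, irreflexivity $+$ transitivity gives that $\orddeneq$ is a strict partial order, and adding totality upgrades $\ang{\powfin(P),\:\orddeneq}$ to a strict toset, as required.
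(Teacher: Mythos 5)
Your irreflexivity and totality arguments are correct and essentially match the paper's (the paper streamlines totality by taking the $<$-least element of the symmetric difference $\Gamma\ominus\Gamma'$ directly, but your two-case version is equivalent). The transitivity argument, however, has a genuine gap, and it sits exactly at the step you flag as delicate: you claim the witness $x_0\in\Gamma-\Gamma'$ of $\Gamma\orddeneq\Gamma'$ always survives as a witness of $\Gamma\orddeneq\Gamma''$. To get there you need $x_0<y_0$, which you try to extract from $\pair{\forall y\in\Gamma'-\Gamma}x_0<y$ by instantiating $y=y_0$; but $y_0$ is only known to lie in $\Gamma'-\Gamma''$, and it may lie in $\Gamma\cap\Gamma'$, in which case $y_0\notin\Gamma'-\Gamma$ and nothing forces $x_0<y_0$. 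Concretely, take $P=\set{1,2,3,4}$ with the usual order, $\Gamma=\set{1,3}$, $\Gamma'=\set{1,4}$, $\Gamma''=\set{2}$. Then $\Gamma\orddeneq\Gamma'$ with unique witness $x_0=3$ (as $3<4$), and $\Gamma'\orddeneq\Gamma''$ with witness $y_0=1\in\Gamma\cap\Gamma'$; but $2\in\Gamma''-\Gamma$ and $3\not<2$, so $x_0$ does not dominate $\Gamma''-\Gamma$ --- the relation $\Gamma\orddeneq\Gamma''$ still holds, but only via the different witness $1$. Your fallback check that $x_0\in\Gamma-\Gamma''$ ("if $x_0\in\Gamma''$ then $z=x_0$ gives $x_0<x_0$") is also invalid: since $x_0\in\Gamma$, we have $x_0\notin\Gamma''-\Gamma$ regardless of whether $x_0\in\Gamma''$, so $z=x_0$ is not a legal instantiation of your universal.

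The repair requires the case analysis the paper carries out: partition $\Gamma\cup\Gamma'\cup\Gamma''$ into its seven Venn regions and split on which region each witness falls in. In the problematic configuration above ($x_0$ in $\Gamma$ only, $y_0$ in $\Gamma\cap\Gamma'$ but outside $\Gamma''$), both $x_0$ and $y_0$ lie in $\Gamma-\Gamma''$, and one uses totality of $<$ to compare them and take whichever is smaller as the witness for $\Gamma\orddeneq\Gamma''$; a further branch (the paper's case where $x_0\in\Gamma\cap\Gamma''$ and $y_0\in\Gamma'$ only) yields a contradiction rather than a witness. A single fixed witness carried through from the first hypothesis cannot work, so the argument as written does not establish transitivity.
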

\begin{proof}
We prove that $\orddeneq$ is irreflexive, transitive and total over $\powfin(P)$.

Assume for contradiction $\Gamma\orddeneq\Gamma$, which is equivalent to, by Equation \ref{eq:disj_eli} (page \pageref{eq:disj_eli}), $\pair{\exists x\in\es}\pair{\forall y\in\es}\:x<y$, which is impossible because exists precedes for all. Therefore, $\orddeneq$ is irreflexive.

To show transitivity, let $n_1,\:\cdots,\:n_7\in\nat$, such that
\begin{align}
&\set{a_1,\:\cdots,\:a_{n_1}}\cup\set{b_1,\:\cdots,\:b_{n_2}}\cup\set{c_1,\:\cdots,\:c_{n_3}}\cup\set{d_1,\:\cdots,\:d_{n_4}}\nonumber\\
\cup&\set{e_1,\:\cdots,\:e_{n_5}}\cup\set{f_1,\:\cdots,\:f_{n_6}}\cup\set{g_1,\:\cdots,\:g_{n_7}}\subseteq P\:.
\end{align}
Each element of the sets are distinct. If $n_i=0$ then the corresponding set is empty. It is sufficient to consider finite subsets due to $\powfin(P)$. Let $\Gamma,\:\Gamma',\:\Gamma''$ be such that
\begin{align*}
\Gamma&=\set{a_1,\:\cdots,\:a_{n_1}}\cup\set{d_1,\:\cdots,\:d_{n_4}}\cup\set{f_1,\:\cdots,\:f_{n_6}}\cup\set{g_1,\:\cdots,\:g_{n_7}}\:,\\
\Gamma'&=\set{b_1,\:\cdots,\:b_{n_2}}\cup\set{d_1,\:\cdots,\:d_{n_4}}\cup\set{e_1,\:\cdots,\:e_{n_5}}\cup\set{g_1,\:\cdots,\:g_{n_7}}\text{ and}\\
\Gamma''&=\set{c_1,\:\cdots,\:c_{n_3}}\cup\set{e_1,\:\cdots,\:e_{n_5}}\cup\set{f_1,\:\cdots,\:f_{n_6}}\cup\set{g_1,\:\cdots,\:g_{n_7}}\:.
\end{align*}

\noindent We can picture this situation with the the Venn diagram in Figure \ref{figure:Venn}.

\begin{figure}[ht]
\begin{center}
\includegraphics[height=5.55cm,width=8cm]{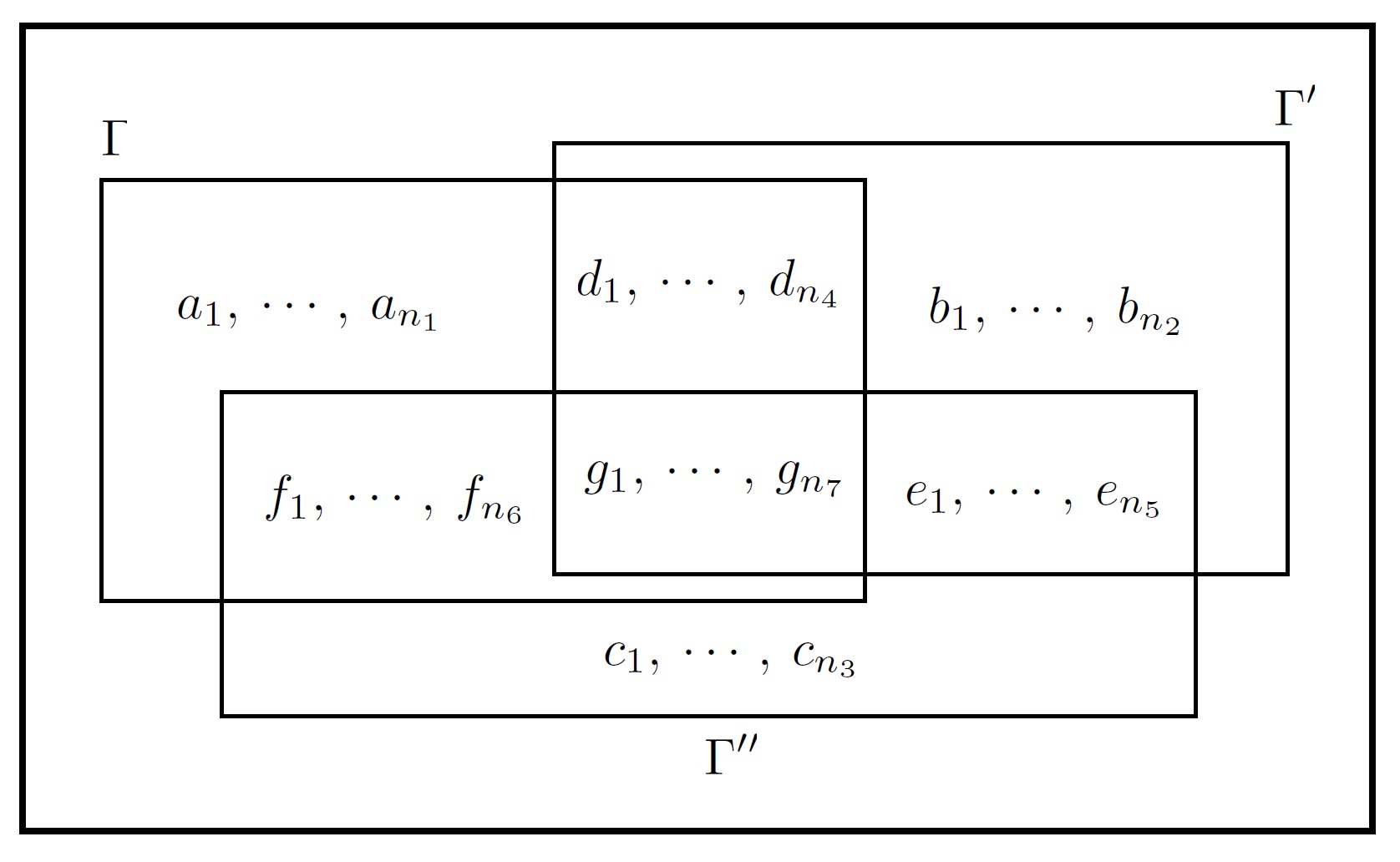}
\caption{The solid outer rectangle represents the set $P$, which may be an infinite set. The three finite sets $\Gamma,\:\Gamma',\:\Gamma''$ are the three rectangles within. Each overlapping region has exactly the elements indicated (and nothing more).}
\label{figure:Venn}
\end{center}
\end{figure}

Now suppose $<$ permits $\Gamma\orddeneq\Gamma'\orddeneq\Gamma''$, we write this out in terms of the elements (Equations \ref{eq:first_ass_trans} and \ref{eq:second_ass_trans}).
\begin{align}\label{eq:first_ass_trans}
\Gamma\orddeneq\Gamma'\Leftrightarrow&\pair{\exists x\in\Gamma-\Gamma'}\pair{\forall y\in\Gamma'-\Gamma}\:x<y\nonumber\\
\Leftrightarrow&\pair{\exists x\in\set{a_1,\:\cdots,\:a_{n_1}}\cup\set{f_1,\:\cdots,\:f_{n_6}}}\nonumber\\
&\pair{\forall y\in\set{b_1,\:\cdots,\:b_{n_2}}\cup\set{e_1,\:\cdots,\:e_{n_5}}}\:x<y\nonumber\\
\Leftrightarrow&\pair{\exists x\in\set{a_1,\:\cdots,\:a_{n_1}}\cup\set{f_1,\:\cdots,\:f_{n_6}}}\nonumber\\
&\sqbra{\pair{\bigwedge_{i=1}^{n_2}x<b_i}\wedge\pair{\bigwedge_{j=1}^{n_5}x<e_j}}\nonumber\\
\Leftrightarrow&\bigvee_{k=1}^{n_1}\sqbra{\pair{\bigwedge_{i=1}^{n_2}a_k<b_i}\wedge\pair{\bigwedge_{j=1}^{n_5}a_k<e_j}}\nonumber\\
\vee&\bigvee_{l=1}^{n_6}\sqbra{\pair{\bigwedge_{i=1}^{n_2}f_l<b_i}\wedge\pair{\bigwedge_{j=1}^{n_5}f_l<e_j}}\:.
\end{align}
\noindent Note that there are $(n_1+n_6)$ disjuncts in Equation \ref{eq:first_ass_trans}.

\begin{align}\label{eq:second_ass_trans}
\Gamma'\orddeneq\Gamma''\Leftrightarrow&\pair{\exists x\in\Gamma'-\Gamma''}\pair{\forall y\in\Gamma''-\Gamma'}\:x<y\nonumber\\
\Leftrightarrow&\pair{\exists x\in\set{b_1,\:\cdots,\:b_{n_2}}\cup\set{d_1,\:\cdots,\:d_{n_4}}}\nonumber\\
&\pair{\forall y\in\set{c_1,\:\cdots,\:c_{n_3}}\cup\set{f_1,\:\cdots,\:f_{n_6}}}\:x<y\nonumber\\
\Leftrightarrow&\pair{\exists x\in\set{b_1,\:\cdots,\:b_{n_2}}\cup\set{d_1,\:\cdots,\:d_{n_4}}}\nonumber\\
&\sqbra{\pair{\bigwedge_{i=1}^{n_3}x<c_i}\wedge\pair{\bigwedge_{j=1}^{n_6}x<f_j}}\nonumber\\
\Leftrightarrow&\bigvee_{k=1}^{n_2}\sqbra{\pair{\bigwedge_{i=1}^{n_3}b_k<c_i}\wedge\pair{\bigwedge_{j=1}^{n_6}b_k<f_j}}\nonumber\\
\vee&\bigvee_{l=1}^{n_4}\sqbra{\pair{\bigwedge_{i=1}^{n_3}d_l<c_i}\wedge\pair{\bigwedge_{j=1}^{n_6}d_l<f_j}}\:.
\end{align}
\noindent Note that there are $(n_2+n_4)$ disjuncts in Equation \ref{eq:first_ass_trans}.

We need to show that $\Gamma\orddeneq\Gamma''$, i.e.
\begin{align}\label{eq:target}
\Gamma\orddeneq\Gamma''\Leftrightarrow&\pair{\exists x\in\Gamma-\Gamma''}\pair{\forall y\in\Gamma''-\Gamma}\:x<y\nonumber\\
\Leftrightarrow&\pair{\exists x\in\set{a_1,\:\cdots,\:a_{n_1}}\cup\set{d_1,\:\cdots,\:d_{n_4}}}\nonumber\\
&\pair{\forall y\in\set{c_1,\:\cdots,\:c_{n_3}}\cup\set{e_1,\:\cdots,\:e_{n_5}}}\:x<y\nonumber\\
\Leftrightarrow&\pair{\exists x\in\set{a_1,\:\cdots,\:a_{n_1}}\cup\set{d_1,\:\cdots,\:d_{n_4}}}\nonumber\\
&\sqbra{\pair{\bigwedge_{i=1}^{n_3}x<c_i}\wedge\pair{\bigwedge_{j=1}^{n_5}x<e_j}}\nonumber\\
\Leftrightarrow&\bigvee_{k=1}^{n_1}\sqbra{\pair{\bigwedge_{i=1}^{n_3}a_k<c_i}\wedge\pair{\bigwedge_{j=1}^{n_5}a_k<e_j}}\nonumber\\
\vee&\bigvee_{l=1}^{n_4}\sqbra{\pair{\bigwedge_{i=1}^{n_3}d_l<c_i}\wedge\pair{\bigwedge_{j=1}^{n_5}d_l<e_j}}\:.
\end{align}
To prove Equation \ref{eq:target}, we need to show one of the disjuncts, i.e. for at least one of $1\leq k\leq n_1$ or $1\leq l\leq n_4$, we show either
\begin{align}\label{eq:answer_proof_trans}
\sqbra{\pair{\bigwedge_{i=1}^{n_3}a_k<c_i}\wedge\pair{\bigwedge_{j=1}^{n_5}a_k<e_j}}\text{ or }\sqbra{\pair{\bigwedge_{i=1}^{n_3}d_l<c_i}\wedge\pair{\bigwedge_{j=1}^{n_5}d_l<e_j}}
\end{align}
by establishing all of the conjuncts. Given $\Gamma\orddeneq\Gamma'\orddeneq\Gamma''$, we take the conjunction of Equations \ref{eq:first_ass_trans} and \ref{eq:second_ass_trans}, making $(n_1+n_6)(n_2+n_4)$ disjuncts, which is equivalent to the following expression:
\begin{align*}
&\set{\bigvee_{k=1}^{n_1}\sqbra{\pair{\bigwedge_{i=1}^{n_2}a_k<b_i}\wedge\pair{\bigwedge_{j=1}^{n_5}a_k<e_j}}\vee\bigvee_{l=1}^{n_6}\sqbra{\pair{\bigwedge_{i=1}^{n_2}f_l<b_i}\wedge\pair{\bigwedge_{j=1}^{n_5}f_l<e_j}}}\\
\wedge&\set{\bigvee_{k=1}^{n_2}\sqbra{\pair{\bigwedge_{i=1}^{n_3}b_k<c_i}\wedge\pair{\bigwedge_{j=1}^{n_6}b_k<f_j}}\vee\bigvee_{l=1}^{n_4}\sqbra{\pair{\bigwedge_{i=1}^{n_3}d_l<c_i}\wedge\pair{\bigwedge_{j=1}^{n_6}d_l<f_j}}}.
\end{align*}
As $\wedge$ and $\vee$ bi-distribute, we have four cases:
\begin{enumerate}
\item For some $1\leq k\leq n_1$ and $1\leq k'\leq n_2$, we have
\begin{align}
\pair{\bigwedge_{i=1}^{n_2}a_k<b_i}\wedge\pair{\bigwedge_{j=1}^{n_5}a_k<e_j}\wedge\pair{\bigwedge_{i'=1}^{n_3}b_{k'}<c_{i'}}\wedge\pair{\bigwedge_{j'=1}^{n_6}b_{k'}<f_{j'}}\:.
\end{align}
This means for some $1\leq k\leq n_1$, we have
\begin{align}\label{eq:1.1}
\pair{\bigwedge_{j=1}^{n_5}a_k<e_j}
\end{align}
and from\[\pair{\bigwedge_{i=1}^{n_2}a_k<b_i}\wedge\pair{\bigwedge_{i'=1}^{n_3}b_{k'}<c_{i'}}\:,\]that $1\leq k'\leq n_2$, and transitivity of $<$, we infer
\begin{align}\label{eq:1.2}
\pair{\bigwedge_{i=1}^{n_3}a_k<c_i}\:.
\end{align}
Equations \ref{eq:1.1} and \ref{eq:1.2} imply $\Gamma\orddeneq\Gamma''$.
\item For some $1\leq k\leq n_1$ and $1\leq l\leq n_4$, we have
\begin{align}
\pair{\bigwedge_{i=1}^{n_2}a_k<b_i}\wedge\pair{\bigwedge_{j=1}^{n_5}a_k<e_j}\wedge\pair{\bigwedge_{i'=1}^{n_3}d_l<c_{i'}}\wedge\pair{\bigwedge_{j'=1}^{n_6}d_l<f_{j'}}
\end{align}
This is the most subtle case of the four, and uses the fact that $<$ is total. The second the third bracketed conjuncts are necessary but not sufficient to lead to $\Gamma\orddeneq\Gamma''$. Let $k_0$ be the witness to $1\leq k\leq n_1$ and $l_0$ be the witness to $1\leq l_0\leq n_4$. As $<$ is total, either $a_{k_0}<d_{l_0}$ or $d_{l_0}<a_{k_0}$ (remember all elements are distinct).
\begin{itemize}
\item If $a_{k_0}<d_{l_0}$ then $a_{k_0}<c_i$ for all $1\leq i\leq n_3$. Therefore,\[\pair{\bigwedge_{i=1}^{n_3}a_{k_0}<c_i}\:.\]
\item If $d_{l_0}<a_{k_0}$ then $d_{l_0}<e_j$ for all $1\leq j\leq n_5$. Therefore,\[\pair{\bigwedge_{j=1}^{n_5}d_{l_0}<e_j}\:.\] 
\end{itemize}
In either case, $\Gamma\orddeneq\Gamma''$.
\item For some $1\leq l\leq n_6$ and $1\leq k\leq n_2$, we have
\begin{align}
\pair{\bigwedge_{i=1}^{n_2}f_l<b_i}\wedge\pair{\bigwedge_{j=1}^{n_5}f_l<e_j}\wedge\pair{\bigwedge_{i'=1}^{n_3}b_k<c_{i'}}\wedge\pair{\bigwedge_{j'=1}^{n_6}b_k<f_{j'}}
\end{align}
The irreflexivity of $<$ and the first and last bracketed conjuncts gives a contradiction when you run over all indices, so this case gives a contradiction.
\item For some $1\leq l\leq n_6$ and $1\leq l'\leq n_4$, we have
\begin{align}
\pair{\bigwedge_{i=1}^{n_2}f_l<b_i}\wedge\pair{\bigwedge_{j=1}^{n_5}f_l<e_j}\wedge\pair{\bigwedge_{i'=1}^{n_3}d_{l'}<c_{i'}}\wedge\pair{\bigwedge_{j'=1}^{n_6}d_{l'}<f_{j'}}
\end{align}
This case is similar to the first case - we use transitivity to combine the second and last bracketed conjuncts. This infers the second conjunct of Equation \ref{eq:answer_proof_trans}, which means $\Gamma\orddeneq\Gamma''$.
\end{enumerate}
Therefore, in all cases, $\Gamma\orddeneq\Gamma''$. This shows $\orddeneq$ is transitive when the underlying order $<$ is total.

Now let $\Gamma,\:\Gamma'\in\powfin(P)$ be arbitrary. To show trichotomy, we start by assuming $\Gamma\neq\Gamma'$ and show either $\Gamma\orddeneq\Gamma'$ or $\Gamma'\orddeneq\Gamma$. From Equation \ref{eq:disj_eli} (page \pageref{eq:disj_eli}), we consider the symmetric difference $\Gamma\ominus\Gamma'$. The set $\ang{\Gamma\ominus\Gamma',\:<}\:\subseteq\ang{P,\:<}$ is also a finite strict toset. This means there must exist a $<$-least element $x_0\in\Gamma\ominus\Gamma'$, say. There are two possibilities:
\begin{itemize}
\item If $x_0\in\Gamma-\Gamma'$, then $\Gamma\orddeneq\Gamma'$.
\item If $x_0\in\Gamma'-\Gamma$, then $\Gamma'\orddeneq\Gamma'$.
\end{itemize}
This establishes trichotomy - so $\ang{\powfin(P),\:\orddeneq}$ is a strict toset.
\end{proof}

\begin{lem}\label{lem:disj_eli_not_transitive}
If $\ang{P,\:<}$ is a strict poset, then $\orddeneq$ is not necessarily transitive over $\powfin(P)$.
\end{lem}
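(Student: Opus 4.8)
The plan is to refute transitivity by a small explicit counterexample: I will produce a strict poset $\ang{P,\:<}$ that fails to be a toset, together with three finite subsets $\Gamma,\:\Gamma',\:\Gamma''\in\powfin(P)$ for which $\Gamma\orddeneq\Gamma'$ and $\Gamma'\orddeneq\Gamma''$ both hold but $\Gamma\orddeneq\Gamma''$ fails. The design is dictated by the one case in the proof of the preceding theorem that genuinely used totality of $<$ (where the witness for $\Gamma\orddeneq\Gamma'$ and the witness for $\Gamma'\orddeneq\Gamma''$ had to be compared against one another); I will arrange a real incomparability exactly at that point so that no single element of $\Gamma-\Gamma''$ can dominate all of $\Gamma''-\Gamma$.

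Concretely, I would take $P=\set{a,\:c,\:d,\:e}$ with $<$ the strict partial order generated by the two inequalities $a<e$ and $d<c$ and nothing else. This is irreflexive, and since neither of these two inequalities composes with the other it is vacuously transitive, so $\ang{P,\:<}$ is a bona fide strict poset; it is not total, as $a$ and $c$ are incomparable and so are $d$ and $e$. I would then set $\Gamma:=\set{a,\:d}$, $\Gamma':=\set{d,\:e}$ and $\Gamma'':=\set{c,\:e}$.

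The verification reduces to three short symmetric-difference checks against Equation \ref{eq:disj_eli}. For $\Gamma\orddeneq\Gamma'$ one has $\Gamma-\Gamma'=\set{a}$ and $\Gamma'-\Gamma=\set{e}$, and the clause holds with witness $a$ because $a<e$. For $\Gamma'\orddeneq\Gamma''$ one has $\Gamma'-\Gamma''=\set{d}$ and $\Gamma''-\Gamma'=\set{c}$, and the clause holds with witness $d$ because $d<c$. For the failure of $\Gamma\orddeneq\Gamma''$ one computes $\Gamma-\Gamma''=\set{a,\:d}$ and $\Gamma''-\Gamma=\set{c,\:e}$, so a witness would have to be an element of $\set{a,\:d}$ lying below both $c$ and $e$; but $a\not<c$ and $d\not<e$ by construction, so no witness exists. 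Consequently, were $\orddeneq$ transitive, $\Gamma\orddeneq\Gamma'\orddeneq\Gamma''$ would force $\Gamma\orddeneq\Gamma''$, a contradiction; hence $\orddeneq$ need not be transitive over $\powfin(P)$ when $\ang{P,\:<}$ is merely a poset.

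I do not expect any real obstacle: the entire difficulty is in selecting the right configuration of overlapping pairs. The one point needing care is to ensure that the element $d$ shared by $\Gamma$ and $\Gamma'$ (the witness for the second step) is precisely the element incomparable to the ``new'' point $e$ that appears in $\Gamma''-\Gamma$, while the witness $a$ for the first step is incomparable to the other new point $c$; under any other assignment one of the two intended non-relations would be forced to hold by the chosen inequalities. It is worth remarking that this is fully consistent with the theorem proved just above (transitivity \emph{does} hold when $<$ is a toset), and that this failure of transitivity for posets is exactly why the main development restricts to linearised PDTs, as discussed in Section \ref{sec:discussion_conclusions}.
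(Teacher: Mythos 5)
Your proposal is correct and is essentially the paper's own proof: under the renaming $a\mapsto a_0$, $d\mapsto a_1$, $e\mapsto a_2$, $c\mapsto a_3$, your poset (generated by $a<e$ and $d<c$) and your three sets $\Gamma,\:\Gamma',\:\Gamma''$ coincide exactly with the paper's counterexample, and all three verification steps match. No gaps.
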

\begin{proof}
We provide the following counterexample: let $P=\set{a_0,\:a_1,\:a_2,\:a_3}$ such that $a_0<a_2$ and $a_1<a_3$ and nothing else. This means $a_0||a_3$ and $a_1||a_2$. This is a well-defined strict poset. Now let $\Gamma:=\set{a_0,\:a_1}$, $\Gamma':=\set{a_1,\:a_2}$ and $\Gamma'':=\set{a_2,\:a_3}$. We can illustrate this in Figure \ref{figure:disj_eli_not_trans}.

\begin{figure}[ht]
\begin{center}
\includegraphics[height=3cm,width=3.15cm]{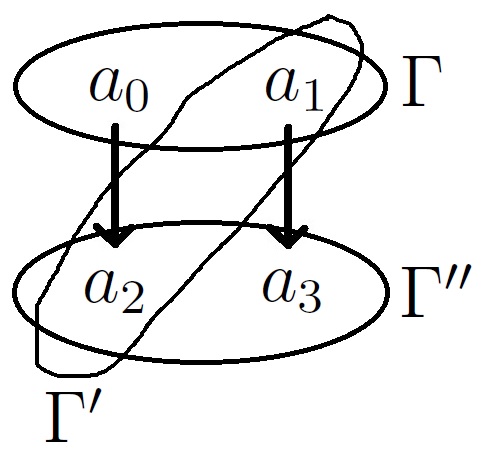}
\caption{The Hasse diagram for Lemma \ref{lem:disj_eli_not_transitive}. Note that in this document, our Hasse diagrams point the opposite direction, i.e. smaller elements are towards the top, and larger elements are towards the bottom.}
\label{figure:disj_eli_not_trans}
\end{center}
\end{figure}

Clearly, $\Gamma\orddeneq\Gamma'$ because $a_0<a_2$, and $\Gamma'\orddeneq\Gamma''$ because $a_1<a_3$. Now,
\begin{align*}
\Gamma\not\!\orddeneq\Gamma''\Leftrightarrow&\pair{\forall x\in\Gamma-\Gamma''}\pair{\exists y\in\Gamma''-\Gamma}x\not< y\\
\Leftrightarrow&\pair{\forall x\in\set{a_0,\:a_1}}\pair{\exists y\in\set{a_2,\:a_3}}\:x\not< y\\
\Leftrightarrow&\pair{\forall x\in\set{a_0,\:a_1}}\sqbra{x\not< a_2\text{ or }x\not< a_3}\\
\Leftrightarrow&\sqbra{a_0\not< a_2\text{ or }a_0\not< a_3}\text{ and }\sqbra{a_1\not< a_2\text{ or }a_1\not< a_3}\:.
\end{align*}
This is true, because $a_0\not< a_3$ and $a_1\not< a_2$ are both true. So this is a situation where $\Gamma\orddeneq\Gamma'\orddeneq\Gamma''$ and $\Gamma\not\!\orddeneq\Gamma''$. Therefore, $\orddeneq$ is not transitive.
\end{proof}

\begin{cor}
If $\ang{P,\:<}$ is a strict toset, then $\es$ is the $\orddeneq$-greatest element in $\ang{\powfin(P),\:\orddeneq}$.
\end{cor}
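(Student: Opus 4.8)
The plan is to simply unwind the definition of the strict disjoint elitist order (Equation \ref{eq:disj_eli}) in the special case where the second argument is $\es$, and exploit the fact that a universal quantifier over the empty set is vacuously true. Concretely, I would take an arbitrary $\Gamma\in\powfin(P)$ with $\Gamma\neq\es$ and compute $\Gamma-\es=\Gamma$ and $\es-\Gamma=\es$, so that
\begin{align*}
\Gamma\orddeneq\es\Leftrightarrow\pair{\exists x\in\Gamma}\pair{\forall y\in\es}\:x<y\:.
\end{align*}
The inner clause $\pair{\forall y\in\es}x<y$ is vacuously true, so the whole statement reduces to $\pair{\exists x\in\Gamma}\top$, i.e. to $\Gamma\neq\es$, which holds by assumption. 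Hence $\Gamma\orddeneq\es$ for every nonempty finite subset $\Gamma\subseteq P$, which is precisely the assertion that $\es$ is the $\orddeneq$-greatest element of $\powfin(P)$.

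A couple of remarks on how I would phrase it. First, although the corollary is stated for a strict toset $\ang{P,<}$ and sits right after the theorem establishing that $\ang{\powfin(P),\orddeneq}$ is then a strict toset, the greatest-element claim does not actually require totality of $<$: the vacuous-quantifier argument above goes through for any $<$. I would either invoke the preceding theorem to guarantee that ``$\orddeneq$-greatest'' is even meaningful as a strict total order, or just note in passing that totality is not needed for this particular step. Second, I would make explicit that ``greatest'' here is in the sense of \emph{most preferred} — consistent with the convention (Footnote on ``bigger item is more preferred'') and with the earlier Lemma showing $DR(A)\subseteq DR(B)\Rightarrow B\precsim A$, so that $\es$, the set of defeasible rules of a strict argument, is maximally preferred.

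There is essentially no obstacle: the only thing to be careful about is the direction of the order and the vacuous truth of the universal over $\es$ (the same subtlety that makes $\orddeneq$ irreflexive, as already observed in the irreflexivity proof of the preceding theorem, where $\pair{\exists x\in\es}\pair{\forall y\in\es}x<y$ fails precisely because $\exists$ over $\es$ is false). Here the quantifier structure works in our favour rather than against us, so the proof is a one-liner.
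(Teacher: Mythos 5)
Your proof is correct, but it is the mirror image of the paper's. The paper argues by contradiction that no $\Gamma\in\powfin(P)$ satisfies $\es\orddeneq\Gamma$: unwinding the definition gives $\pair{\exists x\in\es}\pair{\forall y\in\Gamma}x<y$, which fails because the existential ranges over the empty set. That establishes that $\es$ is \emph{maximal} (nothing lies strictly above it), and greatest-ness then follows only because the preceding theorem has shown $\orddeneq$ to be a strict total order on $\powfin(P)$. You instead show directly that every nonempty $\Gamma$ satisfies $\Gamma\orddeneq\es$, exploiting the vacuously true universal over $\es-\Gamma=\es$; this proves greatest-ness outright, without any appeal to totality of $<$ or of $\orddeneq$ — a point you correctly flag. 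So your version is marginally stronger and more self-contained, while the paper's is symmetric with its subsequent corollary about $P$ being the $\orddeneq$-least element (both hinge on a false existential over $\es$). Either argument is a sound one-liner; just be aware that if you state the result as ``greatest'' rather than ``maximal'', your direct route is the one that actually delivers it without leaning on the totality theorem.
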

\begin{proof}
Assume for contradiction that there is some $\Gamma\in\powfin(P)$, $\es\orddeneq\Gamma$, which by Equation \ref{eq:disj_eli} (page \pageref{eq:disj_eli}) is equivalent to $\pair{\exists x\in\es}\pair{\forall y\in\Gamma}x<y$, which is false due to ``$\pair{\exists x\in\es}$''.
\end{proof}

\begin{cor}
If $\ang{P,\:<}$ is a strict \textit{finite} toset, then $P$ is the $\orddeneq$-least element in $\ang{\powfin(P),\:\orddeneq}$.
\end{cor}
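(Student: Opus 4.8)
The plan is to mirror the proof of the preceding corollary (that $\es$ is the $\orddeneq$-greatest element of $\langle\powfin(P),\:\orddeneq\rangle$), exploiting the fact that the previous theorem already shows $\langle\powfin(P),\:\orddeneq\rangle$ is a strict toset when $\langle P,\:<\rangle$ is. Since $P$ is finite we have $P\in\powfin(P)$, so asking for the $\orddeneq$-least element is well-posed. By trichotomy it then suffices to show that no $\Gamma\in\powfin(P)$ is $\orddeneq$-strictly below $P$; combined with totality this forces $P$ to be $\orddeneq$-least.

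First I would fix an arbitrary $\Gamma\in\powfin(P)$ and expand $\Gamma\orddeneq P$ via Equation \ref{eq:disj_eli}: it asserts $\pair{\exists x\in\Gamma-P}\pair{\forall y\in P-\Gamma}\:x<y$. But every member of $\powfin(P)$ is a subset of $P$, so $\Gamma\subseteq P$ and hence $\Gamma-P=\es$. The leading existential quantifier thus ranges over the empty set, so $\Gamma\orddeneq P$ is false --- exactly as in the $\es$-greatest corollary, where the obstruction was likewise an existential quantifier over $\es$. Therefore $\Gamma\not\!\orddeneq P$ for every $\Gamma\in\powfin(P)$.

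Finally I would invoke trichotomy of $\orddeneq$ from the previous theorem: for any $\Gamma\in\powfin(P)$, either $\Gamma=P$, or $\Gamma\orddeneq P$, or $P\orddeneq\Gamma$; having ruled out the middle case, we get $P\orddeneq\Gamma$ whenever $\Gamma\neq P$, so $P$ is the $\orddeneq$-least element. There is essentially no obstacle here --- the only point worth flagging is that the finiteness hypothesis on $P$ is used \emph{only} to ensure $P\in\powfin(P)$, so that the claim is meaningful; the assertion that $P$ lies below every element is vacuous once one observes $P-\Gamma=\es$ for $\Gamma\subseteq P$. (If one prefers to avoid appealing to trichotomy, one can instead verify $P\orddeneq\Gamma$ directly for $\Gamma\neq P$: the witness $x$ to $\exists$ is any element of the nonempty set $P-\Gamma$, while the universal quantifier over $\Gamma-P=\es$ is satisfied vacuously.)
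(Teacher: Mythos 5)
Your proposal is correct and follows essentially the same route as the paper: expand $\Gamma\orddeneq P$ via Equation \ref{eq:disj_eli}, observe that $\Gamma\subseteq P$ forces $\Gamma-P=\es$, and conclude that the leading existential quantifier over the empty set makes $\Gamma\orddeneq P$ false. You are in fact slightly more careful than the paper, which stops at ``nothing is strictly below $P$'' and leaves the appeal to trichotomy (or your cleaner direct verification that $P\orddeneq\Gamma$ for $\Gamma\neq P$) implicit.
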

\begin{proof}
Assume for contradiction that there is some $\Gamma\in\powfin(P)$, $\Gamma\orddeneq P$. As $\Gamma\subseteq P$, we must have $\Gamma-P=\es$. By Equation \ref{eq:disj_eli}, $\Gamma\orddeneq P$ is equivalent to $\pair{\exists x\in\es}\pair{\forall y\in\Gamma}$, which is false due to ``$\pair{\exists x\in\es}$''.
\end{proof}

\section{Characterising Non-Blocked Defaults}\label{app:NBD_char}

In this section we prove Lemma \ref{lem:NBD_characterisation} (page \pageref{lem:NBD_characterisation}). We restate the lemma below for convenience.

\begin{lem}
Let $\ang{D,\:W,\:\prec^+}$ be a LPDT. If $\prec^+$ generates $E$, then we have that
\begin{align}\label{eq:lem_NBD_app}
NBD(\prec^+):=\set{\frac{\theta:\phi}{\phi}\in D\:\vline\:\theta\in E,\:\neg\phi\notin E}\:.
\end{align}
\end{lem}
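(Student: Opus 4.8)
The plan is to prove the set equality $GD(\prec^+)\cup SAD(\prec^+)=\set{\frac{\theta:\phi}{\phi}\in D\:\vline\:\theta\in E,\ \neg\phi\notin E}$ by the two inclusions, the crux being a single structural fact about the stable extension.

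The fact I would establish first is that \emph{no default is active in $E$}. By the excerpt, the chain $E_0\subseteq E_1\subseteq\cdots$ stabilises at some finite $N$, so $E=E_N$ and $E_N=E_{N+1}$. If some default were active in $E_N$, then, since $D$ is finite and $\prec^+$ is total, there would be a $\prec^+$-greatest active default $d^\ast=\frac{\theta^\ast:\phi^\ast}{\phi^\ast}$ in $E_N$; by Equation~\ref{eq:ext_ind} this gives $E_{N+1}=E_N+\phi^\ast$, and since an active default has $\phi^\ast\notin E_N$ we get $E_{N+1}\supsetneq E_N$, contradicting $E_N=E_{N+1}$. Hence for every $\frac{\theta:\phi}{\phi}\in D$ at least one of $\theta\notin E$, $\phi\in E$, $\neg\phi\in E$ holds. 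I will also use that $E$ is consistent, which holds because $W$ is.

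With this fact the two inclusions are short. For $\supseteq$: if $d=\frac{\theta:\phi}{\phi}$ satisfies $\theta\in E$ and $\neg\phi\notin E$, the fact forces $\phi\in E$, so $d$ is semi-active in $E$ and therefore lies in $SAD(\prec^+)\subseteq NBD(\prec^+)$. For $\subseteq$: let $d=\frac{\theta:\phi}{\phi}\in GD(\prec^+)\cup SAD(\prec^+)$; if $d\in SAD(\prec^+)$ then semi-activeness in $E$ yields $\theta\in E$ and $\neg\phi\notin E$ at once, while if $d\in GD(\prec^+)$ then $d$ is $\prec^+$-greatest active in some $E_i$, so $\theta\in E_i\subseteq E$ and, by Equation~\ref{eq:ext_ind}, $\phi\in E_{i+1}\subseteq E$, whence $\neg\phi\notin E$ by consistency of $E$; in either case $d$ belongs to the right-hand set. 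The only non-routine step is the structural fact in the previous paragraph, which is exactly where finiteness of $D$ (to have a $\prec^+$-greatest active default) and consistency of $W$ are used; everything else is an unfolding of the definitions of $GD(\prec^+)$, $SAD(\prec^+)$ and semi-activeness, and as a by-product one sees $GD(\prec^+)\subseteq SAD(\prec^+)$, so in fact $NBD(\prec^+)=SAD(\prec^+)$.
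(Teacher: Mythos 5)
Your proof is correct, and it takes a genuinely different and cleaner route than the paper's. The paper proves both inclusions by index bookkeeping: for $GD(\prec^+)$ it assumes $\neg\phi\in E_{i_0}$ for contradiction and splits on the three cases $i_0=j_0$, $i_0<j_0$, $i_0>j_0$ relative to the stage $j_0$ at which $d$ is applied, and for the converse inclusion it runs a long chain of implications that case-splits on whether $\phi\in E_{i_0}$, landing in $SAD$ or $GD_{i_0}$ accordingly. You instead isolate a single fixed-point fact --- no default is active in $E$, because the chain stabilises and an active default at the stable stage would make it grow --- which immediately forces $\phi\in E$ in the $\supseteq$ direction, so every default in the right-hand set is already semi-active in $E$; and in the $\subseteq$ direction you replace the paper's three-way split by the observation that $d\in GD_i$ puts $\phi$ into $E_{i+1}\subseteq E$, whence $\neg\phi\notin E$ by consistency. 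Both arguments rest on the same external inputs (stabilisation of the chain, finiteness of $D$ together with totality of $\prec^+$, consistency of $E$), but yours packages them into one reusable lemma and avoids the index chasing. One caveat: your by-product $GD(\prec^+)\subseteq SAD(\prec^+)$, hence $NBD(\prec^+)=SAD(\prec^+)$, depends on reading ``semi-active w.r.t.\ $\prec^+$'' as ``semi-active in $E$''; that is the reading the paper's own proof uses (it passes from conditions on $S$ to conditions on $E$ with $S=E$), so the observation is sound, though it does expose that the union with $GD(\prec^+)$ in the definition of $NBD$ is then redundant --- a remark about the paper's definitions rather than a defect of your argument.
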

\begin{proof}
It is sufficient to show Equation \ref{eq:NBD} (with Equations \ref{eq:GD} and \ref{eq:SAD}) is the same as the right hand side of Equation \ref{eq:lem_NBD}. Let $\prec^+$ generate the extension $E$ and, for convenience, we suppress the argument ``$\prec^+$'' in the sets for this proof.

($\Rightarrow$) Case 1: Assume $d\in SAD$, then $Ante(d)\subseteq S,\:\neg Conc(d)\notin S$ and $Conc(d)\in S$. This implies $Ante(d)\subseteq S$ and $\neg Conc(d)\notin S$. Therefore,
\begin{align}\label{eq:NBD_proof_inter}
d\in\set{d'\in D\:\vline\:Ante(d')\subseteq E,\:\neg Conc(d')\notin E}\:,
\end{align}
\noindent and hence
\begin{align}\label{eq:SAD_subset_alternative_form}
SAD\subseteq\set{d\in D\:\vline\:Ante(d)\subseteq E,\:\neg Conc(d)\notin E}\:.
\end{align}

Case 2: Now assume $d\in GD$, which means
\begin{align}\label{eq:intermediate_d_is_active}
\Leftrightarrow&\pair{\exists i\in\nat}d\in GD_i\nonumber\\
\Leftrightarrow&\pair{\exists i\in\nat}\sqbra{Ante(d)\subseteq E_i,\: Conc(d)\notin E_i,\:\neg Conc(d)\notin E_i}\nonumber\\
\Leftrightarrow& Ante(d)\subseteq E_{j_0},\:Conc(d)\notin E_{j_0},\:\neg Conc(d)\notin E_{j_0}\text{ $j_0$ witness to $i$},\\
\Rightarrow& Ante(d)\subseteq E_{j_0},\:\neg Conc(d)\notin E_{j_0}\nonumber\\
\Rightarrow& Ante(d)\subseteq E,\:\neg Conc(d)\notin E_{j_0}\:.\nonumber
\end{align}

\noindent Clearly, this means $Ante(d)\subseteq E$.

Now assume for contradiction that $\neg Conc(d)\in E$, which means there is some $i_0\in\nat$ such that $\neg Conc(d)\in E_{i_0}$.

What is the relationship between $i_0$ and $j_0$? As both are natural numbers, there are three possibilities:
\begin{itemize}
\item $i_0=j_0$: This is impossible as else we will have $\neg Conc(d)\notin E_{i_0}$ and $\neg Conc(d)\in E_{i_0}$.
\item $i_0<j_0$: We have $\neg Conc(d)\notin E_{j_0}$ and $\neg Conc(d)\in E_{i_0}$, which is also impossible because the $E_i$'s form an ascending chain, so $E_{i_0}\subseteq E_{j_0}$. Therefore, we get $\neg Conc(d)\in E_{j_0}$ and $\neg Conc(d)\notin E_{j_0}$.
\item $i_0>j_0$: We have $\neg Conc(d)\notin E_{j_0}$ and $\neg Conc(d)\in E_{i_0}$. From Equation \ref{eq:intermediate_d_is_active}, we have that $d$ is active in $E_{j_0}$, hence $Conc(d)\in E_{j_0+1}\subseteq E_{i_0}$, which makes $\neg Conc(d)\in E_{i_0}$ impossible because the $E_i$'s are consistent.
\end{itemize}

\noindent Therefore, $\neg Conc(d)\notin E$.

So we have $Ante(d)\subseteq E$ and $\neg Conc(d)\notin E$. Therefore, Equation \ref{eq:NBD_proof_inter} is true for this case and we have
\begin{align}\label{eq:GD_subset_alternative_form}
GD\subseteq\set{d\in D\:\vline\:Ante(d)\subseteq E,\:\neg Conc(d)\notin E}\:.
\end{align}

\noindent We can take the union of Equations \ref{eq:GD_subset_alternative_form} and \ref{eq:SAD_subset_alternative_form} to get
\begin{align}\label{eq:half_the_alternative_form1}
GD\cup SAD\subseteq\set{d\in D\:\vline\:Ante(d)\subseteq E,\:\neg Conc(d)\notin E}\:.
\end{align}

($\Leftarrow$) Assume $d\in\set{d'\in D\:\vline\:Ante(d')\subseteq E,\:\neg Conc(d')\notin E}$, which means $Ante(d)\subseteq E$ and $\neg Conc(d)\notin E$. We have, for some $i_0\in\nat$,

\begin{align*}
\Leftrightarrow& Ante(d)\subseteq E_{i_0},\:\pair{\forall j\in\nat}\neg Conc(d)\notin E_j\\
\Leftrightarrow& Ante(d)\subseteq E_{i_0},\:\neg Conc(d)\notin E_{i_0},\:\pair{\forall j\in\nat-\set{i_0}}\neg Conc(d)\notin E_j\\
\Leftrightarrow&\pair{\forall j\in\nat-\set{i_0}}\neg Conc(d)\notin E_j\text{ and }\\
&[\pair{Ante(d)\subseteq E_{i_0},\:\neg Conc(d)\notin E_{i_0},\:Conc(d)\in E_{i_0}}\text{ or }\\
&\pair{Ante(d)\subseteq E_{i_0},\:\neg Conc(d)\notin E_{i_0},\:Conc(d)\notin E_{i_0}}]\\
\Leftrightarrow&\pair{\forall j\in\nat-\set{i_0}}\neg Conc(d)\notin E_j\text{ and }\\
&[\pair{Ante(d)\subseteq E_{i_0},\:\neg Conc(d)\notin E_{i_0},\:Conc(d)\in E_{i_0}}\text{ or }d\in GD_{i_0}\\
\Rightarrow&\pair{\forall j\in\nat-\set{i_0}}\neg Conc(d)\notin E_j\text{ and }\\
&[\pair{Ante(d)\subseteq E_{i_0},\:\neg Conc(d)\notin E_{i_0},\:Conc(d)\in E_{i_0}}\text{ or }d\in GD\\
\Rightarrow& d\in GD\text{ or }[Ante(d)\subseteq E_{i_0},\:\neg Conc(d)\notin E_{i_0},\:Conc(d)\in E_{i_0}\text{ and }\\
&\pair{\forall j\in\nat-\set{i_0}}\neg Conc(d)\notin E_j]\\
\Rightarrow& d\in GD\text{ or }\sqbra{Ante(d)\subseteq E\text{ and }\pair{\forall j\in\nat}\neg Conc(d)\notin E_j}\\
\Rightarrow& d\in GD\text{ or }\sqbra{Ante(d)\subseteq E\text{ and }\neg Conc(d)\notin E}\:,\\
\Leftrightarrow& d\in GD\cup SAD\:.
\end{align*}
\noindent Therefore, we have
\begin{align}\label{eq:half_the_alternative_form2}
\set{d\in D\:\vline\:Ante(d)\subseteq E,\:\neg Conc(d)\notin E}\subseteq GD\cup SAD\:.
\end{align}

\noindent The result follows from Equations \ref{eq:half_the_alternative_form1} and \ref{eq:half_the_alternative_form2}.
\end{proof}

\end{document}